\definecolor{bgcolor}{rgb}{0.93,0.99,1}
\definecolor{bgcolor2}{rgb}{0.8,1,0.8}
\definecolor{bgcolor3}{rgb}{0.50,0.90,0.50}
\definecolor{mydarkgreen}{RGB}{39,130,67}
\definecolor{mydarkred}{RGB}{192,25,25}
\providecommand{\lin}[1]{\ensuremath{\left\langle #1 \right\rangle}}
\providecommand{\norm}[1]{\left\lVert#1\right\rVert}
\providecommand{\dotprod}[1]{\left< #1\right>}
  \providecommand{\R}{\mathbb{R}} 
  \DeclareMathOperator{\E}{{\mathbb E}}
  \providecommand{\E}[1]{{\mathbb E}\left.#1\right. }     
  \providecommand{\Eb}[1]{\E \left[#1\right] }             
  \providecommand{\EEb}[2]{\E_{#1}\!\! \left[#2\right] } 
  \DeclareMathOperator*{\argmin}{arg\,min}
  \DeclareMathOperator*{\supp}{supp}
  \renewcommand{\aa}{\mathbf{a}}
  \providecommand{\bb}{\mathbf{b}}
  \renewcommand{\gg}{\mathbf{g}}
  \providecommand{\vv}{\mathbf{v}}
  \providecommand{\xx}{\mathbf{x}}
  \providecommand{\yy}{\mathbf{y}}
  \providecommand{\cB}{\mathcal{B}}
  \providecommand{\cD}{\mathcal{D}}
  \providecommand{\cO}{\mathcal{O}}
\providecommand{\mycomment}[3]{\todo[inline,caption={},color=#3!20]{\textbf{#1: }#2}}%
\providecommand{\inlinecomment}[3]{%
  {\color{#1}#2: #3}}%
\newcommand\commenter[2]%
\newcommand\csname i#1\endcsname[1]{\inlinecomment{#2}{#1}{##1}}
\newcommand\csname #1\endcsname[1]{\mycomment{#1}{##1}{#2}}
\newtheorem{proposition}{Proposition}
\newtheorem{lemma}{Lemma}
\newtheorem{corollary}[lemma]{Corollary}
\newtheorem{definition}{Definition}
\newtheorem{remark}[lemma]{Remark}
\newtheorem{assumption}{Assumption}
\newtheorem{theorem}[lemma]{Theorem}
\newtheorem{example}[lemma]{Example}
\definecolor{mydarkgreen}{rgb}{0.05,0.6,0.15}
\definecolor{mydarkblue}{rgb}{0,0.08,0.45}
\renewcommand{\citet}[1]{\cite{#1}}
\title{Locally Adaptive Federated Learning}
\author{
  Sohom Mukherjee\\
  Saarland University and CISPA\thanks{CISPA Helmholtz Center for Information Security}\\
  Saarbr{\"u}cken, Germany\\
  \texttt{somu00003@stud.uni-saarland.de} \\
  \And
  Nicolas Loizou \\
  Johns Hopkins University \\
  Baltimore, USA \\
  \texttt{nloizou@jhu.edu} \\
  \And
  Sebastian U.~Stich\\
  CISPA\footnotemark[1] \\
  Saarbr{\"u}cken, Germany \\
  \texttt{stich@cispa.de}
}
\begin{document}

\maketitle

\begin{abstract}
Federated learning is a paradigm of distributed machine learning in which multiple clients coordinate with a central server to learn a model, without sharing their own training data. Standard federated optimization methods such as Federated Averaging (FedAvg) ensure balance among the clients by using the same stepsize for local updates on all clients. However, this means that all clients need to respect the global geometry of the function which could yield slow convergence. In this work, we propose locally adaptive federated learning algorithms, that leverage the local geometric information for each client function. We show that such locally adaptive methods with uncoordinated stepsizes across all clients can be particularly efficient in interpolated (overparameterized) settings, and analyze their convergence in the presence of heterogeneous data for convex and strongly convex settings. We validate our theoretical claims by performing illustrative experiments for both i.i.d.\ non-i.i.d.\ cases. Our proposed algorithms match the optimization performance of tuned FedAvg in the convex setting, outperform FedAvg as well as state-of-the-art adaptive federated algorithms like FedAMS for non-convex experiments, and come with superior generalization performance.
\end{abstract}

\section{Introduction}\label{intro}
Federated Learning (FL) \citep{kairouz2021advances} has become popular as a collaborative learning paradigm where multiple clients jointly train a machine learning model without sharing their local data. Despite the recent success of FL, state-of-the-art federated optimization methods like FedAvg \citep{mcmahan2017communication} still face various challenges in practical scenarios such as not being able to adapt according to the training dynamics---FedAvg using vanilla SGD updates with constant stepsizes
maybe unsuitable for heavy-tail stochastic gradient noise distributions, arising frequently in training large-scale models such as ViT \citep{dosovitskiy2021an}. Such settings benefit from adaptive stepsizes, which use some optimization statistics (e.g., loss history, gradient norm).

In the centralized setting, adaptive methods such as Adam \citep{kingma2014adam} and AdaGrad \citep{duchi2011adaptive} have succeeded in obtaining superior empirical performance over SGD for various machine learning tasks. However, extending adaptive methods to the federated setting remains a challenging task, and majority of the recently proposed adaptive federated methods such as FedAdam \citep{reddi2020adaptive} and FedAMS \citep{wang2022communication} consider only server-side adaptivity, i.e., essentially adaptivity only in the aggregation step. Some methods like  Local-AMSGrad \citep{chen2020toward} and Local-AdaAlter \citep{xie2019local} do consider local (client-side) adaptivity, but they perform some form of stepsize aggregation in the communication round, thereby using the same stepsize on all clients. 

Using the same stepsize for all clients, that needs to respect the geometry of the global function, can yield sub-optimal convergence. To harness the full power of adaptivity for federated optimization, we argue that it makes sense to use fully locally adaptive stepsizes on each client to capture the local geometric information of each objective function \citep{wang2021local}, thereby leading to faster convergence. However, such a change is non trivial, as federated optimization with uncoordinated  stepsizes on different clients might not convergence. The analysis of locally adaptive methods necessitates developing new proof techniques extending the existing error-feedback framework \citep{stich2018local} for federated optimization algorithms (that originally works for equal stepsizes) to work for fully un-coordinated local (client) stepsizes. In this work, we provide affirmative answers to the following open questions:
\begin{center}
    \textit{(a) Can local adaptivity for federated optimization be useful (faster convergence)? (b) Can we design such a locally adaptive federated optimization algorithm that provably converges?}
\end{center}

To answer (a), we shall see a concrete case in Example \ref{example:local_adaptivity} along with an illustration in Figure \ref{fig:local_adaptivity_demo}, showing locally adaptive stepsizes can substantially speed up convergence. For designing a fully locally adaptive method for federated optimization, we need an adaptive stepsize that would be optimal for each client function. Inspired by the Polyak stepsize \citep{polyak1987introduction} which is designed for gradient descent on convex functions, \cite{loizou2021stochastic} recently proposed the stochastic Polyak step-size (SPS) for SGD. SPS comes with strong convergence guarantees, and needs less tuning compared to other adaptive methods like Adam and AdaGrad. We propose the FedSPS algorithm by incorporating the SPS stepsize in the local client updates. We obtain exact convergence of our locally adaptive FedSPS when interpolation condition is satisfied (overparameterized case common in deep learning problems), and convergence to a neighbourhood for the general case. Reminiscing the fact that \cite{li2019convergence} showed FedAvg needs decaying stepsizes to converge under heterogeneity, we extend our method to a decreasing stepsize version FedDecSPS (following ideas from DecSPS~\citep{orvieto2022dynamics}), that provides exact convergence in practice, for the general non-interpolating setting without the aforementioned small stepsize assumption. Finally, we experimentally observe that the optimization performance of FedSPS is always on par or better than that of tuned FedAvg and FedAMS, and FedDecSPS is particularly efficient in non-interpolating settings.

\textbf{Contributions.} We summarize our contributions as follows: \vspace{-1mm}
\begin{itemize}
    \item We show that local adaptivity can lead to substantially faster convergence. We design the first \emph{fully locally adaptive} method for federated learning called FedSPS, and prove sublinear and linear convergence to the optimum, for convex and strongly convex cases, respectively, under interpolation (Theorem \ref{thm:fedsps_local_convex}). This is in contrast to existing adaptive federated methods such as FedAdam  and FedAMS, both of which employ adaptivity only for server aggregation. 
    \item For real-world FL scenarios (such as when the interpolation condition is not satisfied due to client heterogeneity), we propose a practically motivated algorithm FedDecSPS that enjoys local adaptivity and exact convergence also in the non-interpolating regime due to decreasing stepsizes.
    \item We empirically verify our theoretical claims by performing relevant illustrative experiments to show that our method requires less tuning compared to state-of-the-art algorithms which need extensive grid search. We also obtain competitive performance (both optimization as well as generalization) of the proposed FedSPS and FedDecSPS compared to tuned FedAvg and FedAMS for the convex and non-convex cases in i.i.d. as well as non-i.i.d. settings.
\end{itemize}

\subsection{Additional related Work}
\textbf{Adaptive gradient methods and SPS.}
Recently, adaptive stepsize methods that use some optimization statistics have become popular for deep learning applications. Such methods, including Adam \citep{kingma2014adam} and AdaGrad \citep{duchi2011adaptive}, work well in practice, but their convergence guarantees depend sometimes on unrealistic assumptions \citep{duchi2011adaptive}. An adaptive method with sound theoretical guarantees is the Polyak stepsize \citep{polyak1987introduction}, which has been recently extended to the stochastic setting by \citet{loizou2021stochastic} and termed stochastic Polyak stepsize (SPS). Extensions of SPS have  been proposed for solving structured non-convex problems \citep{gower2021sgd} and in the update rule of stochastic mirror descent \citep{d2021stochastic}. Further follow-up works have come up with various ways to overcome the limitations of vanilla SPS, such as when optimal stochastic loss values are not known \citep{orvieto2022dynamics, gower2022cutting}, or when the interpolation condition does not hold \citep{orvieto2022dynamics, gower2021stochastic}, as well as a proximal variant for tackling regularization terms \citep{schaipp2023stochastic}. 

\textbf{Adaptive federated optimization.}
\citet{reddi2020adaptive} provide a general framework for adaptive federated optimization (FedOpt), including particular instances such as FedAdam and FedYogi, by using the corresponding centralized adaptive methods as the server optimizer. Several works followed on the idea of server side adaptivity, some recent ones being CD-Adam \citep{wang2022communication_aistats} and FedAMS \citep{wang2022communication}. Fully locally adaptive stepsizes on the client side have not been explored before, except in one concurrent work~\cite{kim2023adaptive}. Their proposed method is based on estimator for the inverse local Lipschitz constant from~\citep{malitsky2019adaptive}, and analyses only the non-convex setting a strong bounded gradient assumption.

\section{Problem setup}\label{setup}

In this work, we consider the following sum-structured (cross-silo) federated optimization problem
\begin{align}
	f^\star := \min_{\xx \in \R^d} \left[ f(\xx) := \frac{1}{n} \sum_{i=1}^n f_i(\xx) \right] \,, \label{eq:f}
\end{align}
where the components $f_i\colon \R^d \rightarrow \R$ are distributed among $n$ local clients and are given in stochastic form $f_i(\xx) := \EEb{\xi \sim \cD_i}{F_i (\xx, \xi)}$,  where $\cD_i$ denotes the distribution of $\xi$ over parameter space~$\Omega_i$ on client $i \in [n]$. Standard empirical risk minimization is an important special case of this problem, when each~$\cD_i$ presents a finite number $m_i$ of elements $\{\xi^i_1,\dots,\xi^i_{m_i}\}$. Then $f_i$ can be rewritten as $f_i(\xx)= \frac{1}{m_i}\sum_{j=1}^{m_i} F_i(\xx,\xi_j^i)$. We do not make any restrictive assumptions on the data distributions $\cD_i$, so our analysis covers the case of heterogeneous (non-i.i.d.) data where $\cD_i \neq \cD_j, \forall i \neq j$ and the \emph{local minima} $\xx_i^\star := \argmin_{\xx \in \R^d} f_i(\xx)$, can be different from the \emph{global minimizer} of \eqref{eq:f}.

\section{Locally Adaptive Federated Optimization}\label{method}

In the following, we provide a background on federated optimization, and the (stochastic) Polyak stepsize. This is followed by an Example to motivate how local adaptivity with (stochastic) Polyak stepsizes can help to improve convergence---especially in the interpolation regime. Finally, we outline our proposed method FedSPS to solve \eqref{eq:f}. 

\subsection{Background and Motivation}
\textbf{Federated averaging.} A common approach to solving \eqref{eq:f} in the distributed setting is FedAvg \citep{mcmahan2017communication} also known as Local SGD \citep{stich2018local}. This involves the clients performing a local step of SGD in each iteration, and the clients communicate with a central server after every $\tau$ iterations---their iterates are averaged on the server, and sent back to all clients. FedAvg corresponds to the special case of Algorithm \ref{algo_fedavg} with constant stepsizes $\gamma_t^i \equiv \gamma_0$ (Line 4).

\textbf{PS and SPS.} Considering the centralized setting ($n=1$) of finite-sum optimization on a single worker $\min_{\xx \in \R^d} \left[ f_1(\xx) := \frac{1}{m} \sum_{j=1}^{m}F_1(\xx, \xi^1_j) \right]$, we introduce the PS as well as the SPS as below: 
\begin{itemize}
    \item \textbf{Deterministic Polyak stepsize.} The convergence analysis of Gradient Descent (GD) for a convex function $f_1(\xx)$ involves the inequality $\norm{\xx_{t+1} - \xx^{\star}}^2 \leq \norm{\xx_{t} - \xx^{\star}}^2 - 2\gamma_t\left(f_1(\xx_t) - f_1(\xx^{\star})\right)$ + $\gamma_t^2 \norm{\nabla f_1(\xx_t)}^2$, the right-hand side of which is minimized by the PS $\gamma_t~=~\frac{f_1(\xx_t)-f_1^{\star}}{\norm{\nabla f_1(\xx_t)}^2}$.
    \item \textbf{Stochastic Polyak stepsize.} We use the notion of SPS\textsubscript{max} from the original paper \citep{loizou2021stochastic}. The SPS\textsubscript{max} stepsize for SGD (with single stochastic sample) is given by $\gamma_t = \min\left\{ \frac{F_1(\xx_t, \xi^1_j) - F_1^\star}{c \norm{\nabla F_1(\xx_t, \xi^1_j)}^2}, \gamma_b  \right\}$, where $F_1^\star := \inf_{\xi \in \cD_1, \xx \in \R^d}F_1(\xx,\xi)$, $\gamma_b > 0$ is an upper bound on the stepsize that controls the size of neighbourhood ($\gamma_b$ trades-off adaptivity for accuracy), and $c >0$ is a constant scaling factor. Instead of using the optimal function values of each stochastic function as in the original paper, we use the lower bound on the function values $\ell_1^\star \leq F_1^\star$, which is easier to obtain for many practical tasks as shown in \citep{orvieto2022dynamics}. 
\end{itemize}

\begin{figure}[h]
\centering
\begin{subfigure}{0.25\textwidth}
\includegraphics[scale=1, width=1\linewidth]{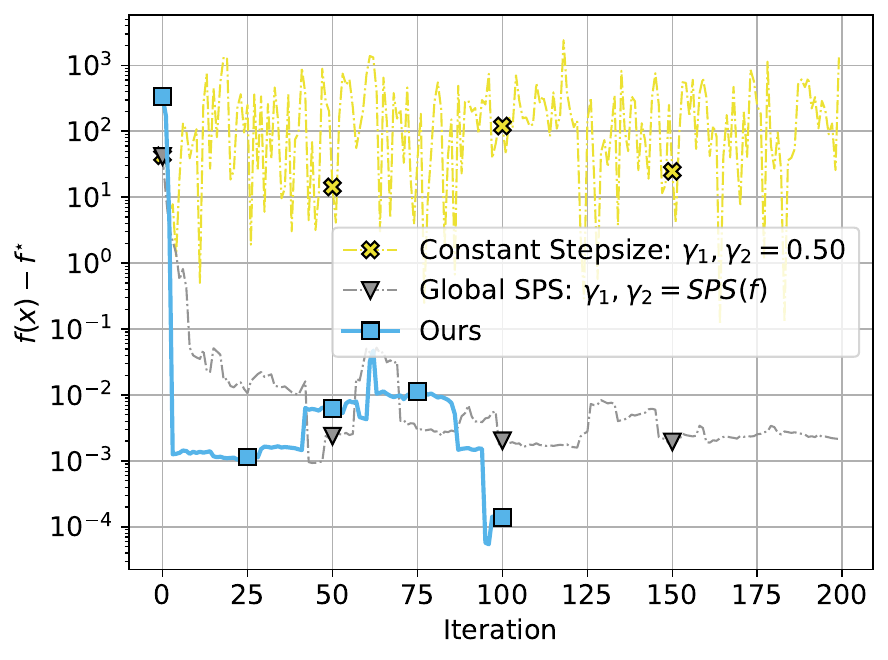} 
\label{fig:demo1}
\vspace{-1em}
\end{subfigure} 
\begin{subfigure}{0.25\textwidth}
\includegraphics[scale=1, width=1\linewidth]{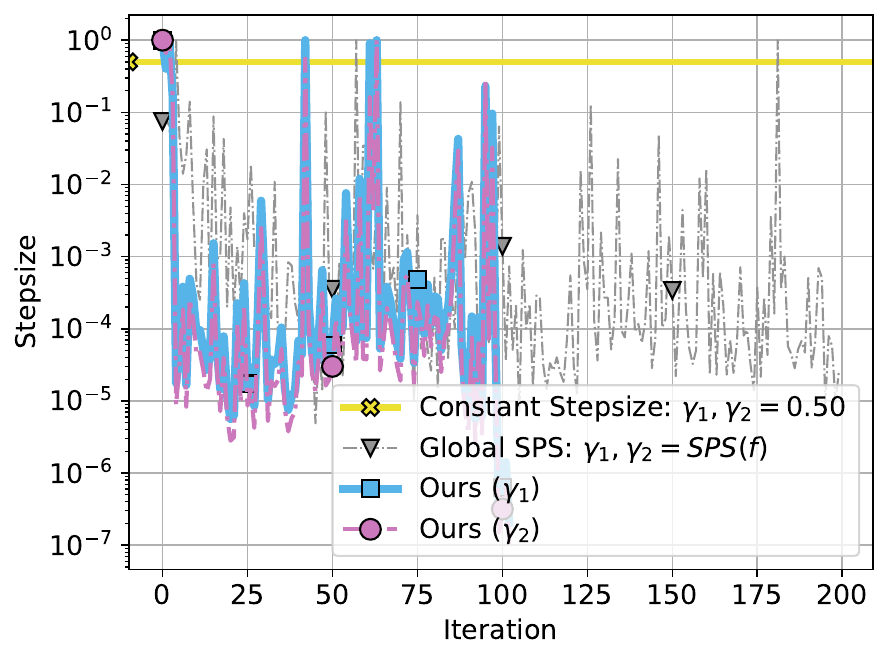}
\label{fig:demo2}
\vspace{-1em}
\end{subfigure}
\caption{Illustration for Example \ref{example:local_adaptivity}, showing local adaptivity can improve convergence. We run SGD with constant, global SPS, and locally adaptive SPS stepsizes (with $c=0.5, \gamma_b=1.0$), for functions $f_1(x) = x^2$, $f_2(x) = \frac{1}{2}x^2$, where stochastic noise was simulated by adding Gaussian noise with mean 0, and standard deviation 10 to the gradients. 
}
\label{fig:local_adaptivity_demo}
\end{figure}

\begin{example}[Local adaptivity using Polyak stepsizes can improve convergence]
For a parameter $a>0$, consider the finite sum optimization problem $\min_{x \in \R} \bigl[ f(x) := \frac{1}{2} \sum_{i=1}^2 f_i(x) \bigr]$, with $f_1(x) = \frac{a}{2}x^2$, $f_2(x) = \frac{1}{2}x^2$ in the interpolation regime. If we solve this problem using mini-batch GD, $x_{t+1}= x_t - \frac{\gamma}{2}(\nabla f_1(x_t) + \nabla f_2(x_t))$, we are required to choose a stepsize $\gamma \leq 2/L$, where $L = \frac{1+a}{2}$ to enable convergence, and therefore $\Omega(a \log \frac{1}{\epsilon})$ steps are needed. However, if we solve the same problem using locally adaptive distributed GD of the form $x_{t+1}= x_t - \frac{1}{2}(\gamma_1 \nabla f_1(x_t) + \gamma_2 \nabla f_2(x_t))$, then the complexity can be near-constant. Concretely, for any stepsizes $\gamma_i \in [\frac{1}{2}\gamma_i^\star, \frac{3}{2} \gamma_i^\star]$, with $\gamma_1^\star=\frac{1}{a}$, $\gamma_{2}^\star =1$ (which also includes the Polyak stepsizes corresponding to functions $f_1$ and $f_2$), the iteration complexity is $\cO(\log \frac{1}{2})$, which can be arbitrarily better than $\Omega(a \log \frac{1}{\epsilon})$ when $a \to \infty$. Note that the observation made in this example can also be extended to the stochastic case of SGD with SPS, as illustrated in Figure \ref{fig:local_adaptivity_demo}.
\label{example:local_adaptivity}
\end{example}

\subsection{Proposed Method}
Motivated by the previous example on the benefit of local adaptivity, we now turn to design such a locally adaptive federated optimization algorithm with provable convergence guarantees. As stated before, we need an adaptive stepsize that is optimal for each client function, and we choose the SPS stepsize for this purpose. In the following, we describe our proposed method FedSPS.

\textbf{FedSPS.}
We propose a fully locally (i.e., client-side) adaptive federated optimization algorithm FedSPS (Algorithm \ref{algo_fedavg}) with asynchronous stepsizes, i.e., the stepsizes are different across the clients, and also across the local steps for a particular client. The FedSPS stepsize for a client $i$ and local iteration $t$ will be given by 
\begin{align}
    \gamma_t^i = \min\left\{ \frac{F_i(\xx_t^i, \xi_t^i) - \ell_i^\star}{c \norm{\nabla F_i(\xx_t^i, \xi_t^i)}^2}, \gamma_b  \right\} \, , \label{eq:fedspslocal_formula}
\end{align}
where $c$, $\gamma_b > 0$ are constants as explained before, $\xi_t^i$ is the sample at time $t$ on worker $i$, $F_i(\xx_t^i, \xi_t^i)$ is the stochastic loss, $\gg_t^i := \nabla F_i(\xx_t^i, \xi^{i}_t)$ is the stochastic gradient, and $\ell_i^\star \leq F_i^\star = \inf_{\xi^i \in \cD_i, \xx \in \R^d} F_i(\xx,\xi^i)$ is a lower bound on the minima of all functions on worker $i$. Since the loss functions are non-negative for most practical machine learning tasks, we can use $\ell_i^\star = 0$ as discussed before, for running our algorithms. We analyse FedSPS in the strongly convex and convex settings and prove convergence guarantees (Theorem \ref{thm:fedsps_local_convex}). We would like to stress that $\gamma_b$ and $c$ are free hyperparameters (in the sense that they do not depend theoretically on any problem-dependent parameters) and we demonstrate that they require minimal tuning through relevant sensitivity analysis in Section \ref{experiments}. This is an advantage over the learning rate parameter of FedAvg and FedAMS which theoretically depend on $L$, and require extensive grid search in practice (Section \ref{section_fedams_hyparams}). The notations used throughout can be extended to the mini-batch setting as described in Appendix \ref{appendix_minibatchsetting}.

\begin{remark}[Alternative design choices]
    Note that there can be various alternative design choices for incorporating SPS in FedAvg. We tried some variants such as FedSPS-Normalized using client and server side correction to account for solution bias \citep{wang2021local} due to asynchronous stepsizes (Appendix \ref{section_apx_fedspsnorm}). We also introduce FedSPS-Global in the Appendix \ref{section_apx_fedspsglobal}, that uses aggregation of stepsizes in communication rounds, similar to \citep{chen2020toward, xie2019local}. However, none of these other design choices provided any practical advantages over our proposed FedSPS.
\end{remark}

\begin{algorithm}[t!]
    \caption{\colorbox{green!15}{\textbf{FedSPS:}} Federated averaging with fully \textcolor{purple}{locally adaptive} stepsizes.}
    \small\algrenewcommand\alglinenumber[1]{\footnotesize #1:} 
    \textbf{Input:} $\xx_0^i = \xx_0, \forall i \in [n]$
    \begin{algorithmic}[1]
    \For{$t = 0, 1, \cdots, T-1$}
        \For{each client $i = 1, \cdots, n$ in parallel}

            \State sample $\xi^{i}_t$, compute $\gg_t^i := \nabla F_i(\xx_t^i, \xi^{i}_t)$
            \State \colorbox{green!15}{\textbf{FedSPS: }$\textcolor{purple}{\gamma_t^i} = \min \left\{\frac{  F_i(\xx_t^i, \xi^{i}_t) - \ell_i^{\star}  }{c || \gg_t^i ||^2}, \gamma_b\right\}$} \hfill { $\triangleright$ local stochastic Polyak stepsize}
            \If{$t+1$ is a multiple of $\tau$}
                \State $\xx_{t+1}^i = \frac{1}{n} \sum_{i=1}^n \left(\xx_t^i - \textcolor{purple}{\gamma_t^i} \gg_t^i\right)$ \hfill { $\triangleright$ communication round}
            \Else{}
                \State $\xx_{t+1}^i = \xx_t^i - \textcolor{purple}{\gamma_t^i} \gg_t^i$ \hfill { $\triangleright$ local step}
            \EndIf
        \EndFor
    \EndFor
    \end{algorithmic}
    \label{algo_fedavg}
\end{algorithm}

\section{Convergence analysis of FedSPS}\label{convergence}

\subsection{Assumptions on the objective function  and noise}\label{section:assumptions}

\begin{assumption}[$L$-smoothness]\label{asm:lsmooth}
Each function $F_i(\xx, \xi)\colon \R^d \times \Omega_i \to \R$, $i \in [n]$
is differentiable for each $\xi \in \supp(\cD_i)$ and there exists a constant $L \geq 0$ such that for each $\xx, \yy \in \R^d, \xi \in \supp(\cD_i)$:
\begin{align}
&\norm{\nabla F_i(\yy, \xi) - \nabla F_i(\xx,\xi) } \leq L \norm{\xx -\yy}\,. \label{eq:F-smooth}%
\end{align}
\end{assumption}
Note that Assumption~\ref{asm:lsmooth} implies $L$-smoothness of each $f_i(\xx)$ and of $f(\xx)$. The assumption of each $F_i(\xx, \xi)$ being smooth is often used in federated and decentralized optimization literature (for e.g., \citep[Assumption 1a]{koloskova2020:unified}, or \citep[Assumption 3]{koloskova2022sharper}).

\begin{assumption}[$\mu$-convexity]
\label{asm:convex}
There exists a constant $\mu \geq 0$ such that for each for each $i \in [n]$,  $ \xi \in \supp(\cD_i)$ and for all $\xx,\yy \in \R^d$:
\begin{align}
    F_i(\yy,\xi) \geq F_i(\xx, \xi) + \lin{\nabla F_i(\xx, \xi),\yy-\xx} + \frac{\mu}{2}\norm{\yy-\xx}^2\,.
    \label{eq:strongconv}
\end{align}
\end{assumption}
For some of our results, we assume $\mu$-strong convexity for a parameter $\mu > 0$, or convexity (when $\mu = 0)$. Furthermore we assume (as mentioned in the introduction) access to stochastic functions $F_i(\xx,\xi)$ on each client $i$, with $E_{\xi \sim \cD_i} \nabla F_i(\xx,\xi)=\nabla f_i(\xx)$, $E_{\xi \sim \cD_i} F_i(\xx,\xi)=f_i(\xx)$.

\textbf{Finite optimal objective difference.}
For each $i \in [n]$ we denote $f_i^\star := \inf_{\xx \in \R^d} f_i(\xx)$. Recall  that we defined $F_i^\star := \inf_{\xi \in \cD_i, \xx \in \R^d}F_i(\xx,\xi)$, and need knowledge of lower bounds, $\ell_i^\star \leq F_i^\star$ for our algorithm. We define the quantity
	\begin{align}\label{eq:sps_noise}
	\sigma_f^2 := \frac{1}{n}\sum_{i=1}^n \left(f_i(\xx^{\star}) - \ell_i^{\star}\right) = f^\star - \frac{1}{n}\sum_{i=1}^n  \ell_i^\star \,,
	\end{align}
that will appear in our complexity estimates, and  thus we implicitly assume that $\sigma_f^2 < \infty$ (finite optimal objective difference). Moreover, $\sigma_f^2$ also acts as our measure of heterogeneity between clients. This is in line with previous works on federated optimization in non-i.i.d.\ setting, such as \citet{li2019convergence} that used $\Gamma := f^\star - E_i f_i^\star$ as the heterogeneity measure. We can relate $\sigma_f^2$ to the more standard measures of function heterogeneity $\zeta^2 = \frac{1}{n}\sum_{i = 1}^n {\norm{\nabla f_i(\xx) - \nabla f(\xx)}}_2^2$ and gradient variance $\sigma^2 = \frac{1}{n}\sum_{i=1}^{n}\E_{\xi^i} {\norm{\nabla F_i(\xx, \xi^i) - \nabla f_i(\xx)}}^2$ in the federated literature \citep{koloskova2022sharper, wang2022communication} as shown in the following proposition (proof in Appendix \ref{appendix_heterogenity}). For the case of convex functions, it suffices \citep{koloskova2020:unified} to compare with $\zeta_{\star}^2 = \frac{1}{n}\sum_{i = 1}^n {\norm{\nabla f_i(\xx^{\star})}}_2^2$, $\sigma_{\star}^2 = \frac{1}{n}\sum_{i=1}^{n}\E_{\xi^i} {\norm{\nabla F_i(\xx^{\star}, \xi^i) - \nabla f_i(\xx^{\star})}}^2$, calculated at the global optimum $\xx^{\star} = \argmin_{\xx \in \R^d} f(\xx)$. We can observe that $\sigma_f^2$ is actually a stronger assumption than bounded noise at optimum ($\zeta_{\star}, \sigma_{\star}$), but weaker than uniformly bounded noise ($\zeta, \sigma$).

\begin{proposition}[Comparison of heterogeneity measures]\label{lem:heterogeinity_comparison}
    Using the definitions of $\sigma_f^2$, $\zeta_{\star}^2$, and $\sigma_{\star}^2$ as defined above, we have: (a) $\zeta_{\star}^2 \leq 2 L \sigma_f^2$, and (b) $\sigma_{\star}^2 \leq 2 L \sigma_f^2$.
\end{proposition}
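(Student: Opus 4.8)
The plan is to reduce both inequalities to a single standard consequence of smoothness: for any $L$-smooth function $g$ that is bounded below with $g^\star := \inf_{\xx} g(\xx)$, the descent lemma $g(\yy) \leq g(\xx) + \lin{\nabla g(\xx), \yy - \xx} + \frac{L}{2}\norm{\yy-\xx}^2$ evaluated at $\yy = \xx - \frac{1}{L}\nabla g(\xx)$ gives $g^\star \leq g(\xx) - \frac{1}{2L}\norm{\nabla g(\xx)}^2$, i.e.
\begin{align}
  \norm{\nabla g(\xx)}^2 \leq 2L\bigl(g(\xx) - g^\star\bigr) \qquad \forall \xx \in \R^d\,. \label{eq:sketch-smooth}
\end{align}
Before applying this, I would record the chain of lower-bound inequalities relating the relevant infima to $\ell_i^\star$. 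For each fixed $\xi$ we have $\ell_i^\star \leq F_i^\star = \inf_{\xi', \xx} F_i(\xx, \xi') \leq \inf_{\xx} F_i(\xx, \xi)$, and since $f_i(\xx) = \E_{\xi} F_i(\xx,\xi) \geq F_i^\star$ pointwise, also $\ell_i^\star \leq F_i^\star \leq f_i^\star := \inf_{\xx} f_i(\xx)$. These are exactly what let me replace the true (per-client or per-sample) minima by $\ell_i^\star$ without reversing the inequality.

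For part (a), each $f_i$ is $L$-smooth (Remark following Assumption~\ref{asm:lsmooth}) and bounded below, so applying \eqref{eq:sketch-smooth} to $g = f_i$ at $\xx = \xx^\star$ gives $\norm{\nabla f_i(\xx^\star)}^2 \leq 2L\bigl(f_i(\xx^\star) - f_i^\star\bigr) \leq 2L\bigl(f_i(\xx^\star) - \ell_i^\star\bigr)$, where the last step uses $f_i^\star \geq \ell_i^\star$. Averaging over $i \in [n]$ and recalling $\sigma_f^2 = \frac{1}{n}\sum_{i=1}^n \bigl(f_i(\xx^\star) - \ell_i^\star\bigr)$ yields $\zeta_\star^2 \leq 2L\sigma_f^2$.

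For part (b), I would first remove the mean via the variance decomposition: since $\E_{\xi} \nabla F_i(\xx,\xi) = \nabla f_i(\xx)$, we get $\E_{\xi}\norm{\nabla F_i(\xx^\star,\xi) - \nabla f_i(\xx^\star)}^2 = \E_{\xi}\norm{\nabla F_i(\xx^\star,\xi)}^2 - \norm{\nabla f_i(\xx^\star)}^2 \leq \E_{\xi}\norm{\nabla F_i(\xx^\star,\xi)}^2$. The crucial point is that Assumption~\ref{asm:lsmooth} holds for $F_i(\cdot,\xi)$ \emph{per sample}, so \eqref{eq:sketch-smooth} applies to $g = F_i(\cdot,\xi)$ for each fixed $\xi$, giving $\norm{\nabla F_i(\xx^\star,\xi)}^2 \leq 2L\bigl(F_i(\xx^\star,\xi) - \inf_{\xx} F_i(\xx,\xi)\bigr) \leq 2L\bigl(F_i(\xx^\star,\xi) - \ell_i^\star\bigr)$. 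Taking $\E_{\xi}$ and using $\E_{\xi} F_i(\xx^\star,\xi) = f_i(\xx^\star)$ gives $\E_{\xi}\norm{\nabla F_i(\xx^\star,\xi)}^2 \leq 2L\bigl(f_i(\xx^\star) - \ell_i^\star\bigr)$, and averaging over $i$ gives $\sigma_\star^2 \leq 2L\sigma_f^2$. I expect no serious obstacle here; the only care needed is bookkeeping around the infima so that $\ell_i^\star$ can uniformly lower-bound both $\inf_{\xx} F_i(\xx,\xi)$ (pointwise in $\xi$) and $f_i^\star$, and noting that \eqref{eq:sketch-smooth} must be applied to the per-sample function $F_i(\cdot,\xi)$ rather than only to $f_i$, which is precisely why the smoothness assumption is stated at the level of $F_i$.
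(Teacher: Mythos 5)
Your proof is correct and follows essentially the same route as the paper's: both parts reduce to the smoothness inequality $\norm{\nabla g(\xx)}^2 \leq 2L\left(g(\xx)-\inf_{\zz} g(\zz)\right)$, applied to $f_i$ for part (a) (using $\nabla f(\xx^\star)=\0$) and to the per-sample functions $F_i(\cdot,\xi)$ for part (b) after dropping the mean via the variance inequality, with the infima then lower-bounded by $\ell_i^\star$. The only cosmetic difference is that you obtain this inequality directly from the descent lemma with the infimum, whereas the paper invokes its Lemma~\ref{lemma:lsmth_star} at the (implicitly assumed to be attained) local and per-sample minimizers $\xx_i^\star$ and $\xx^\star_{\xi^i}$ — a minor technical advantage of your formulation.
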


\subsection{Convergence of fully locally adaptive FedSPS}\label{section_fedspslocal_theorem}
In this section we provide the convergence guarantees of FedSPS on sums of convex (or strongly convex) functions. We do not make any restriction on $\gamma_b$, and thus denote this as the fully locally adaptive setting that is of most interest to us. The primary theoretical challenge is extending the error-feedback framework (that originally works for equal stepsizes)~\citep{mania2017perturbed,StichK19delays}  to work for fully un-coordinated local stepsizes, and we do this for the first time in our work. All proofs are provided in Appendix \ref{section:apx:all_fedsps_proofs}.

\begin{theorem}[Convergence of FedSPS]
\label{thm:fedsps_local_convex}
Assume that Assumptions~\ref{asm:lsmooth} and \ref{asm:convex} hold and $c \geq 2\tau^2 $, then after $T$ iterations ($T/\tau$ communication rounds) of FedSPS (Algorithm~\ref{algo_fedavg}) it holds\\ \hfill \textbf{\mbox{(a) Convex case:}}
\begin{align}
\frac{1}{Tn} \sum_{t=0}^{T-1} \sum_{i=1}^n \E [f_i(\xx_t^i) - \ell_i^{\star})]
 &\leq \frac{2}{T \alpha} \norm{\bar \xx_0 - \xx^\star}^2  + \frac{ 4 \gamma_b \sigma_f^2}{\alpha}\,,
 \label{eqn:fedsps_convex_thm_result}
\end{align}%
where $\alpha := \min\left\{\frac{1}{2cL}, \gamma_b \right\}$. If $\mu>0$, and $c \geq 4\tau^2$, we have \\\hfill \textbf{(b) Strongly convex case:}
\begin{align}
\E \norm{\bar \xx_T - \xx^\star}^2 \leq A (1-\mu \alpha)^T \norm{\bar \xx_0 - \xx^\star}^2 + \frac{2 \gamma_b \sigma_f^2}{\alpha \mu} \,,
\label{eq:thm_fedsps_local_strconvex}
\end{align}
where $A= \frac{1}{\mu \alpha}$, and $\bar \xx_t := \frac{1}{n}\sum_{i=1}^n \xx_t^i$.
\end{theorem}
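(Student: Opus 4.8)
The plan is to analyse the ``virtual'' averaged iterate $\bar\xx_t := \frac1n\sum_{i=1}^n \xx_t^i$. The key structural fact is that the average obeys the \emph{same} recursion in communication and non-communication rounds, namely $\bar\xx_{t+1} = \bar\xx_t - \frac1n\sum_{i=1}^n \gamma_t^i\gg_t^i$, so the federated dynamics reduce to a perturbed SGD recursion on $\bar\xx_t$ driven by local gradients evaluated at the (different) local iterates $\xx_t^i$. Before touching the recursion I would record the two deterministic facts that drive every SPS argument: from $L$-smoothness \eqref{eq:F-smooth} together with $\ell_i^\star \le F_i^\star$ one gets $\norm{\gg_t^i}^2 \le 2L\,(F_i(\xx_t^i,\xi_t^i)-\ell_i^\star)$, and hence from the definition \eqref{eq:fedspslocal_formula} the two-sided stepsize bound $\alpha := \min\{\frac{1}{2cL},\gamma_b\} \le \gamma_t^i \le \gamma_b$ together with the product bound $(\gamma_t^i)^2\norm{\gg_t^i}^2 \le \gamma_t^i\,\frac{F_i(\xx_t^i,\xi_t^i)-\ell_i^\star}{c}$.

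Next I would expand $\norm{\bar\xx_{t+1}-\xx^\star}^2$. The squared-gradient term is controlled by Jensen's inequality, $\norm{\frac1n\sum_i\gamma_t^i\gg_t^i}^2 \le \frac1n\sum_i (\gamma_t^i)^2\norm{\gg_t^i}^2$, followed by the product bound. For the cross term I split $\lin{\gg_t^i,\bar\xx_t-\xx^\star} = \lin{\gg_t^i,\xx_t^i-\xx^\star} + \lin{\gg_t^i,\bar\xx_t-\xx_t^i}$ and apply $\mu$-convexity \eqref{eq:strongconv} to the first piece, producing $F_i(\xx_t^i,\xi_t^i)-F_i(\xx^\star,\xi_t^i)$ (plus $\frac{\mu}{2}\norm{\xx_t^i-\xx^\star}^2$ when $\mu>0$). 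Collecting terms, the coefficient $2-\frac1c$ multiplying the function-value gap stays positive, while the leftover $\frac1n\sum_i(F_i(\xx^\star,\xi_t^i)-\ell_i^\star)$ becomes exactly $\sigma_f^2$ after taking expectations via \eqref{eq:sps_noise}. A point of care throughout is that $\gamma_t^i$ is correlated with both $\gg_t^i$ and $F_i(\xx_t^i,\xi_t^i)$; I would break this coupling by invoking the deterministic bounds $\alpha\le\gamma_t^i\le\gamma_b$ on terms of definite sign \emph{before} taking the conditional expectation $\E[F_i(\xx_t^i,\xi_t^i)\mid\cF_t]=f_i(\xx_t^i)$.

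The crux of the argument---and where I expect the main obstacle---is the consensus term $\lin{\gg_t^i,\bar\xx_t-\xx_t^i}$, because the clients run \emph{asynchronous} stepsizes and therefore drift apart over the $\tau$ local steps between synchronizations; this is precisely the novelty over standard Local SGD, which uses one common stepsize. I would bound the drift $\frac1n\sum_i\norm{\xx_t^i-\bar\xx_t}^2$ by unrolling back to the last communication round (where all iterates coincide) and applying $\norm{\sum_s a_s}^2\le\tau\sum_s\norm{a_s}^2$ over the at most $\tau$ accumulated steps; each summand $(\gamma_s^i)^2\norm{\gg_s^i}^2$ contributes a factor $\frac1c$ through the product bound, so that after summing over a round the total consensus error is of order $\frac{\tau^2}{c}$ times the accumulated function-value terms. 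This explains the hypothesis $c\ge 2\tau^2$ (and $c\ge 4\tau^2$ for the strongly convex case): it forces $\frac{\tau^2}{c}\le\frac12$ (resp.\ $\le\frac14$), so the consensus error can be absorbed into the descent and still leave a strictly positive coefficient on the suboptimality $f_i(\xx_t^i)-f_i(\xx^\star)$.

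Finally I would close the two cases from the resulting one-step inequality. In the convex case ($\mu=0$) summing over $t=0,\dots,T-1$ telescopes $\norm{\bar\xx_t-\xx^\star}^2$; dividing by $T$ and by the net coefficient (of order $\alpha$) yields \eqref{eqn:fedsps_convex_thm_result}, with the $\frac{4\gamma_b\sigma_f^2}{\alpha}$ term coming from the $\sigma_f^2$ residual. In the strongly convex case the extra $\frac{\mu}{2}\norm{\xx_t^i-\xx^\star}^2$ terms, combined with $\gamma_t^i\ge\alpha$, turn the bound into the contraction $\E\norm{\bar\xx_{t+1}-\xx^\star}^2 \le (1-\mu\alpha)\,\E\norm{\bar\xx_t-\xx^\star}^2 + \cO(\gamma_b\sigma_f^2)$; unrolling the geometric recursion and bounding the error sum by $\sum_t (1-\mu\alpha)^{\,T-1-t}\le\frac{1}{\mu\alpha}$ produces the linear rate \eqref{eq:thm_fedsps_local_strconvex}, the neighbourhood term $\frac{2\gamma_b\sigma_f^2}{\alpha\mu}$, and the prefactor $A=\frac{1}{\mu\alpha}$.
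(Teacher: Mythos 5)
Your proposal follows essentially the same route as the paper's own proof: the virtual-average recursion, the two SPS inequalities (the two-sided bound $\alpha \le \gamma_t^i \le \gamma_b$ and the product bound $\norm{\gamma_t^i\gg_t^i}^2 \le \tfrac{\gamma_t^i}{c}[F_i(\xx_t^i,\xi_t^i)-\ell_i^\star]$), the cross-term split with per-sample convexity applied pathwise before taking expectations, the drift bound obtained by unrolling to the last synchronization and absorbing it through $c \ge 2\tau^2$ (resp.\ $c \ge 4\tau^2$), and the telescoping (convex) or geometrically weighted unrolling (strongly convex) finish. Apart from a harmless bookkeeping slip (the paper's coefficient on the function-value gap is $2-\tfrac{2}{c}$, not $2-\tfrac{1}{c}$; both remain positive under the stated hypothesis), your plan matches the paper's argument step for step.
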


The convergence criterion of the first result \eqref{eqn:fedsps_convex_thm_result} is non-standard, as it involves the average of all iterates $\xx_t^i$ on the left hand side, and not the average $\bar \xx_t$ more commonly used. However, note that every $\tau$-th iteration these quantities are the same, and thus our result implies convergence of $\frac{1}{Tn} \sum_{t=0}^{(T-1)/\tau} \sum_{i=1}^n \E [f_i(\bar \xx_{t\tau}) - \ell_i^{\star}]$. Moreover, in the interpolation case all $\ell_i^\star \equiv f^\star$.

\begin{remark}[Minimal need for hyperparamter tuning]
The parameter $\tau$ is a user selected input parameter to determine the number of local steps, and $\gamma_b$ trades-off adaptivity (potentially faster convergence for large $\gamma_b$) and accuracy (higher for small $\gamma_b)$. Moreover, as $c$ only depends on the input parameter $\tau$ and not on properties of the function (e.g.\ $L$ or $\mu$), it is also a free parameter. The algorithm provably converges (up to the indicated accuracy) for any choice of these parameters. The lower bounds $\ell^{\star}_i$ can be set to zero for many machine learning problems as discussed before. Therefore, we effectively reduce the need for hyperparameter tuning compared to previous methods like FedAMS whose convergence depended on problem-dependent parameters.
\label{remark:param-free}
\end{remark}

\textbf{Comparison with SPS \citep{loizou2021stochastic}.}
We will now compare our results to~\cite{loizou2021stochastic} that studied SPS for a single worker ($n=1$). First, we note that in the strongly convex case, we  almost recover Theorem 3.1 in~\cite{loizou2021stochastic}. The only differences are that they have $A=1$ and allow weaker bounds  on the parameter $c$ ($c \geq 1/2$, vs.\ our $c>4$), but we match other constants. In the convex case, we again recover \cite[Theorem~3.4]{loizou2021stochastic} up to constants and the stronger condition on $c$ (vs.\ $c>1$ in their case).

\textbf{$\bullet$ (Special case I) Exact convergence of FedSPS in interpolation regime:}
We highlight the linear convergence of FedSPS in the interpolation case ($\sigma_f = 0$) in the following corollary. 
\begin{corollary}[Linear Convergence of FedSPS under Interpolation]
Assume interpolation, $\sigma_f^2=0$ and let the assumptions of Theorem~\ref{thm:fedsps_local_convex} be satisfied with $\mu>0$. Then 
\begin{align}
\E \norm{\bar \xx_T - \xx^\star}^2 \leq A (1-\mu \alpha)^T \norm{\bar \xx_0 - \xx^\star}^2 \,.
\end{align}
\end{corollary}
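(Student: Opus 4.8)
The plan is to obtain this statement as an immediate specialization of the strongly convex bound \eqref{eq:thm_fedsps_local_strconvex} in Theorem~\ref{thm:fedsps_local_convex}(b). The hypotheses of the corollary are exactly those of Theorem~\ref{thm:fedsps_local_convex} in the case $\mu > 0$ (in particular $c \geq 4\tau^2$, so that $\alpha = \min\{1/(2cL), \gamma_b\}$ and $A = 1/(\mu\alpha)$ are well defined), hence \eqref{eq:thm_fedsps_local_strconvex} holds verbatim:
\begin{align*}
\E \norm{\bar \xx_T - \xx^\star}^2 \leq A (1-\mu \alpha)^T \norm{\bar \xx_0 - \xx^\star}^2 + \frac{2 \gamma_b \sigma_f^2}{\alpha \mu}\,.
\end{align*}

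Then I would observe that under interpolation the additive term vanishes. Concretely, by definition \eqref{eq:sps_noise} each summand $f_i(\xx^\star) - \ell_i^\star$ is nonnegative (since $\ell_i^\star \leq F_i^\star \leq f_i^\star \leq f_i(\xx^\star)$), so the assumption $\sigma_f^2 = 0$ forces the neighborhood term $\frac{2\gamma_b \sigma_f^2}{\alpha\mu}$ to be identically zero, leaving precisely the claimed inequality. Nothing else in the bound depends on $\sigma_f^2$, so the substitution is clean.

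There is essentially no obstacle here: the corollary is a one-line consequence of the main theorem, and all the genuine work---the error-feedback argument extended to un-coordinated local stepsizes---has already been carried out in the proof of Theorem~\ref{thm:fedsps_local_convex}. The only point worth flagging is the interpretation of $\sigma_f^2 = 0$ as the interpolation condition, namely that the global minimizer $\xx^\star$ simultaneously minimizes every local expected loss $f_i$ and that the chosen lower bounds $\ell_i^\star$ are attained; this is exactly what makes the additive error disappear and upgrades the convergence-to-a-neighborhood guarantee into the stated exact linear convergence at rate $(1-\mu\alpha)$.
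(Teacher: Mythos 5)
Your proposal is correct and is exactly how the paper obtains this corollary: it is an immediate specialization of the strongly convex bound \eqref{eq:thm_fedsps_local_strconvex} in Theorem~\ref{thm:fedsps_local_convex}(b), where setting $\sigma_f^2 = 0$ annihilates the neighborhood term $\frac{2\gamma_b\sigma_f^2}{\alpha\mu}$. Your additional remark that each summand $f_i(\xx^\star) - \ell_i^\star$ is nonnegative (so $\sigma_f^2 = 0$ indeed forces every term to vanish, giving the interpolation interpretation) is a correct and harmless elaboration beyond what the paper writes.
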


\textbf{$\bullet$ (Special case II) Exact convergence of FedSPS in the small stepsize regime:} Theorem~\ref{thm:fedsps_local_convex} shows convergence of FedSPS to a neighborhood of the solution. Decreasing $\gamma_b$ (smaller than $\frac{1}{2cL}$) can improve the accuracy, but the error is at least $\Omega(\sigma_f^2)$ even when $\gamma_b \to 0$. This issue is also persistent in the original work on SPS\textsubscript{max}~\citep[Corr. 3.3]{loizou2021stochastic}. However, we remark when the stepsize upper bound $\gamma_b$ is chosen extremely small---not allowing for adaptivity----then FedSPS becomes identical to constant stepsize FedAvg. This is not reflected in Theorem~\ref{thm:fedsps_local_convex} that cannot recover the exact convergence known for FedAvg. We address this in the next theorem, proving exact convergence of small stepsize FedSPS (equivalent to analysis of FedAvg with $\sigma_f^2$ assumption).

\begin{theorem}[Convergence of small stepsize FedSPS]
\label{thm:fedsps_local_convex_small}
Assume that Assumptions~\ref{asm:lsmooth} and \ref{asm:convex} hold and $\gamma_b \leq \min\bigl\{\frac{1}{2cL}, \frac{1}{20L\tau} \bigr\}$, then after $T$ iterations of FedSPS (Algorithm~\ref{algo_fedavg}) it holds\\ \hfill \textbf{(a) Convex case:}
\begin{align}
  \frac{1}{T} \sum_{t=0}^{T-1} \E [f(\bar \xx_t) - f^\star]  =\cO\left( \frac{1}{T \gamma_b} \norm{\bar \xx_0 - \xx^\star}^2 + \gamma_b L   \sigma_f^2 + \gamma_b^2 L \tau^2  \sigma_f^2\right)\,,
  \label{eq:smallstepsize_thm_convex}
\end{align}
and when $\mu > 0$,\\  \hfill \textbf{(b) Strongly convex case:}
\begin{align}
\E \norm{\bar \xx_T - \xx^\star}^2 = \cO\left(  \frac{\norm{\bar \xx_0 - \xx^\star}^2}{\mu \gamma_b} (1-\mu \gamma_b)^T  + \gamma_b \frac{L \sigma_f^2}{\mu} + \gamma_b^2 \frac{L \tau^2 \sigma_f^2}{\mu} \right) \,.
\end{align}
\end{theorem}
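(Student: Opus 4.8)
The key observation is that the restriction $\gamma_b \le \frac{1}{2cL}$ forces the Polyak stepsize to saturate at its cap, collapsing FedSPS onto plain constant-stepsize Local SGD. The plan is to first establish this reduction, then invoke a standard Local SGD analysis, and finally convert the resulting noise/heterogeneity constants into $\sigma_f^2$ via Proposition~\ref{lem:heterogeinity_comparison}.

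\emph{Step 1 (Stepsize collapse).} Since each $F_i(\cdot,\xi)$ is $L$-smooth and bounded below by $\ell_i^\star \le F_i^\star$, the standard smoothness inequality gives $\norm{\gg_t^i}^2 = \norm{\nabla F_i(\xx_t^i,\xi_t^i)}^2 \le 2L\bigl(F_i(\xx_t^i,\xi_t^i) - F_i^\star\bigr) \le 2L\bigl(F_i(\xx_t^i,\xi_t^i) - \ell_i^\star\bigr)$. Hence the unclipped Polyak candidate obeys $\frac{F_i(\xx_t^i,\xi_t^i) - \ell_i^\star}{c\norm{\gg_t^i}^2} \ge \frac{1}{2cL} \ge \gamma_b$, so the minimum in \eqref{eq:fedspslocal_formula} is always attained by $\gamma_b$. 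Consequently $\gamma_t^i \equiv \gamma_b$ for every client $i$ and iteration $t$, and Algorithm~\ref{algo_fedavg} is literally FedAvg/Local SGD run with the constant stepsize $\gamma_b$ (when $\gg_t^i = \0$ the update is trivial and the identity still holds by convention).

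\emph{Step 2 (Constant-stepsize Local SGD).} With all stepsizes equal, I would run the classical descent argument for Local SGD under Assumptions~\ref{asm:lsmooth}--\ref{asm:convex}: track $\E\norm{\bar\xx_{t+1} - \xx^\star}^2$, bound the cross term using ($\mu$-)convexity, and control the client drift $\frac1n\sum_i \E\norm{\xx_t^i - \bar\xx_t}^2$ through the usual consensus recursion over each length-$\tau$ block. The second constraint $\gamma_b \le \frac{1}{20L\tau}$ is precisely what keeps the accumulated drift a lower-order contribution and lets one absorb the function-suboptimality part of the per-client gradient bound into the left-hand side. This produces bounds of the standard form: a $\frac{1}{\gamma_b T}\norm{\bar\xx_0 - \xx^\star}^2$ (resp.\ $(1-\mu\gamma_b)^T$) optimization term, a sampling-noise term scaling like $\gamma_b \sigma_\star^2$, and a drift term scaling like $\gamma_b^2 L\tau^2(\sigma_\star^2 + \zeta_\star^2)$, all measured at the optimum, which suffices in the (strongly) convex regime.

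\emph{Step 3 (Conversion to $\sigma_f^2$) and main obstacle.} I would then substitute $\sigma_\star^2 \le 2L\sigma_f^2$ and $\zeta_\star^2 \le 2L\sigma_f^2$ from Proposition~\ref{lem:heterogeinity_comparison}, turning the noise term into $\cO(\gamma_b L\sigma_f^2)$ and the drift term into $\cO(\gamma_b^2 L\tau^2\sigma_f^2)$, yielding \eqref{eq:smallstepsize_thm_convex} and its strongly convex counterpart. The conceptual content lives entirely in Step~1; once the collapse to a constant stepsize is in hand, the rest is bookkeeping on a known template. The part requiring the most care is the consensus/drift recursion of Step~2 and its interaction with the $\gamma_b \le \frac{1}{20L\tau}$ constraint---in particular, ensuring the suboptimality terms hidden inside the per-client gradient bound are absorbed rather than left as an uncontrolled residual, and that the final powers of $L$, $\gamma_b$, and $\tau$ match the claimed rate.
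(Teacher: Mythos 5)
Your proposal is correct in its essential strategy, and your Step 1 (the stepsize collapse $\gamma_t^i \equiv \gamma_b$, justified exactly by the smoothness bound $\frac{F_i(\xx_t^i,\xi_t^i)-\ell_i^\star}{c\norm{\gg_t^i}^2} \geq \frac{1}{2cL} \geq \gamma_b$, cf.\ Lemma~\ref{lemma:lsmoothineq}) is precisely how the paper's proof begins. Where you diverge is in Steps 2--3: the paper never introduces $\sigma_\star^2$ or $\zeta_\star^2$. It runs a self-contained consensus recursion and, wherever a per-client gradient norm appears, bounds it by $\norm{\nabla F_i(\bar\xx_t,\xi_t^i)}^2 \leq 2L[F_i(\bar\xx_t,\xi_t^i)-\ell_i^\star]$, then splits (after averaging over $i$ and taking expectation) into $[f(\bar\xx_t)-f^\star]+\sigma_f^2$; the suboptimality parts are absorbed into the left-hand side --- this is exactly where $\gamma_b \leq \frac{1}{20L\tau}$ is used --- and the $\sigma_f^2$ parts form the error floor. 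Your more modular route (off-the-shelf Local SGD bound with noise at the optimum, then convert via Proposition~\ref{lem:heterogeinity_comparison}) is viable, but it has a quantitative consequence you should flag: the drift term becomes $\gamma_b^2 L\tau^2(\sigma_\star^2+\zeta_\star^2) \leq 4\gamma_b^2 L^2\tau^2\sigma_f^2$, carrying one more factor of $L$ than the $\gamma_b^2 L\tau^2\sigma_f^2$ claimed in the theorem, and under the stated stepsize constraint this extra-$L$ term is not in general dominated by the other terms (take $\tau$ large and $\gamma_b$ small). Interestingly, the paper's own accounting has the same feature: plugging its drift bound \eqref{eq:443} into the descent inequality produces $2\gamma_b L \cdot 64 L\gamma_b^2\tau^2\sigma_f^2 = 128\gamma_b^3 L^2\tau^2\sigma_f^2$ per step, i.e.\ $\cO(\gamma_b^2 L^2\tau^2\sigma_f^2)$ in the rate, and a dimensional check confirms $L^2$ is the consistent power --- so the single $L$ in the theorem statement appears to be a dropped factor in the paper's write-up rather than something either argument actually achieves. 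In short, your plan proves the theorem up to this factor of $L$ in the lower-order drift term, which is the same outcome as the paper's own proof delivers; the remaining differences are bookkeeping, with the paper's direct use of $\ell_i^\star$-smoothness buying slightly cleaner constants than the $\sigma_\star,\zeta_\star$ detour.
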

This theorem shows that by choosing an appropriately small $\gamma_b$, any arbitrary target accuracy $\epsilon >0$ can  be obtained. We are only aware of \citet{li2019convergence} that studies FedAvg under similar assumptions as us ($\Gamma := f^\star - E_i f_i^\star$ measuring heterogeneity). However, their analysis additionally required bounded stochastic gradients and their convergence rates are weaker (e.g., not recovering linear convergence under interpolation when $\sigma_f^2 = 0$).

\section{Decreasing FedSPS for exact convergence}

In the previous section, we have proved that FedSPS converges in the interpolation setting irrespective of the value of the stepsize parameter $\gamma_b$. However, many practical federated learning scenarios such as those involving heterogeneous clients constitute the non-interpolating setting ($\sigma_f > 0$). Here, we need to choose a small value of $\gamma_b$ to ensure convergence, trading-off adaptivity for achieving exact convergence. We might recall that \cite{li2019convergence} proved choosing a decaying stepsize is necessary for convergence of FedAvg under heterogeneity. In this section, we draw inspiration from the decreasing SPS stepsize DecSPS \citep{orvieto2022dynamics}, to develop FedDecSPS, that achieves exact convergence in practical non-interpolating scenarios without compromising adaptivity.

\textbf{FedDecSPS.} In order to obtain exact convergence to arbitrary accuracy (without the small stepsize assumption) in the heterogeneous setting with $\sigma_f >0$, we propose a heuristic decreasing stepsize version of FedSPS, called FedDecSPS. The FedDecSPS stepsize for client $i$ and local iteration $t$ is given by $\gamma_t^i = {\frac{1}{c_t}} \min \left\{\frac{  F_i(\xx_t^i, \xi^{i}_t) - \ell_i^{\star}  }{\norm{\nabla F_i(\xx_t^i, \xi_t^i)}^2}, {c_{t-1}\gamma_{t-1}^{i}}\right\}$, where $(c_t)_{t=0}^{\infty}$ is any non-decreasing positive sequence of real numbers. We also set $c_{-1} = c_0$, $\gamma_{-1}^i = \gamma_b$. Experiments involving heterogeneous clients in Section \ref{experiments} demonstrate the practical convergence benefits of FedDecSPS in non-interpolating settings.

\section{Experiments}\label{experiments}

\textbf{Experimental setup.} 
For all federated training experiments we have 500 communication rounds (the no.\ of communication rounds being $T/\tau$ as per our notation), 5 local steps on each client ($\tau = 5$, unless otherwise specified for some ablation experiments), and a batch size of 20 ($|\cB| = 20$). We perform experiments in the i.i.d.\ as well as non-i.i.d.\ settings. Results are reported for both settings without client sampling (10 clients) and with client sampling (10 clients sampled uniformly at random from 100 clients with participation fraction 0.1, and data split among all 100 clients) i.e., $n=10$ active clients throughout. The i.i.d.\ experiments involve randomly shuffling the data and equally splitting the data between clients. For non-i.i.d.\ experiments, we assign every client samples from exactly two classes of the dataset, the splits being non-overlapping and balanced with each client having same number of samples \citep{li2019convergence}. Our code is based on publicly available repositories for SPS and FedAMS\footnote{SPS (\url{https://github.com/IssamLaradji/sps}), FedAMS (\url{https://github.com/jinghuichen/FedCAMS})}, and will be made available upon acceptance.

\textbf{FedSPS.}
The implementation is done according to Algorithm \ref{algo_fedavg}. Since all our experimental settings involve non-negative loss functions, we can use the lower bound $\ell_i^\star = 0$ \citep{ orvieto2022dynamics}, throughout. In the following, we perform empirical sensitivity analysis for the free hyperparameters $\gamma_b$  and $c$, concluding that our method is indeed insensitive to changes in these parameters. 

We start with benchmarking our method by running some initial convex experiments performing classification of the MNIST (i.i.d.) dataset \citep{lecun2010mnist} with a logistic regression model, without client sampling. In Figure \ref{fig:mnist_convex_ablation}(a), we compare the effect of varying $\gamma_b \in \{1, 5, 10\}$ on FedSPS, and varying $\gamma \in \{0.1, 0.01\}$ on FedAvg. We find that FedAvg is not robust to changing stepsize---converging well for stepsize 0.1, but very slow convergence for stepsize 0.01. On the contrary, all instances of FedSPS converge to a neighbourhood of the optimum---the size of the neighbourhood being proportional to $\gamma_b$ as suggested by the theory. We now fix $\gamma_b = 1$, and perform an ablation study to understand the effect of varying SPS scaling parameter $c$ on the convergence in Figure \ref{fig:mnist_convex_ablation}(c). For number of local steps $\tau=5$, we vary $c$ from 0.01 to 40 (i.e., of the order of square of $\tau$). Unlike what is predicted by our theory, empirically we observe that small $c$ works better and larger $c$ leads to slower convergence. Moreover, all values of $c \in \{0.01, 0.1, 0.5, 1.0\}$ have similarly good convergence, thereby implying our method is robust to this hyperparameter and needs no tuning. We provide additional plots for $\tau \in \{10, 20, 50, 100\}$ local steps in Appendix \ref{appendix_addncex} to confirm that this observation is valid across all values of $\tau$, and plot the optimal value of $c$ versus $\tau$ for each case in Figure \ref{fig:mnist_convex_ablation}(d). Gaining insights from above experiments we fix $c=0.5$ for all further experiments.

\textbf{FedDecSPS.} We evaluate the performance of FedDecSPS with $c_t = c_0 \sqrt{t+1}$. Similar to the sensitivity analysis of FedSPS towards $c$ (Figure \ref{fig:mnist_convex_ablation}), we performed ablations studies for a fixed value of $\gamma_b$ and varying $c_0$ as well as $\tau$. The observation is same as the previous case: the optimal value of $c_0$ does not scale according to $\tau$ as suggested by theory and we fix $c_0=0.5$ for all experiments. Similarly we fix $\gamma_b = 1$, following similar observations as before. We compare the convergence of FedSPS and FedDecSPS for the case of heterogeneous data on clients (i.e., $\sigma_f>0$) in Figure \ref{fig:mnist_convex_nosampling} (c) and (d), as well as Figure \ref{fig:cifar_nonconvex_yessampling}. We observe that our practically motivated FedDecSPS performs better in such non-interpolating settings, as expected.

\begin{figure}[t]
\centering
\begin{subfigure}{0.24\textwidth}
\includegraphics[scale=1, width=1\linewidth]{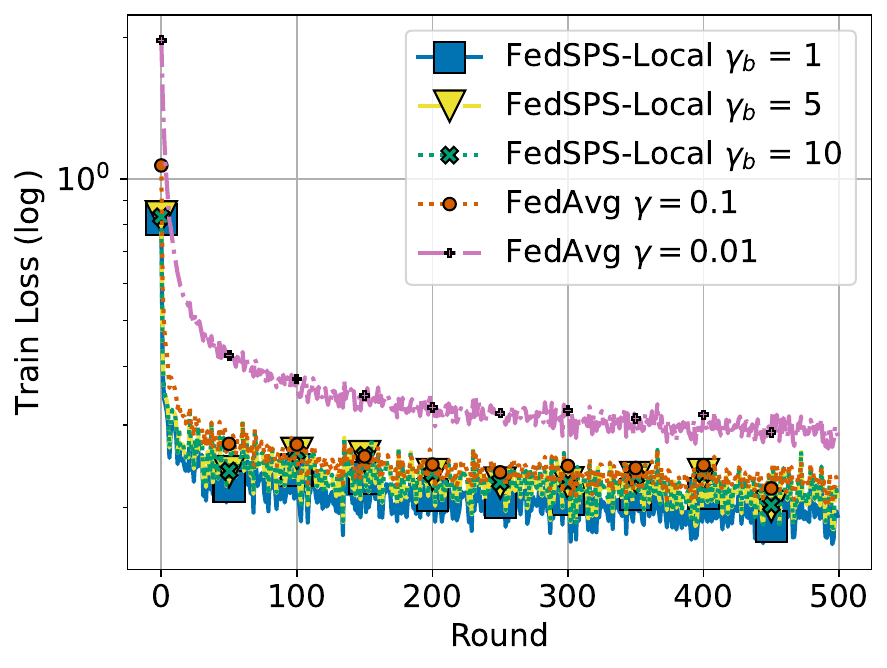} 
\label{fig:demo1}
\vspace{-1em}
\caption{Effect of varying $\gamma_b$.}
\end{subfigure} 
\begin{subfigure}{0.24\textwidth}
\includegraphics[scale=1, width=1\linewidth]{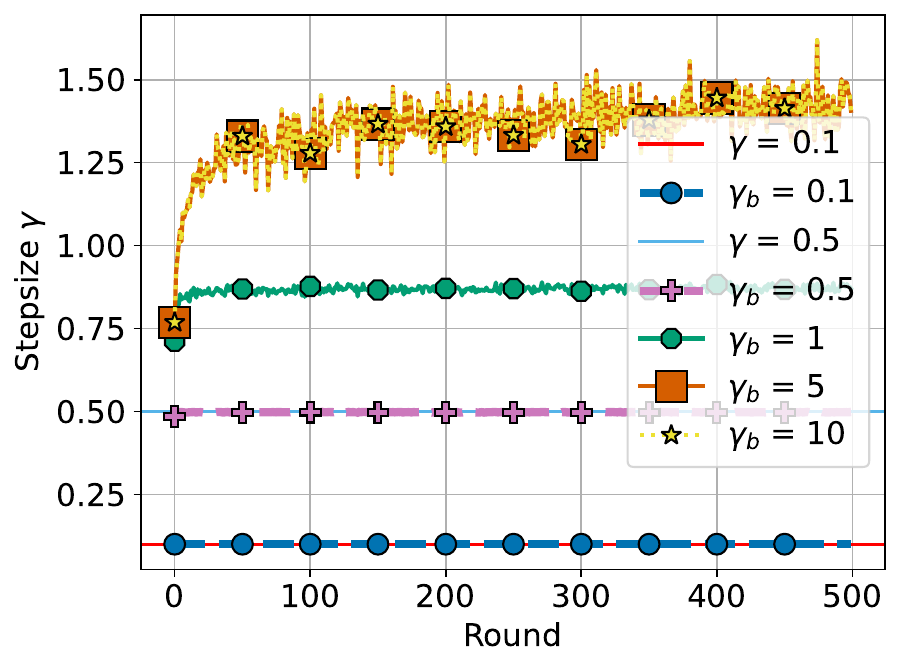} 
\label{fig:demo1}
\vspace{-1em}
\caption{Effect of varying $\gamma_b$.}
\end{subfigure} 
\begin{subfigure}{0.24\textwidth}
\includegraphics[scale=1, width=1\linewidth]{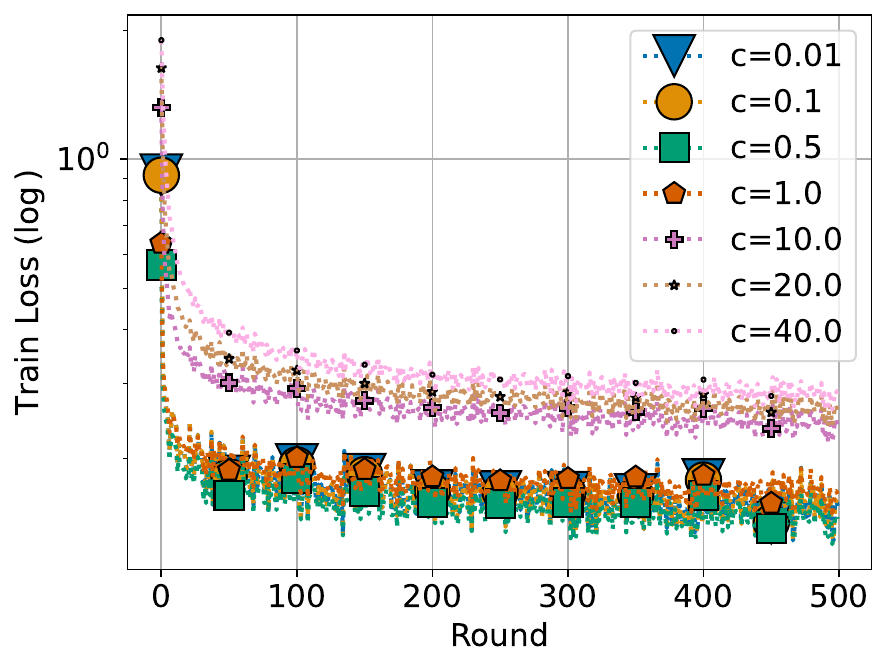}
\label{fig:demo2}
\vspace{-1em}
\caption{Effect of varying $c$.}
\end{subfigure}
\begin{subfigure}{0.24\textwidth}
\includegraphics[scale=1, width=1\linewidth]{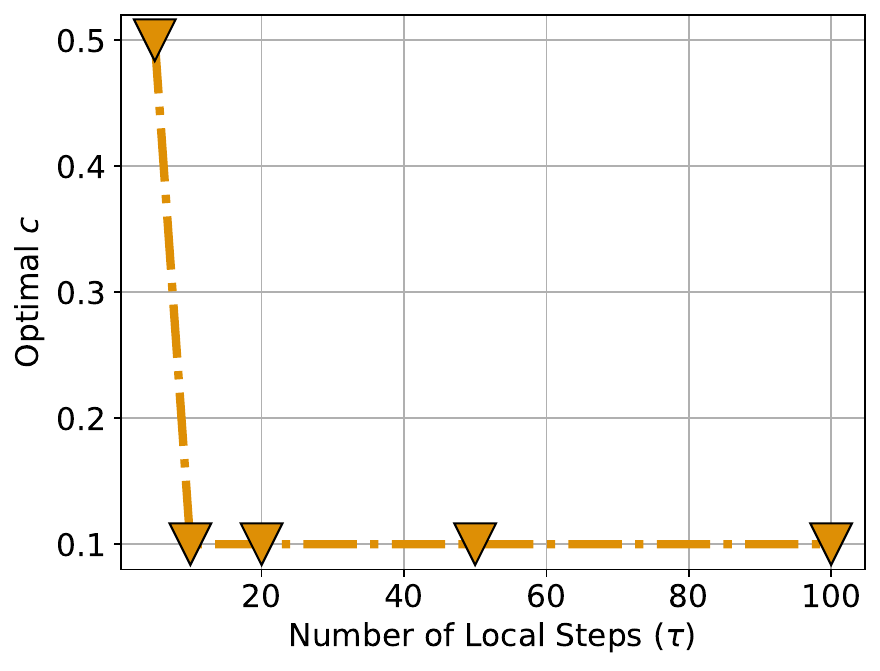}
\label{fig:demo2}
\vspace{-1em}
\caption{Optimal $c$ versus $\tau$.}
\end{subfigure}
\caption{Sensitivity analysis of FedSPS to hyperparameters for convex logistic regression on the MNIST dataset (i.i.d.) without client sampling. (a) Comparing the effect of varying $\gamma_b$ on FedSPS and varying $\gamma$ on FedAvg convergence---FedAvg is more sensitive to changes in $\gamma$, while FedSPS is insensitive changes in to $\gamma_b$. (b) Effect of varying $\gamma_b$ on FedSPS stepsize adaptivity---adaptivity is lost if $\gamma_b$ is chosen too small. (c) Small $c$ works well in practice ($\tau = 5$). (d) Optimal $c$ versus $\tau$, showing that there is no dependence.}
\label{fig:mnist_convex_ablation}
\end{figure}

\begin{figure}[t]
\centering
\begin{subfigure}{0.24\textwidth}
\includegraphics[scale=1, width=1\linewidth]{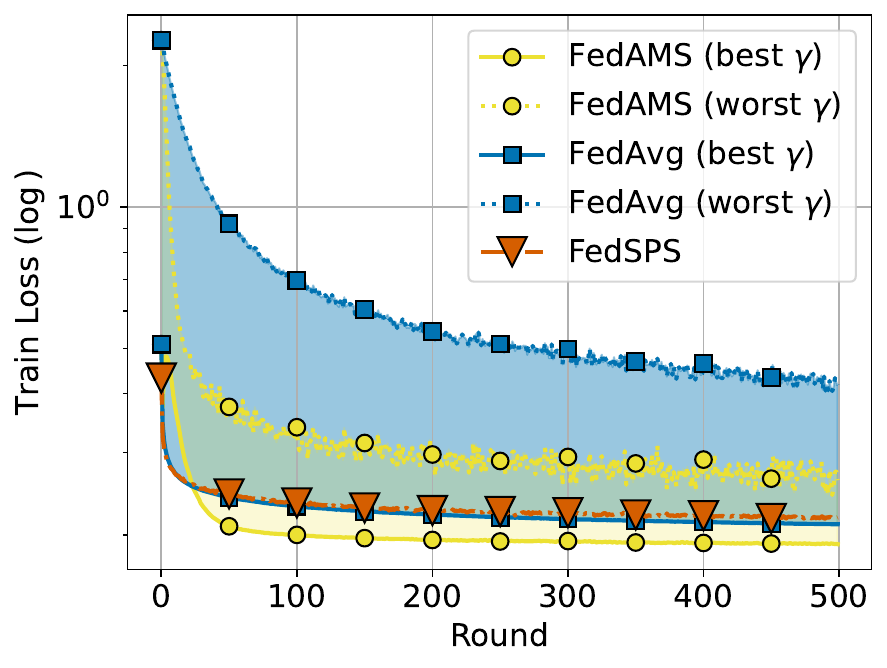} 
\label{fig:demo1}
\vspace{-1em}
\caption{Convex MNIST (i.i.d.).}
\end{subfigure} 
\begin{subfigure}{0.25\textwidth}
\includegraphics[scale=1, width=1\linewidth]{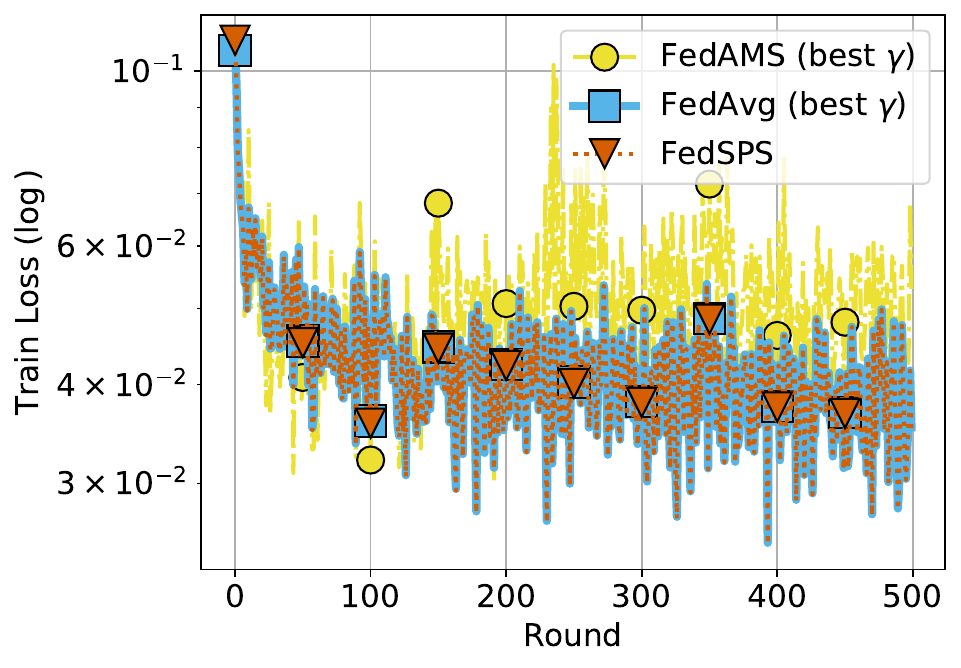}
\label{fig:demo2}
\vspace{-1em}
\caption{Convex w8a (i.i.d.).}
\end{subfigure}
\begin{subfigure}{0.24\textwidth}
\includegraphics[scale=1, width=1\linewidth]{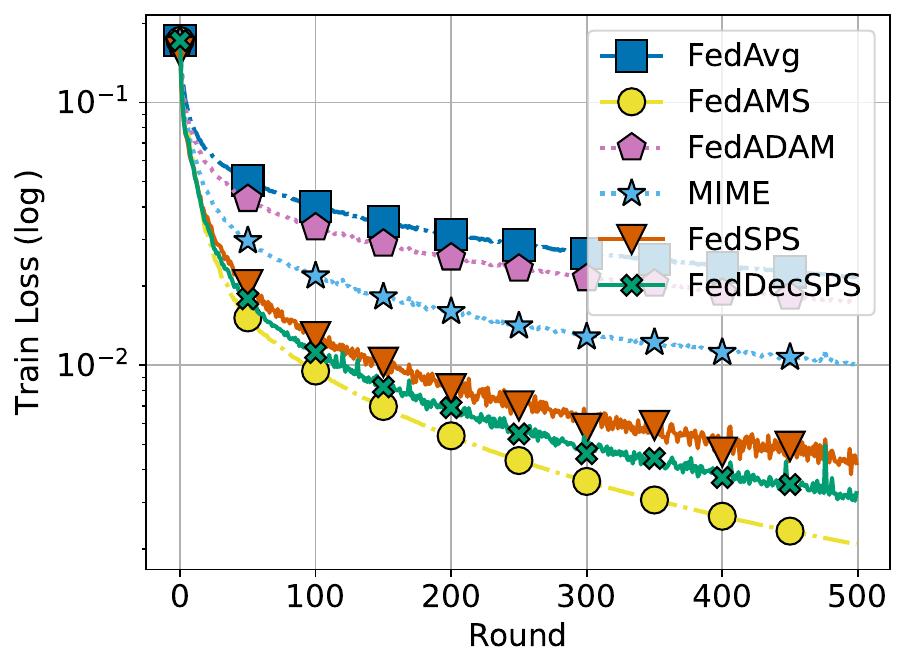}
\label{fig:demo2}
\vspace{-1em}
\caption{Convex (non-i.i.d.).}
\end{subfigure}
\begin{subfigure}{0.24\textwidth}
\includegraphics[scale=1, width=1\linewidth]{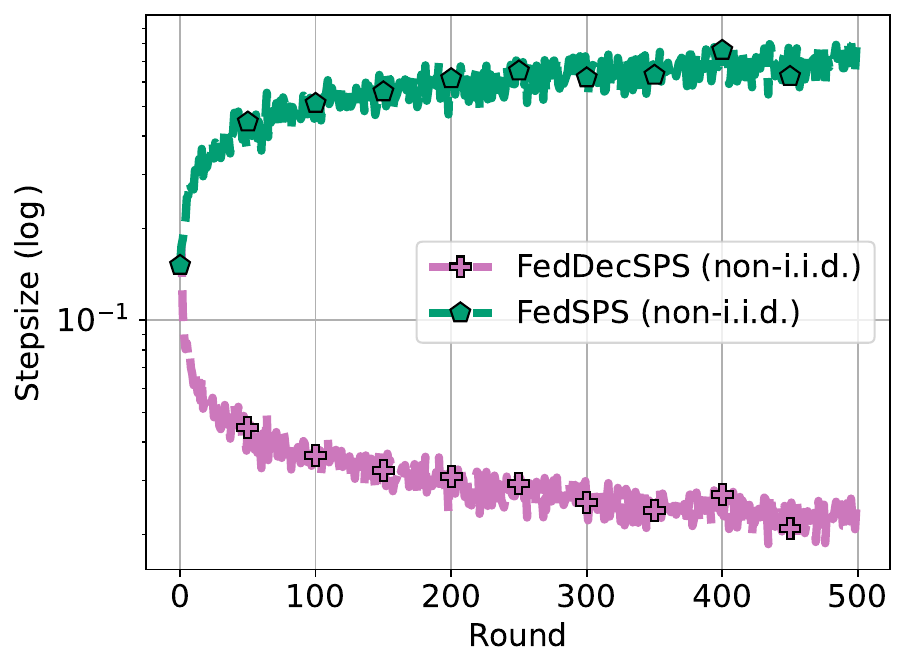}
\label{fig:demo2}
\vspace{-1em}
\caption{Stepsizes for (c).}
\end{subfigure}

\caption{Comparison for convex logistic regression. (a) MNIST dataset (i.i.d.\ without client sampling). (b) w8a dataset (i.i.d.\ with client sampling). (c) MNIST dataset (non-i.i.d.\ with client sampling). (d) Average stepsize across all clients for FedSPS and FedDecSPS corresponding to (c). Performance of FedSPS matches that of FedAvg with \emph{best tuned local learning rate} for the i.i.d.\ cases, and outperforms in the non-i.i.d.\ case.}
\label{fig:mnist_convex_nosampling}
\end{figure}

\begin{figure}[t]
\centering
\begin{subfigure}{0.48\textwidth}
\includegraphics[scale=1, width=0.48\linewidth]{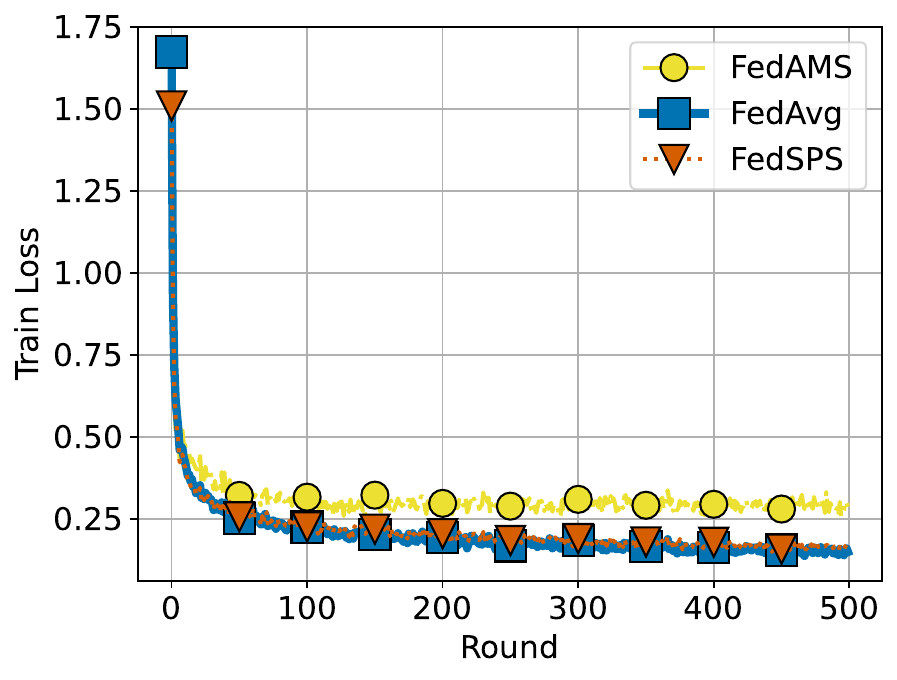}
\includegraphics[scale=1, width=0.48\linewidth]{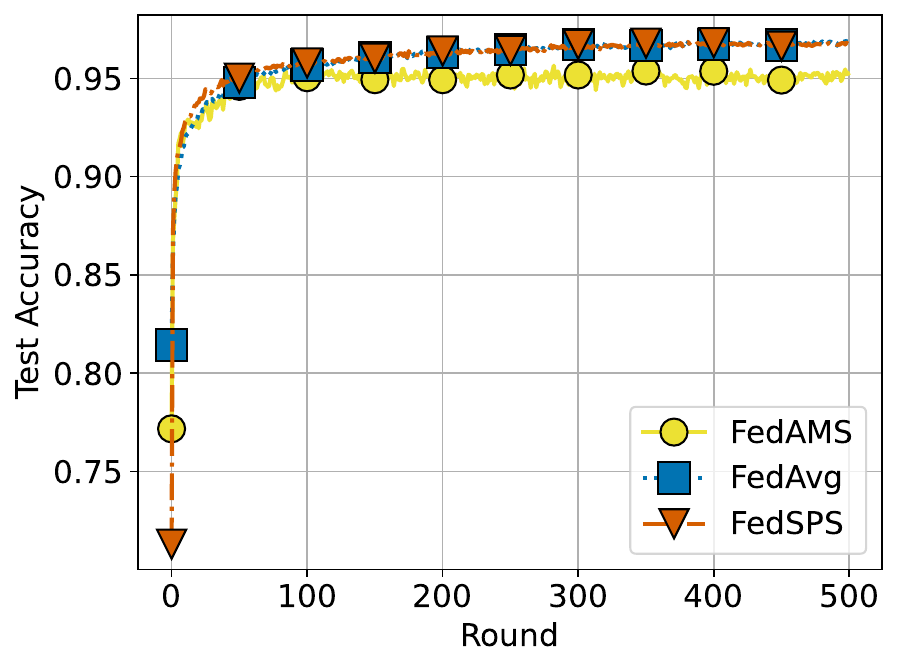}
\label{fig:demo1}
\caption{MNIST (i.i.d.).}
\end{subfigure}
\begin{subfigure}{0.48\textwidth}
\includegraphics[scale=1, width=0.48\linewidth]{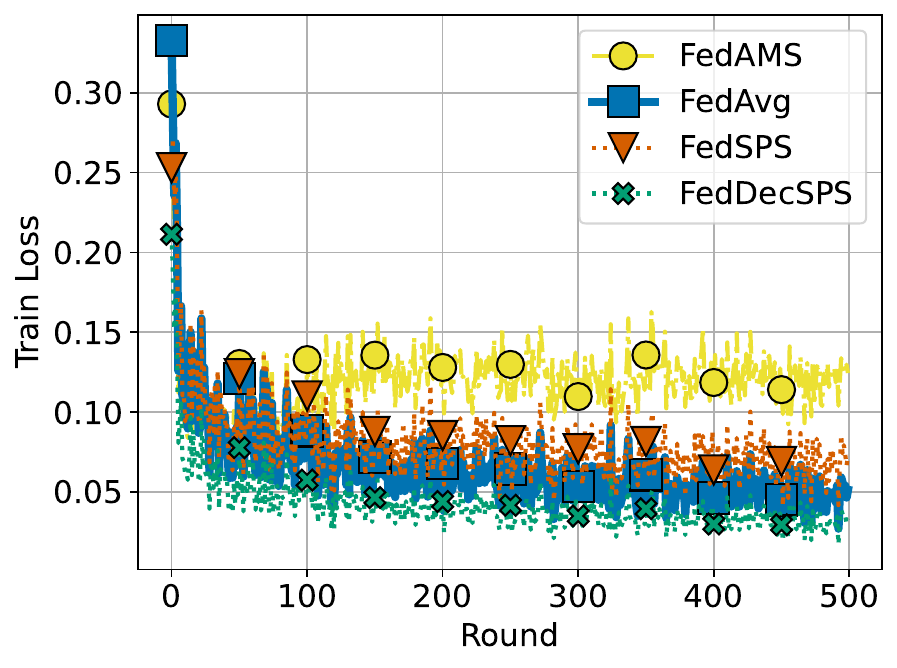}
\includegraphics[scale=1, width=0.48\linewidth]{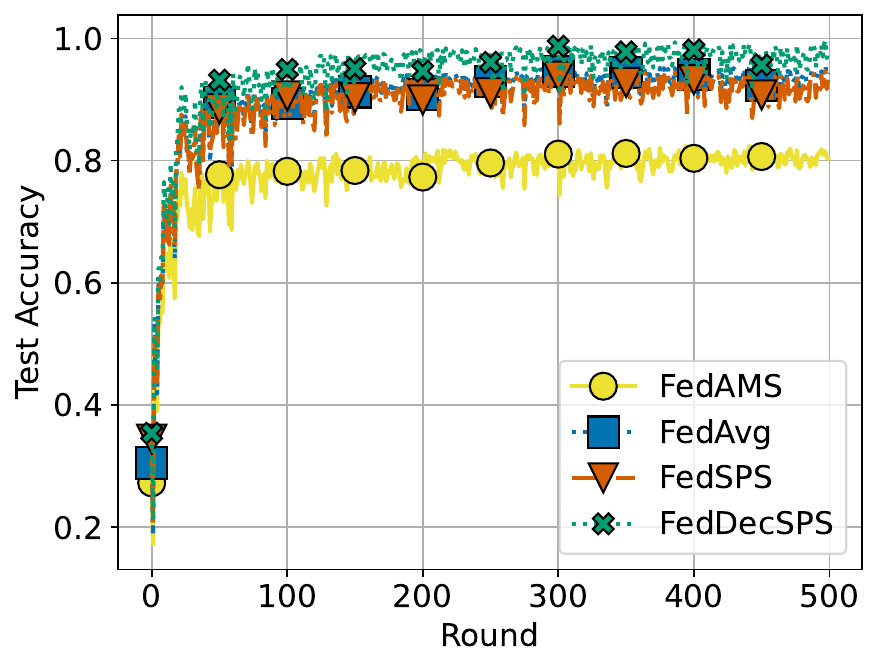}
\label{fig:demo1}
\caption{MNIST (non-i.i.d.).}
\end{subfigure}
\caption{Non-convex MNIST experiments with client sampling. (a) Non-convex case of LeNet on MNIST dataset (i.i.d.). (b) Non-convex case of LeNet on MNIST dataset (non-i.i.d.). First column represents training loss, second column is test accuracy. Convergence of FedSPS is very close to that of FedAvg with the \emph{best possible tuned local learning rate}. Moreover, FedSPS converges better than FedAMS for the non-convex MNIST case (both i.i.d.\ and non-i.i.d.), and also offers superior generalization performance than FedAMS. FedSPS is referred to as FedSPS-Local here in the legends, to distinguish it clearly from FedSPS-Global.}
\label{fig:plot_withsampling_mnistnc_and_convex}
\end{figure}

\begin{figure}[t]
\centering
\begin{subfigure}{0.48\textwidth}
\includegraphics[scale=1, width=0.48\linewidth]{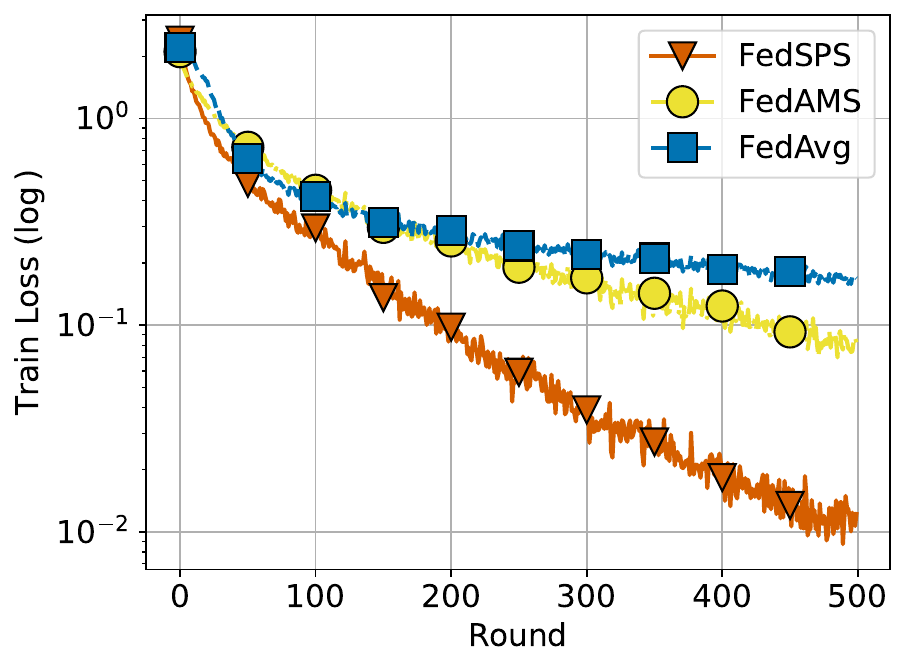}
\includegraphics[scale=1, width=0.48\linewidth]{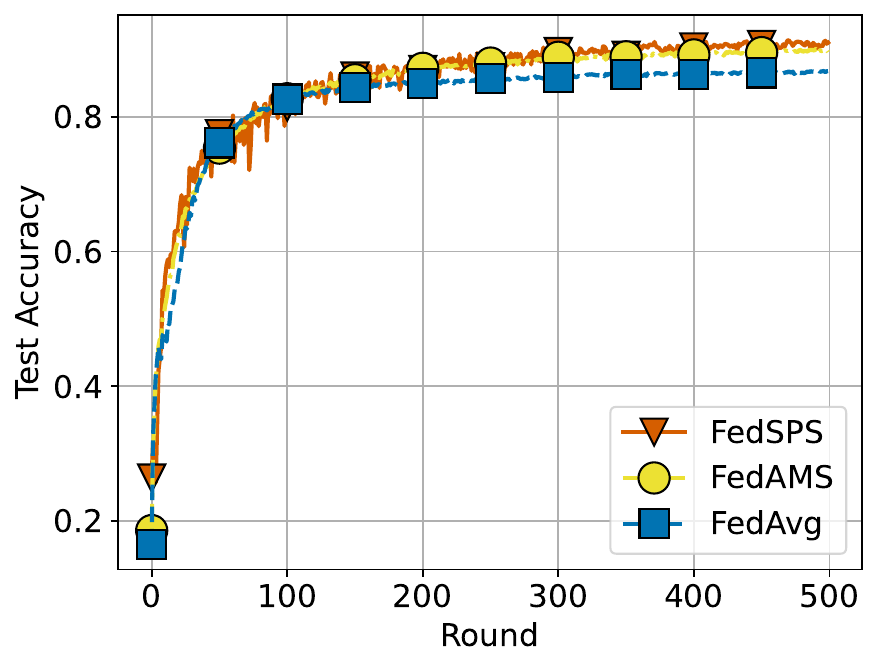}
\label{fig:demo1}
\caption{CIFAR-10 (i.i.d.).}
\end{subfigure}
\begin{subfigure}{0.48\textwidth}
\includegraphics[scale=1, width=0.48\linewidth]{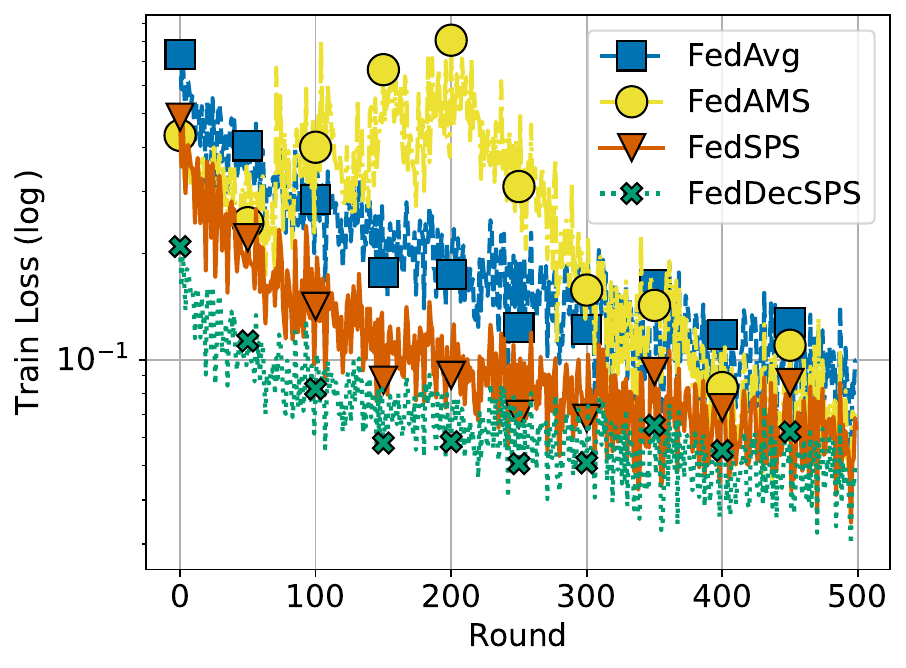}
\includegraphics[scale=1, width=0.48\linewidth]{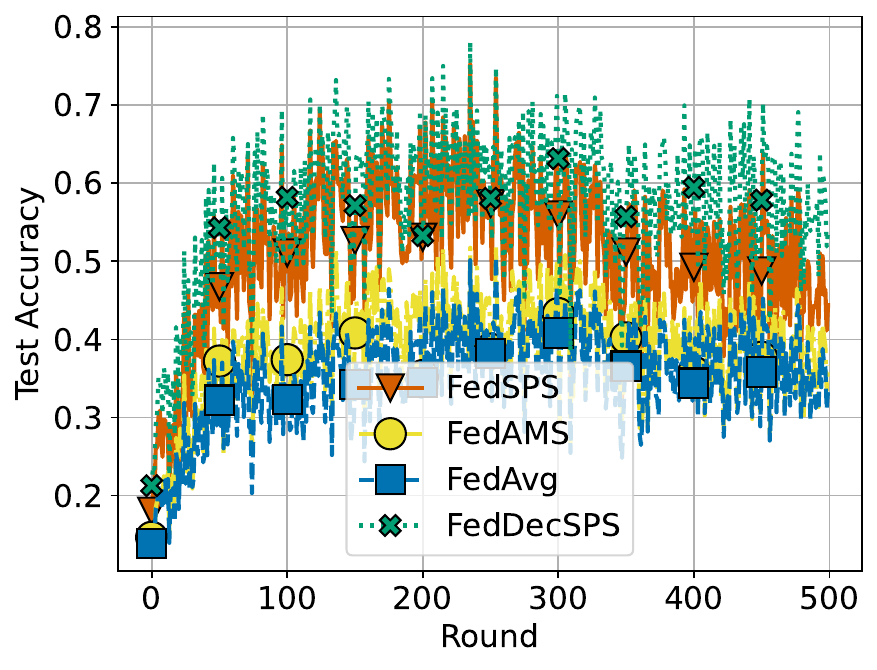}
\label{fig:demo1}
\caption{CIFAR-10 (non-i.i.d.).}
\end{subfigure}
\caption{Non-convex CIFAR-10 experiments with client sampling. (a) Non-convex case of ResNet18 on CIFAR-10 dataset (i.i.d.). (b) Non-convex case of ResNet18 on CIFAR-10 dataset (non-i.i.d.). First column represents training loss, second column is test accuracy. FedSPS converges better than FedAvg and FedAMS for both i.i.d.\ and non-i.i.d. settings, and also offers superior generalization performance.}
\label{fig:cifar_nonconvex_yessampling}
\end{figure}

\textbf{FedAvg and FedAMS.} We compare the performance of our methods---FedSPS and FedDecSPS against the FedAvg baseline, and the state-of-the-art adaptive federated algorithm FedAMS \cite{wang2022communication}. FedAvg and FedAMS need extensive tuning using grid search for client learning rate $\eta_l$, server learning rate $\eta$, as well as  max stabilization factor $\epsilon$, and $\beta_1$, $\beta_2$. We refer readers to Appendix \ref{section_fedams_hyparams} for details on the grid search performed and the optimal set of hyperparameters.

\textbf{Convex comparison.}
For the convex setting of logistic regression on MNIST dataset (i.i.d. setting), without client sampling, we now compare FedSPS with FedAvg and FedAMS in Figure \ref{fig:mnist_convex_nosampling}(a). We see that the convergence of FedSPS matches that of the \emph{best tuned FedAvg}. Note that while the \emph{best tuned FedAMS} slightly outperforms our method, it requires considerable tuning depicted by the large margin between best and worst learning rate performances. For additional convex experiments in the more practical setting with client sampling, we take the problem of binary classification of LIBSVM \cite{chang2011libsvm} datasets (w8a, mushrooms, ijcnn, phishing, a9a) with logistic regression model in the i.i.d.\ setting. We report the performance on w8a in Figure \ref{fig:mnist_convex_nosampling}(b), where FedSPS again converges similarly as tuned FedAvg, and better than FedAMS. We defer rest of the LIBSVM dataset plots to Appendix \ref{appendix:additional_expt}. In the non-i.i.d. case we compare our proposed FedSPS and FedDecSPS to the FedAvg baseline, adaptive federated methods FedAMS and FedADAM, as well as another state-of-the-art federated method MIME \citep{NEURIPS2021_f0e6be4c}. In this setting FedDecSPS does better than FedSPS, and our methods outperform the \emph{best tuned} FedAvg, FedADAM and MIME.

\textbf{Non-convex comparison.}
For non-convex experiments, we look at multi-class classification of  MNIST dataset using LeNet architecture \citep{lecun1998gradient} and CIFAR-10 dataset using ResNet18 architecture \citep{he2016deep} in the i.i.d.\ as well as non-i.i.d.\ setting (with client sampling), in Figures \ref{fig:plot_withsampling_mnistnc_and_convex} and \ref{fig:cifar_nonconvex_yessampling}. For the upper bound on stepsizes, we use the \emph{smoothing} technique for rest of the experiments, as suggested by \citet{loizou2021stochastic} for avoiding sudden fluctuations in the stepsize. For a client $i \in [n]$ and iteration $t$, the adaptive iteration-dependent upper bound is given by $\gamma_{b, t}^i = 2^{|\cB|/m_i} \gamma_{b, t-1}^i$, where $|\cB|$ is the batch-size, $m_i$ is the number of data examples on that client and we fix $\gamma_{b,0} = 1$. In Figure \ref{fig:plot_withsampling_mnistnc_and_convex} (MNIST), we find that FedSPS and FedSPS-Global converge almost identically, and their convergence is also very close to that of FedAvg with the best possible tuned local learning rate, while outperforming FedAMS. In Figure \ref{fig:cifar_nonconvex_yessampling} (CIFAR-10), FedSPS and FedDecSPS outperform tuned FedAvg and FedAMS in terms of both training loss and test accuracy.

\section{Conclusion}\label{conclusion}
In this paper, we show that locally adaptive federated optimization can lead to faster convergence by harnessing the geometric information of local objective functions. This is especially beneficial in the interpolating setting, which arises commonly for overparameterized deep learning problems. We propose a locally adaptive federated optimization algorithm FedSPS, by incorporating the stochastic Polyak stepsize in local steps, and prove sublinear and linear convergence to a neighbourhood for convex and strongly convex cases, respectively. We further extend our method to the decreasing stepsize version FedDecSPS, that enables exact convergence even in practical non-interpolating FL settings without compromising adaptivity. We perform relevant illustrative experiments to show that our proposed method is relatively insensitive to the hyperparameters involved, thereby requiring less tuning compared to other state-of-the-art federated algorithms. Moreover, our methods perform as good or better than tuned FedAvg and FedAMS for convex as well as non-convex experiments in i.i.d. and non-i.i.d. settings.

\section*{Acknowledgments}
This work was supported by the Helmholtz Association's Initiative and Networking Fund on the HAICORE@FZJ partition. Nicolas Loizou acknowledges support from CISCO Research.

\bibliographystyle{plain}
\bibliography{Bibliography}
\newpage
\appendix 
\part*{Appendix}\label{appendix}
The Appendix is organized as follows. We begin by introducing some general definitions and inequalities used throughout the proofs, in Section \ref{section:tech_prelim}. Proofs for convergence analysis of FedSPS are provided in Section \ref{section:apx:all_fedsps_proofs}---including convex and strongly convex cases. Section \ref{apx:section_addn_fedsps_details} provides some additional theoretical details for FedSPS. Alternative design choices for FedSPS, namely FedSPS-Normalized and FedSPS-Global are described in Section \ref{section:design_choices}. We provide some theoretical results for FedSPS in the non-convex setting in Section \ref{apx:sectionnonconvex}. Finally, additional experiments for FedSPS and FedSPS-Global in convex settings are provided in Section~\ref{appendix:additional_expt}.

\renewcommand{\contentsname}{Contents of Appendix}
\tableofcontents
\addtocontents{toc}{\protect\setcounter{tocdepth}{3}} 

\section{Technical preliminaries}\label{section:tech_prelim}

Let us present some basic definitions and inequalities used in the proofs throughout the appendix.

\subsection{General definitions}
\label{BasicDefinitions}

\begin{definition}[Convexity]
The function $f : \R^d \rightarrow \R$, is convex, if for all $\xx, \yy \in \R^d$
\begin{equation}
\label{eq:convexity_defn}
 f(\xx) \geq f(\yy)+ \dotprod{\nabla f(\yy) , \xx-\yy}\\.
\end{equation}
\end{definition}

\begin{definition}[$L$-smooth]
The function $f : \R^d \rightarrow \R$, is $L$-smooth, if there exists a constant
$L > 0$ such that for all $\xx, \yy \in \R^d$
\begin{equation}
\|\nabla f(\xx)-\nabla f(\yy)\| \leq L \|\xx-\yy\|\\,
\label{eq:lsmoothdefn1}
\end{equation}
or equivalently \citep[Lemma 1.2.3]{nesterov2003introductory}
\begin{equation}
 f(\xx) \leq f(\yy)+ \dotprod{\nabla f(\yy) , \xx-\yy} + \frac{L}{2} \norm{\xx-\yy}^2\\.
 \label{eq:lsmoothdefn2}
\end{equation}
\end{definition}

\begin{lemma}[\cite{li2019convergence_orabona}, Lemma 4]\label{lemma:lsmth_star}
For a $L$-smooth function $f : \R^d \rightarrow \R$, having an optima at $\xx^{\star} \in \R^d$, we have for any $\xx \in \R^d$
\begin{align}
    \norm{\nabla f(\xx) - \nabla f(\xx^{\star})}^2 \leq 2L \left( f(\xx) - f(\xx^\star) \right) \,.
\end{align}
\end{lemma}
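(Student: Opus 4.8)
The plan is to prove this as the standard ``gradient-to-suboptimality'' bound for smooth functions, which in fact does not require convexity at all, only that $\xx^\star$ is a global minimizer of $f$. Since $\xx^\star$ minimizes $f$, we have $\nabla f(\xx^\star) = \0$ and $f(\xx^\star) = f^\star$, so the left-hand side equals $\norm{\nabla f(\xx)}^2$ and the claim reduces to showing $\norm{\nabla f(\xx)}^2 \leq 2L\left(f(\xx) - f(\xx^\star)\right)$.

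First I would apply the descent-lemma form of $L$-smoothness, equation~\eqref{eq:lsmoothdefn2}, at the single gradient-step point $\yy := \xx - \tfrac{1}{L}\nabla f(\xx)$. Substituting $\yy - \xx = -\tfrac{1}{L}\nabla f(\xx)$ makes the first-order term $\dotprod{\nabla f(\xx), \yy - \xx}$ equal to $-\tfrac{1}{L}\norm{\nabla f(\xx)}^2$ and the quadratic term $\tfrac{L}{2}\norm{\yy-\xx}^2$ equal to $\tfrac{1}{2L}\norm{\nabla f(\xx)}^2$; these combine to give $f(\yy) \leq f(\xx) - \tfrac{1}{2L}\norm{\nabla f(\xx)}^2$. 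Next I would invoke the global optimality of $\xx^\star$, namely $f(\yy) \geq f(\xx^\star)$, so that $f(\xx^\star) \leq f(\xx) - \tfrac{1}{2L}\norm{\nabla f(\xx)}^2$. Rearranging and multiplying through by $2L$ yields $\norm{\nabla f(\xx)}^2 \leq 2L\left(f(\xx)-f(\xx^\star)\right)$, and rewriting $\nabla f(\xx) = \nabla f(\xx) - \nabla f(\xx^\star)$ (valid since $\nabla f(\xx^\star) = \0$) recovers the stated inequality exactly.

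There is no serious obstacle here — the entire argument is a one-line application of the descent lemma evaluated at the gradient step. The only point requiring care is the choice of evaluation point: the step length $\tfrac{1}{L}$ is precisely what makes the linear and quadratic contributions collapse to the coefficient $\tfrac{1}{2L}$, so any other step length would give a strictly weaker constant. It is worth emphasizing in the write-up that convexity (Assumption~\ref{asm:convex}) is never used; smoothness of $f$ together with global minimality of $\xx^\star$ suffices, which is why the lemma applies to each $f_i$ and to $f$ as soon as Assumption~\ref{asm:lsmooth} holds.
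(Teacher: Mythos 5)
Your proof is correct and is the canonical argument: the paper itself does not prove this lemma but imports it by citation (Lemma 4 of the cited reference), and that source's proof is exactly your descent-lemma-at-the-gradient-step argument, using $\nabla f(\xx^\star)=\0$ and global minimality of $\xx^\star$ with no convexity. Nothing further is needed.
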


\begin{lemma}[\cite{orvieto2022dynamics}, Lemma B.2]\label{lemma:lsmoothineq}
    For a $L$-smooth and $\mu-$strongly convex function $f : \R^d \rightarrow \R$, the following bound holds 
	\begin{equation}\label{eq:sps_lsmooth_ineq}
	\frac{1}{2cL} \leq \frac{f(\xx) - f^\star}{c \norm{\nabla f(\xx)}^2} \leq  \frac{f(\xx) - \ell^\star}{c \norm{\nabla f(\xx)}^2} \leq \frac{1}{2c\mu} \,, 
	\end{equation}
	where $f^\star = \inf_{\xx}f(\xx)$, and $\ell^\star$ is a lower bound $\ell^\star \leq f^\star$.
\end{lemma}

\subsection{General inequalities}
\begin{lemma}\label{remark:mean}
	For arbitrary set of $n$ vectors $\{\aa_i\}_{i = 1}^n$, $\aa_i \in \R^d$
	\begin{equation}
	\frac{1}{n} \sum_{i = 1}^n \norm{\aa_i - \left(\frac{1}{n}\sum_{i=1}^n \aa_i \right)}^2 \leq \frac{1}{n} \sum_{i = 1}^n \norm{\aa_i}^2 \,. \label{eq:mean}
	\end{equation}
\end{lemma}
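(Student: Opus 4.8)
The plan is to prove this via the standard variance decomposition: the mean-centered second moment equals the raw second moment minus the squared norm of the mean, and the claimed inequality then follows by discarding a nonnegative term. Throughout, write $\bar\aa := \frac{1}{n}\sum_{i=1}^n \aa_i$ for the average of the vectors, so the left-hand side is $\frac{1}{n}\sum_{i=1}^n \norm{\aa_i - \bar\aa}^2$.

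First I would expand each squared norm on the left using $\norm{\aa_i - \bar\aa}^2 = \norm{\aa_i}^2 - 2\dotprod{\aa_i, \bar\aa} + \norm{\bar\aa}^2$, and then average over $i$. The only nontrivial step is simplifying the cross term: by linearity of the inner product,
\begin{align}
\frac{1}{n}\sum_{i=1}^n \dotprod{\aa_i, \bar\aa} = \dotprod{\frac{1}{n}\sum_{i=1}^n \aa_i, \bar\aa} = \dotprod{\bar\aa, \bar\aa} = \norm{\bar\aa}^2\,,
\end{align}
which uses precisely the definition of $\bar\aa$. Substituting this back gives the exact identity
\begin{align}
\frac{1}{n}\sum_{i=1}^n \norm{\aa_i - \bar\aa}^2 = \frac{1}{n}\sum_{i=1}^n \norm{\aa_i}^2 - \norm{\bar\aa}^2\,.
\end{align}

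The inequality \eqref{eq:mean} then follows immediately by dropping the term $-\norm{\bar\aa}^2 \leq 0$ on the right-hand side. There is no genuine obstacle here: the result is the elementary fact that variance about the empirical mean never exceeds the second moment about the origin, and the mean is the minimizer of $\ss \mapsto \frac{1}{n}\sum_i \norm{\aa_i - \ss}^2$. The only place to be careful is the bookkeeping in the cross term, ensuring the factor of $2$ cancels correctly (the $-2\norm{\bar\aa}^2$ from the cross term combines with the $+\norm{\bar\aa}^2$ from the last summand to yield $-\norm{\bar\aa}^2$).
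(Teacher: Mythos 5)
Your proof is correct. The paper itself states this lemma without proof (it is listed among the "general inequalities" in the technical preliminaries as a standard fact), and your argument — expanding $\norm{\aa_i - \bar\aa}^2$, collapsing the cross term via linearity to obtain the exact identity $\frac{1}{n}\sum_{i=1}^n \norm{\aa_i - \bar\aa}^2 = \frac{1}{n}\sum_{i=1}^n \norm{\aa_i}^2 - \norm{\bar\aa}^2$, and then dropping the nonnegative term $\norm{\bar\aa}^2$ — is precisely the canonical variance-decomposition argument one would supply; the bookkeeping of the factor of $2$ is handled correctly.
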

\begin{lemma}\label{remark:norm_of_sum}
	For arbitrary set of $n$ vectors $\{\aa_i\}_{i = 1}^n$, $\aa_i \in \R^d$
	\begin{equation}\label{eq:norm_of_sum}
	\norm{\sum_{i = 1}^n \aa_i}^2 \leq n \sum_{i = 1}^n \norm{\aa_i}^2 \,. 
	\end{equation}
\end{lemma}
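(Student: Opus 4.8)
The plan is to recognize Lemma \ref{remark:norm_of_sum} as a direct consequence of the convexity of the squared-norm map, or equivalently of the Cauchy--Schwarz inequality; there is no genuine obstacle here, so the only real decision is which of several one-line arguments to present most cleanly. I would favor the self-contained expansion argument because it requires no external lemma.

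First I would expand the squared norm as a double sum of inner products:
\begin{equation*}
\norm{\sum_{i=1}^n \aa_i}^2 = \dotprod{\sum_{i=1}^n \aa_i, \sum_{j=1}^n \aa_j} = \sum_{i=1}^n \sum_{j=1}^n \dotprod{\aa_i, \aa_j} \,.
\end{equation*}
Next I would bound each cross term by Young's inequality (the scalar AM--GM bound $\dotprod{\aa_i,\aa_j} \leq \tfrac12 \norm{\aa_i}^2 + \tfrac12 \norm{\aa_j}^2$), which gives
\begin{equation*}
\sum_{i=1}^n \sum_{j=1}^n \dotprod{\aa_i, \aa_j} \leq \frac{1}{2}\sum_{i=1}^n \sum_{j=1}^n \left( \norm{\aa_i}^2 + \norm{\aa_j}^2 \right) \,.
\end{equation*}
Finally, by symmetry each of the two double sums on the right equals $n \sum_{i=1}^n \norm{\aa_i}^2$ (summing a $j$-independent term over $j$, or an $i$-independent term over $i$, contributes a factor of $n$), so the right-hand side collapses to $n \sum_{i=1}^n \norm{\aa_i}^2$, which is exactly the claim.

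An equally short alternative I would mention is the Jensen route: since $\phi(\vv) = \norm{\vv}^2$ is convex, applying Jensen's inequality to the arithmetic mean yields $\norm{\tfrac1n \sum_i \aa_i}^2 \leq \tfrac1n \sum_i \norm{\aa_i}^2$, and multiplying through by $n^2$ recovers the stated bound. As noted, no step is delicate; the ``hard part'' is purely cosmetic, namely keeping the index bookkeeping in the double sum transparent so that the symmetric cancellation is visible. Since Lemma \ref{remark:mean} is the companion variance identity stated just above, I would present this proof in the same terse style.
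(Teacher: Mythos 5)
Your proof is correct. Note that the paper itself states this lemma without proof, treating it as a standard fact (it is Cauchy--Schwarz/Jensen applied to the arithmetic mean), so there is no paper argument to compare against; your self-contained expansion via $\lin{\aa_i,\aa_j} \leq \tfrac{1}{2}\norm{\aa_i}^2 + \tfrac{1}{2}\norm{\aa_j}^2$ (which follows directly from $0 \leq \norm{\aa_i - \aa_j}^2$, no Cauchy--Schwarz needed) and the symmetric collapse of the double sum is a perfectly valid way to fill the gap, as is your Jensen alternative.
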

\begin{lemma}\label{remark:scal_product}
	For given two vectors $\aa, \bb \in \R^d$
	\begin{align}\label{eq:scal_product}
	&2\lin{\aa, \bb} \leq \gamma \norm{\aa}^2 + \gamma^{-1}\norm{\bb}^2\,, & &\forall \gamma > 0 \,.
	\end{align}
\end{lemma}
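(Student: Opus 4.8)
The plan is to derive this weighted Young-type inequality directly from the non-negativity of a squared norm, i.e.\ by completing the square. Since $\gamma > 0$, both $\sqrt{\gamma}$ and $\gamma^{-1/2}$ are well-defined positive reals, so I would introduce the auxiliary vector $\sqrt{\gamma}\,\aa - \gamma^{-1/2}\,\bb \in \R^d$ and start from the trivial observation that $\norm{\sqrt{\gamma}\,\aa - \gamma^{-1/2}\,\bb}^2 \geq 0$.

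The key step is to expand this squared norm using the identity $\norm{\uu - \vv}^2 = \norm{\uu}^2 - 2\lin{\uu,\vv} + \norm{\vv}^2$ with $\uu = \sqrt{\gamma}\,\aa$ and $\vv = \gamma^{-1/2}\,\bb$. By bilinearity of the inner product, the cross term becomes $2\lin{\sqrt{\gamma}\,\aa,\ \gamma^{-1/2}\,\bb} = 2\lin{\aa,\bb}$, since the scalar factors $\sqrt{\gamma}$ and $\gamma^{-1/2}$ pull out and multiply to one, while the diagonal terms become $\gamma\norm{\aa}^2$ and $\gamma^{-1}\norm{\bb}^2$. Non-negativity of the expanded expression then reads $\gamma\norm{\aa}^2 - 2\lin{\aa,\bb} + \gamma^{-1}\norm{\bb}^2 \geq 0$, and rearranging yields exactly \eqref{eq:scal_product}.

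There is essentially no obstacle here; the inequality is elementary and requires no structural assumptions on the objective. The only points needing care are that $\gamma > 0$ is precisely what makes $\sqrt{\gamma}$ and $\gamma^{-1/2}$ real and positive (so the auxiliary vector is well-defined), and that bilinearity of $\lin{\cdot,\cdot}$ is what produces the clean cancellation of the scalar factors in the cross term. The bound holds for arbitrary $\aa,\bb \in \R^d$ and for every $\gamma > 0$.
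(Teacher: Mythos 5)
Your proof is correct: expanding $\norm{\sqrt{\gamma}\,\aa - \gamma^{-1/2}\,\bb}^2 \geq 0$ and rearranging is the canonical completing-the-square argument for this weighted Young inequality, and your handling of the scalar factors and the role of $\gamma > 0$ is exactly right. The paper states this lemma as a standard preliminary without proof, so there is no alternative argument to compare against; your derivation is precisely the proof one would supply.
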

\begin{lemma}\label{remark:norm_of_sum_of_two}
	For given two vectors $\aa, \bb \in \R^d$ %
	\begin{align}\label{eq:norm_of_sum_of_two}
	\norm{\aa + \bb}^2 \leq (1 + \lambda)\norm{\aa}^2 + (1 + \lambda^{-1})\norm{\bb}^2,\,\, & &\forall \lambda > 0\,.
	\end{align}
\end{lemma}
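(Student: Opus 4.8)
The plan is to expand the squared norm and then absorb the resulting cross term using the elementary bound already recorded in Lemma~\ref{remark:scal_product}. First I would write, using bilinearity of the inner product, $\norm{\aa+\bb}^2 = \norm{\aa}^2 + 2\lin{\aa,\bb} + \norm{\bb}^2$, which holds in any inner-product space. The two squared-norm terms are exactly what appears (with unit coefficient) on the right-hand side of the claim, so the entire task reduces to controlling the single cross term $2\lin{\aa,\bb}$ by the two squared norms, at the cost of the free parameter $\lambda$.

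Next I would apply the inequality \eqref{eq:scal_product} from Lemma~\ref{remark:scal_product} with the particular choice $\gamma = \lambda$, which gives $2\lin{\aa,\bb} \leq \lambda\norm{\aa}^2 + \lambda^{-1}\norm{\bb}^2$. Substituting this into the expansion above and collecting the coefficients of $\norm{\aa}^2$ and $\norm{\bb}^2$ yields $\norm{\aa+\bb}^2 \leq (1+\lambda)\norm{\aa}^2 + (1+\lambda^{-1})\norm{\bb}^2$, which is precisely the assertion. If one prefers a self-contained derivation that does not cite Lemma~\ref{remark:scal_product}, the same bound follows by expanding the manifestly nonnegative quantity $\norm{\sqrt{\lambda}\,\aa - \lambda^{-1/2}\bb}^2 \geq 0$ and rearranging to isolate $2\lin{\aa,\bb}$.

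There is essentially no obstacle here: the statement is the standard Young (Peter--Paul) inequality, valid for every $\lambda>0$, and the proof is a one-line application of the Cauchy--Schwarz-type bound already available. The only bookkeeping point is to match the free parameter $\gamma$ of Lemma~\ref{remark:scal_product} to the parameter $\lambda$ of the present statement, after which the two squared-norm coefficients assemble cleanly into $1+\lambda$ and $1+\lambda^{-1}$.
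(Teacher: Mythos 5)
Your proof is correct: expanding $\norm{\aa+\bb}^2 = \norm{\aa}^2 + 2\lin{\aa,\bb} + \norm{\bb}^2$ and invoking \eqref{eq:scal_product} with $\gamma = \lambda$ gives exactly the claimed bound, and your self-contained alternative via $\norm{\sqrt{\lambda}\,\aa - \lambda^{-1/2}\bb}^2 \geq 0$ is equally valid. The paper states this lemma without proof (it is listed among the standard inequalities in the technical preliminaries), and your one-line argument is precisely the canonical derivation the paper implicitly relies on, so there is nothing to reconcile.
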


\begin{lemma}\label{remark:variance_ineq}
For any random variable $X$
\begin{align}\label{eq:variance_ineq}
    \E \norm{X - \Eb{X}}^2 \leq \E \norm{X}^2\,. 
\end{align}
\end{lemma}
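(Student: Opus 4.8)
The plan is to use the standard bias–variance decomposition: the variance of $X$ equals its second moment minus the squared norm of its mean, and since the latter quantity is nonnegative, the claimed bound follows immediately. Because $X$ takes values in $\R^d$, I would read $\norm{\cdot}$ as the Euclidean norm and work with the associated inner product $\lin{\cdot,\cdot}$.

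First I would abbreviate $m := \Eb{X}$, which is a \emph{deterministic} vector, and expand the squared norm inside the expectation by bilinearity of the inner product:
\begin{align*}
\norm{X - m}^2 = \norm{X}^2 - 2\lin{X, m} + \norm{m}^2 \,.
\end{align*}
Next I would apply linearity of expectation term by term, pulling the constant vector $m$ out of the inner product and using $\Eb{X} = m$ together with $\E \norm{m}^2 = \norm{m}^2$:
\begin{align*}
\E \norm{X - m}^2 = \E\norm{X}^2 - 2\lin{\Eb{X}, m} + \norm{m}^2 = \E\norm{X}^2 - \norm{m}^2 \,.
\end{align*}

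Finally, since $\norm{m}^2 = \norm{\Eb{X}}^2 \geq 0$, discarding this nonnegative term yields $\E\norm{X - \Eb{X}}^2 \leq \E\norm{X}^2$, which is exactly \eqref{eq:variance_ineq}. There is no substantive obstacle here; the only point demanding the slightest care is that $X$ is vector-valued, so one must use the inner-product expansion rather than a scalar square, and observe that $\Eb{X}$ is a constant and therefore factors cleanly out of the expectation in the cross term.
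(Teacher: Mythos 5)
Your proof is correct and complete: the expansion $\norm{X - \Eb{X}}^2 = \norm{X}^2 - 2\lin{X, \Eb{X}} + \norm{\Eb{X}}^2$, followed by taking expectations and discarding the nonnegative term $\norm{\Eb{X}}^2$, is exactly the standard bias--variance argument. The paper states this lemma without proof as a general inequality, and your argument is precisely the canonical justification it implicitly relies on, including the correct handling of the vector-valued case via the inner product.
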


\section{Convergence analysis of FedSPS}\label{section:apx:all_fedsps_proofs}

\subsection{FedSPS inequalities}

\begin{lemma}[Upper and lower bounds on FedSPS stepsizes]\label{lemma:sps_main_ineq}
    Under the assumption that each $F_i$ is $L$-smooth (Assumption \ref{asm:lsmooth}), the stepsizes of FedSPS follow for all rounds $t \in [T-1] $
	\begin{equation}\label{eq:sps_main_ineq}
	\alpha \leq \gamma_{t} \leq \gamma_b \,,
	\end{equation}
	where $\alpha = \min \left\{\frac{1}{2cL}, \gamma_b \right\}$.
\end{lemma}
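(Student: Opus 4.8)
The plan is to prove the two inequalities separately, and both follow almost directly from the defining formula \eqref{eq:fedspslocal_formula} of the FedSPS stepsize combined with the $L$-smoothness of each stochastic function (Assumption~\ref{asm:lsmooth}). The upper bound is immediate: since $\gamma_t^i$ is defined as a minimum of the Polyak ratio and $\gamma_b$, we trivially have $\gamma_t^i \leq \gamma_b$ for every client $i$ and every round $t$.

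For the lower bound I would fix a client $i$ and a round $t$, abbreviate $F := F_i(\cdot, \xi_t^i)$, and lower bound the Polyak ratio $\frac{F(\xx_t^i) - \ell_i^\star}{c \norm{\nabla F(\xx_t^i)}^2}$. The key tool is Lemma~\ref{lemma:lsmoothineq}, whose left inequality requires only $L$-smoothness (which holds for $F$ by Assumption~\ref{asm:lsmooth}) and directly gives $\frac{F(\xx_t^i) - \ell_i^\star}{c \norm{\nabla F(\xx_t^i)}^2} \geq \frac{1}{2cL}$. Taking the minimum with $\gamma_b$ then yields $\gamma_t^i \geq \min\bigl\{\frac{1}{2cL}, \gamma_b\bigr\} = \alpha$. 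Since $i$ and $t$ were arbitrary, this establishes $\alpha \leq \gamma_t^i \leq \gamma_b$ for all stepsizes, as claimed.

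The main subtlety worth handling carefully is the role of the infima. The quantity $\ell_i^\star$ is a lower bound on $F_i^\star = \inf_{\xi, \xx} F_i(\xx,\xi)$, the infimum over both the sample and $\xx$, whereas the underlying smoothness inequality $\norm{\nabla F(\xx)}^2 \leq 2L\bigl(F(\xx) - \inf_\yy F(\yy)\bigr)$ (Lemma~\ref{lemma:lsmth_star}, using that the gradient vanishes at a minimizer) naturally involves the per-sample infimum $F_{i,\xi}^\star := \inf_\xx F_i(\xx, \xi)$. I would resolve this by the chain $\ell_i^\star \leq F_i^\star \leq F_{i,\xi}^\star$, so that $F(\xx_t^i) - \ell_i^\star \geq F(\xx_t^i) - F_{i,\xi}^\star \geq \frac{1}{2L}\norm{\nabla F(\xx_t^i)}^2$, which is exactly what the left part of Lemma~\ref{lemma:lsmoothineq} packages. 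A final minor point is the degenerate case $\norm{\nabla F(\xx_t^i)} = 0$: then the Polyak ratio is $+\infty$ and the minimum selects $\gamma_b \geq \alpha$, so the bound holds trivially. The argument is short overall; the only real care needed is the chain of infima just described.
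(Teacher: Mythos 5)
Your proof is correct and follows essentially the same route as the paper: the paper's proof is exactly "plug the FedSPS stepsize definition into Lemma~\ref{lemma:lsmoothineq}," whose left inequality needs only $L$-smoothness, and then the minimum with $\gamma_b$ gives both bounds. Your extra care with the chain of infima $\ell_i^\star \leq F_i^\star \leq \inf_{\xx} F_i(\xx,\xi_t^i)$ and the zero-gradient case is a welcome (and correct) elaboration of details the paper leaves implicit, not a different argument.
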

\begin{proof}
    This lemma is easily obtained by plugging in the definition of FedSPS stepsizes into \eqref{eq:sps_lsmooth_ineq}.
\end{proof}

\begin{lemma}[]\label{lemma:sps_main_ineq_2}
FedSPS stepsizes $\gamma_t^i$ follow the following fundamental inequality
\begin{align}
    \norm{\gamma_t^i \gg_t^i}^2 \leq \frac{\gamma_t^i}{c} [F_i(\xx_t^i,\xi_t^i) - \ell_i^\star]
    \label{eq:prop1}
\end{align}
\end{lemma}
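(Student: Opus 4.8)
The plan is to read the claimed bound off the defining formula \eqref{eq:fedspslocal_formula} directly; this is essentially a one-line consequence of the fact that $\gamma_t^i$ is defined as a minimum. First I would assume $\norm{\gg_t^i} > 0$, deferring the degenerate case to the end. Since
\begin{align}
\gamma_t^i = \min\left\{ \frac{F_i(\xx_t^i, \xi_t^i) - \ell_i^\star}{c \norm{\gg_t^i}^2}, \gamma_b  \right\}\,,
\end{align}
the value $\gamma_t^i$ is in particular no larger than the first argument of the $\min$, i.e.\ $\gamma_t^i \leq \frac{F_i(\xx_t^i, \xi_t^i) - \ell_i^\star}{c \norm{\gg_t^i}^2}$. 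Crucially, this upper bound holds no matter which branch of the $\min$ is active, so no case distinction between the clipped and unclipped regimes is needed.

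Next I would clear the (strictly positive) denominator to get $c\, \gamma_t^i \norm{\gg_t^i}^2 \leq F_i(\xx_t^i, \xi_t^i) - \ell_i^\star$, and then multiply both sides by the nonnegative factor $\gamma_t^i / c$. This produces
\begin{align}
\norm{\gamma_t^i \gg_t^i}^2 = (\gamma_t^i)^2 \norm{\gg_t^i}^2 \leq \frac{\gamma_t^i}{c}\left( F_i(\xx_t^i, \xi_t^i) - \ell_i^\star \right)\,,
\end{align}
which is exactly \eqref{eq:prop1}. The identification of the left-hand side with $\norm{\gamma_t^i \gg_t^i}^2$ is immediate since $\gamma_t^i$ is a scalar.

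Finally I would dispose of the degenerate case $\norm{\gg_t^i} = 0$: here the $\min$ selects $\gamma_b$, the left-hand side $\norm{\gamma_t^i \gg_t^i}^2$ vanishes, and the right-hand side $\frac{\gamma_t^i}{c}(F_i(\xx_t^i, \xi_t^i) - \ell_i^\star)$ is nonnegative because $\gamma_t^i \geq 0$, $c > 0$, and $F_i(\xx_t^i, \xi_t^i) \geq F_i^\star \geq \ell_i^\star$, so the inequality holds trivially. There is no real obstacle in this lemma; the only point meriting a moment's attention is confirming that the key bound $\gamma_t^i \leq \frac{F_i(\xx_t^i, \xi_t^i) - \ell_i^\star}{c \norm{\gg_t^i}^2}$ remains valid even when the stepsize is clipped to $\gamma_b$, which it does precisely because $\gamma_t^i$ is the minimum of the two quantities.
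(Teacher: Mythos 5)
Your proof is correct and is essentially the same argument as the paper's: both bound the minimum defining $\gamma_t^i$ by its first argument and multiply through by the remaining factor of $\gamma_t^i/c$, which is exactly what the paper does in one displayed chain. Your explicit treatment of the degenerate case $\norm{\gg_t^i}=0$ is a small extra point of rigor the paper omits, but it does not change the nature of the argument.
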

\begin{proof}
    We observe that it holds from the definition of FedSPS stepsizes
    \begin{align*}
         \norm{\gamma_t^i \gg_t^i}^2 &= \gamma_t^i \cdot \min\left\{\frac{F(\xx_t^i,\xi_t^i) - \ell_i^\star}{c \norm{\nabla F(\xx_t^i,\xi_t^i)}^2}, \gamma_b \right\} \norm{ \nabla F_i(\xx_t^i,\xi_t^i)}^2\\
         &\leq \frac{\gamma_t^i}{c} [F_i(\xx_t^i,\xi_t^i) - \ell_i^\star]  \,. \qedhere
    \end{align*}
\end{proof}

\subsection{Proof for convex case of FedSPS}

\textbf{Proof outline and notation.}
Before we start with the proof, let us recall the notation:\ 
below we will frequently use $\gg_t^i$ to denote the random component sampled by worker $i$ at iteration $t$, so $\gg_t^i = \nabla F_i(\xx_t^i, \xi_t^i)$ and the stepsize $\gamma_t^i = \min\left\{ \frac{F_i(\xx_t^i,\xi_t^i) - \ell_i^\star} {c \norm{\nabla F_i(\xx_t^i, \xi_t^i)}^2}, \gamma_b \right\}$. We define the (possibly virtual) average iterate  $\bar \xx_t := \frac{1}{n}\sum_{i=1}^n \xx_t^i$. We follow the proof template for FedAvg (Local SGD) \cite{koloskova2020:unified, Stich19sgd}, deriving a difference lemma to bound $R_t:=\frac{1}{n}\sum_{i=1}^n  \norm{\bar \xx_t - \xx_t^i}^2$, and plugging it into the decrease $\norm{\bar \xx_{t+1} - \xx^\star}^2$. Our poof introduces differences at various points like in the difference lemma \eqref{eq:difference_lemma_intermediate}, and the decrease \eqref{eq:999}, to incorporate local adaptivity using properties of FedSPS stepsizes (Lemma \ref{lemma:sps_main_ineq} and Lemma \ref{lemma:sps_main_ineq_2}) and the finite optimal objective difference assumption ($\sigma_f^2 < \infty$).

\subsubsection{Difference Lemmas}

\begin{lemma}[Bound on difference $R_t$ for convex case with any stepsizes]
For $c \geq 2 \tau^2$, the variance of the iterates between the clients $R_t := \frac{1}{n}\sum_{i=1}^n  \norm{\bar \xx_t - \xx_t^i}^2$, is bounded as 
\begin{align}
    R_t \leq \frac{1}{2n \tau}\sum_{i=1}^n \sum_{j=(t-1)-k(t)}^{t-1} \gamma_j^i [F_i(\xx_j^i,\xi_j^i)- \ell_i^\star]\,,
\label{eq:2452_sps}
\end{align}
where $t-1-k(t)$ denotes the index of the last communication round ($k \leq \tau - 1$).
\label{lemma:convex_difference}
\end{lemma}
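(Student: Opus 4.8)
The plan is to reuse the standard consensus-error decomposition for Local SGD, but to close the bound using the self-bounding property of the FedSPS stepsizes (Lemma~\ref{lemma:sps_main_ineq_2}) rather than any uniform stepsize bound --- this is exactly what lets the uncoordinated, per-client stepsizes $\gamma_j^i$ be handled without ever comparing them across clients. Write $t_0 := (t-1)-k(t)$ for the index of the last communication round, so that by construction all clients agree there, $\xx_{t_0}^i = \bar\xx_{t_0}$ for every $i$, and no averaging occurs strictly between $t_0$ and $t$. Unrolling the local updates then gives $\xx_t^i = \bar\xx_{t_0} - \sum_{j=t_0}^{t-1}\gamma_j^i \gg_j^i$, and averaging over clients yields $\bar\xx_t - \xx_t^i = \aa_i - \frac{1}{n}\sum_{i'=1}^n \aa_{i'}$ with $\aa_i := \sum_{j=t_0}^{t-1}\gamma_j^i \gg_j^i$.

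First I would apply the variance inequality of Lemma~\ref{remark:mean} to obtain $R_t \le \frac{1}{n}\sum_{i=1}^n \norm{\aa_i}^2$, discarding the centering term. Next, since the inner sum defining $\aa_i$ runs over exactly $k(t)+1 \le \tau$ indices, Lemma~\ref{remark:norm_of_sum} (applied with the number of summands equal to $k(t)+1$) bounds $\norm{\aa_i}^2 \le \tau \sum_{j=t_0}^{t-1} \norm{\gamma_j^i \gg_j^i}^2$. At this point the decisive step is to invoke the FedSPS fundamental inequality $\norm{\gamma_j^i \gg_j^i}^2 \le \frac{\gamma_j^i}{c}\bigl[F_i(\xx_j^i,\xi_j^i)-\ell_i^\star\bigr]$ from Lemma~\ref{lemma:sps_main_ineq_2}, which replaces each squared step by a quantity of exactly the same shape as the target right-hand side. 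Chaining these estimates gives $R_t \le \frac{\tau}{nc}\sum_{i=1}^n \sum_{j=t_0}^{t-1} \gamma_j^i [F_i(\xx_j^i,\xi_j^i)-\ell_i^\star]$, and the hypothesis $c \ge 2\tau^2$ converts the prefactor $\tau/c$ into $1/(2\tau)$, yielding the claimed bound.

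The step I expect to need the most care is the bookkeeping around the last communication index $t_0$ and the count of local steps: one must verify that no averaging step intervenes on $(t_0, t]$, so that the telescoped update $\xx_t^i = \bar\xx_{t_0} - \sum_{j=t_0}^{t-1}\gamma_j^i \gg_j^i$ is valid, and that the number of summands is $k(t)+1 \le \tau$ so the factor produced by Lemma~\ref{remark:norm_of_sum} is at most $\tau$ (the boundary case $R_t = 0$ immediately after synchronization being trivial). The conceptually important point --- and the reason the argument departs from the classical equal-stepsize analysis --- is that every inequality above is applied \emph{per client and per iteration}, never comparing $\gamma_j^i$ across clients; the self-bounding Lemma~\ref{lemma:sps_main_ineq_2} is precisely what makes this local treatment possible, since it produces a bound proportional to $\gamma_j^i[F_i(\xx_j^i,\xi_j^i)-\ell_i^\star]$ that matches the desired right-hand side termwise.
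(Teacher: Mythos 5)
Your proposal is correct and follows essentially the same route as the paper's proof: centering via Lemma~\ref{remark:mean}, the norm-of-sum bound of Lemma~\ref{remark:norm_of_sum} with at most $\tau$ summands, the FedSPS self-bounding inequality of Lemma~\ref{lemma:sps_main_ineq_2} applied per client and per iteration, and finally $c \geq 2\tau^2$ to turn $\tau/c$ into $1/(2\tau)$. The only cosmetic difference is that you make the unrolling $\xx_t^i = \bar\xx_{t_0} - \sum_{j=t_0}^{t-1}\gamma_j^i \gg_j^i$ explicit, which the paper leaves implicit.
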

\begin{proof}
We use the property that $\bar \xx_t = \xx_t^i$ for every $t$ that is a multiple of $\tau$. Therefore, there exists a $k(t) \leq \tau - 1$ such that $R_{(t-1)-k(t)}= 0$. So we have, 
\begin{align*}
R_t = \frac{1}{n}\sum_{i=1}^n \norm{ \sum_{j=(t-1)-k(t)}^{t-1} \gamma_j^i \gg_j^i - \vv}^2 \stackrel{\eqref{eq:mean}}{\leq} \frac{1}{n}\sum_{i=1}^n \norm{ \sum_{j=(t-1)-k(t)}^{t-1} \gamma_j^i \gg_j^i}^2 \,,
\end{align*}
where $\vv := \frac{1}{n}\sum_{i=1}^n \sum_{j=(t-1)-k(t)}^{t-1} \gamma_j^i \gg_j^i$. The inequality follows by the fact the $\vv$ is the mean of the terms in the sum.
With the inequality $\sum_{i=1}^\tau \norm{\aa_i}^2 \leq \tau \sum_{i=1}^\tau \norm{\aa_i}^2$ for vectors $\aa_i \in \R^d$, and the property of the Polyak Stepsize we deduce:
\begin{align}
R_t &\stackrel{\eqref{eq:norm_of_sum}}{\leq} \frac{\tau}{n}\sum_{i=1}^n \sum_{j=(t-1)-k(t)}^{t-1}  \norm{ \gamma_j^i \gg_j^i}^2 \label{eq:difference_lemma_begin} \\
&\stackrel{\eqref{eq:prop1}}{\leq}  \frac{\tau}{nc}\sum_{i=1}^n \sum_{j=(t-1)-k(t)}^{t-1} \gamma_j^i [F_i(\xx_j^i,\xi_j^i)- \ell_i^\star] \label{eq:difference_lemma_intermediate}\\ 
&\leq   \frac{1}{2n \tau}\sum_{i=1}^n \sum_{j=(t-1)-k(t)}^{t-1} \gamma_j^i [F_i(\xx_j^i,\xi_j^i)- \ell_i^\star] \,, \notag
\end{align}
where we used the assumption $c \geq 2 \tau^2$ to obtain the last inequality.
\end{proof}

\begin{lemma}[Bound on difference $R_t$ for convex case with small stepsizes]
For $\gamma_b \leq \frac{1}{20L\tau}$, the variance of the iterates between the clients $R_t := \frac{1}{n}\sum_{i=1}^n  \norm{\bar \xx_t - \xx_t^i}^2$, is bounded as 
\begin{align}
 \E R_{t} \leq \frac{1}{3L \tau} \sum_{j=(t-1)-k(t)}^{t-1} \E [ f(\bar \xx_j) - f^\star] +  64 L \gamma_b^2\tau  \sum_{j=(t-1)-k(t)}^{t-1} \sigma_f^2\,, \label{eq:443}
\end{align}
where $t-1-k(t)$ denotes the index of the last communication round ($k \leq \tau - 1$).
\end{lemma}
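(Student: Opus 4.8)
The plan is to follow the opening of Lemma~\ref{lemma:convex_difference} but to trade the Polyak property \eqref{eq:prop1} for the uniform upper bound $\gamma_j^i \le \gamma_b$ from Lemma~\ref{lemma:sps_main_ineq}; in the small-stepsize regime $\gamma_b \le \frac{1}{20L\tau}$ we want an estimate in terms of the suboptimality of the \emph{averaged} iterate rather than of the local stochastic losses. Concretely, using $R_{(t-1)-k(t)}=0$ at the last synchronization, the mean inequality \eqref{eq:mean}, and $\norm{\sum\cdot}^2\le\tau\sum\norm{\cdot}^2$ from \eqref{eq:norm_of_sum}, together with $(\gamma_j^i)^2\le\gamma_b^2$, I would first reduce to
\begin{align*}
\E R_t \le \gamma_b^2\tau \sum_{j=(t-1)-k(t)}^{t-1} \frac{1}{n}\sum_{i=1}^n \E\norm{\gg_j^i}^2\,.
\end{align*}
Everything then hinges on a second-moment bound for the stochastic gradients that exposes $f(\bar\xx_j)-f^\star$ and $\sigma_f^2$.

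Next I would bound $\frac1n\sum_i\E\norm{\gg_j^i}^2$ by a double splitting. First split $\nabla F_i(\xx_j^i,\xi_j^i)$ around $\nabla F_i(\bar\xx_j,\xi_j^i)$: by $L$-smoothness \eqref{eq:F-smooth} that difference contributes $2L^2 R_j$. Then split $\nabla F_i(\bar\xx_j,\xi_j^i)$ around $\nabla F_i(\xx^\star,\xi_j^i)$; the smoothness-plus-convexity (Bregman) inequality applied to $F_i(\cdot,\xi)$, which generalizes Lemma~\ref{lemma:lsmth_star}, gives $\E_\xi\norm{\nabla F_i(\bar\xx_j,\xi)-\nabla F_i(\xx^\star,\xi)}^2\le 2L\bigl(f_i(\bar\xx_j)-f_i(\xx^\star)-\lin{\nabla f_i(\xx^\star),\bar\xx_j-\xx^\star}\bigr)$. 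The crucial point is that $\bar\xx_j$ is common to all clients, so averaging the inner-product term over $i$ yields $\lin{\nabla f(\xx^\star),\bar\xx_j-\xx^\star}=0$ and leaves exactly $8L\,\E[f(\bar\xx_j)-f^\star]$; the residual $\frac1n\sum_i\E\norm{\nabla F_i(\xx^\star,\xi_j^i)}^2=\zeta_\star^2+\sigma_\star^2$ is converted by Proposition~\ref{lem:heterogeinity_comparison} into at most $4L\sigma_f^2$. Collecting constants I expect
\begin{align*}
\frac1n\sum_{i=1}^n \E\norm{\gg_j^i}^2 \le 2L^2 R_j + 8L\,\E[f(\bar\xx_j)-f^\star] + 16L\sigma_f^2\,.
\end{align*}

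The main obstacle is that this display reintroduces $R_j$ on the right-hand side, so substituting it back produces a self-referential recursion
\begin{align*}
\E R_t \le 2L^2\gamma_b^2\tau\!\!\sum_{j=(t-1)-k(t)}^{t-1}\!\! R_j + 8L\gamma_b^2\tau\sum_{j}\E[f(\bar\xx_j)-f^\star] + 16L\gamma_b^2\tau\sum_{j}\sigma_f^2\,.
\end{align*}
I would resolve this by summing over the (at most $\tau$) indices of a single round: each $R_j$ then appears at most $\tau$ times, so the recursive coefficient becomes $2L^2\gamma_b^2\tau^2\le\frac{1}{200}$ under $\gamma_b\le\frac{1}{20L\tau}$, which is bounded away from $1$ and lets me absorb $\sum_j R_j$ to the left-hand side. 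Feeding the resulting bound on $\sum_j R_j$ back into the individual estimate makes the recursive contribution of order $(\gamma_b L\tau)^2$ smaller than the clean terms, hence negligible. Finally I would simplify using $\gamma_b\le\frac{1}{20L\tau}$: the suboptimality coefficient obeys $8L\gamma_b^2\tau\le\frac{2\gamma_b}{5}\le\frac{1}{50L\tau}\le\frac{1}{3L\tau}$, while the slack between the factor $16$ and the target $64$ absorbs both the residual recursion and the rounding, yielding \eqref{eq:443}.
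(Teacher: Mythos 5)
Your proposal is correct in substance but takes a genuinely different route from the paper's proof. The paper never creates the self-referential sum you have to fight with: it sets up a one-step recursion $R_{t+1} \le \left(1+\frac{1}{\tau}\right)R_t + \frac{2\tau}{n}\sum_{i=1}^n\norm{\gamma_t^i \gg_t^i - \vv}^2$, splits the gradient around $\bar \xx_t$, absorbs the resulting $4\tau\gamma_b^2L^2 R_t$ term into the contraction factor (yielding $(1+\frac{2}{\tau})R_t$), and unrolls over at most $\tau$ steps using $(1+2/\tau)^j \le 8$; this gives the intermediate bound \eqref{eq:442}, $R_t \le \frac{32\gamma_b^2\tau}{n}\sum_i\sum_j \norm{\nabla F_i(\bar \xx_j,\xi_j^i)}^2$, with no $R_j$ left on the right-hand side. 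It then uses the self-bounding property $\norm{\nabla F_i(\bar \xx_j,\xi)}^2 \le 2L[F_i(\bar \xx_j,\xi)-\ell_i^\star]$ and, in expectation, the decomposition $f_i(\bar \xx_j)-\ell_i^\star = (f_i(\bar \xx_j)-f_i(\xx^\star)) + (f_i(\xx^\star)-\ell_i^\star)$, which produces $f(\bar \xx_j)-f^\star$ and $\sigma_f^2$ directly. Two consequences of the difference: the paper's argument uses no convexity at all (only smoothness and the lower bounds $\ell_i^\star$), which is why \eqref{eq:442} is reused verbatim in the non-convex analysis of Theorem \ref{thm:non-convex}, whereas your route through co-coercivity and Proposition \ref{lem:heterogeinity_comparison} is tied to the convex case; on the other hand, your two-stage splitting is a standard and perfectly valid alternative, and your constants ($8L\gamma_b^2\tau$ and $16L\gamma_b^2\tau$ plus small corrections) do fit under the claimed $\frac{1}{3L\tau}$ and $64L\gamma_b^2\tau$.

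One step of your plan does not literally deliver \eqref{eq:443} as stated. When you resolve the recursion by summing over the round and substituting back, the correction you feed into the bound for $\E R_t$ involves $\sum_j \E[f(\bar \xx_j)-f^\star]$ taken over the \emph{entire} round, including future indices $j \ge t$; these terms are nonnegative, so they cannot be dropped, and you end up proving a bound whose right-hand side sums over the whole round rather than only over $j \le t-1$ as in \eqref{eq:443}. This weaker form would still suffice downstream (in the proof of Theorem \ref{thm:fedsps_local_convex_small} the lemma is summed over $t$ with each index counted at most $\tau$ times anyway), but to obtain the lemma exactly as stated you should close the recursion causally, e.g.\ by forward induction within the round: writing $A_t := 8L\gamma_b^2\tau\sum_{j=(t-1)-k(t)}^{t-1}\E[f(\bar \xx_j)-f^\star] + 16L\gamma_b^2\tau\sum_{j=(t-1)-k(t)}^{t-1}\sigma_f^2$ for your clean terms, $A_t$ is nondecreasing in $t$ within a round, and the hypothesis $\E R_j \le 2A_j$ for all prior $j$ in the round gives $\E R_t \le \frac{1}{200\tau}\cdot\tau\cdot 2A_t + A_t \le 2A_t$, a bound with only past indices that still fits under the stated constants.
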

\begin{proof}
Again we derive a bound on the variance on the iterated between the clients. We use the property that $\bar \xx_t = \xx_t^i$ for every $t$ that is a multiple of $\tau$. Define $k(t) \leq \tau-1$ as above. We will now prove that it holds
\begin{align}
    R_{t} &\leq \frac{32 \gamma_b^2 \tau}{n} \sum_{i=1}^n \sum_{j=(t-1)-k(t)}^{t-1} \norm{\nabla F_i(\bar \xx_j, \xi_j^i) }^2  \label{eq:442} \\
    &\stackrel{\eqref{eq:lsmoothdefn1}}{\leq} \frac{64 \gamma_b^2 L \tau}{n} \sum_{i=1}^n \sum_{j=(t-1)-k(t)}^{t-1} [ F_i(\bar \xx_j, \xi_j^i) - \ell_i^\star] \,, \notag
\end{align}
and consequently (after taking expectation) and using $\gamma_b \leq \frac{1}{20L\tau}$:
\begin{align*}
 \E R_{t} \leq \frac{1}{3L \tau} \sum_{j=(t-1)-k(t)}^{t-1} \E [ f(\bar \xx_j) - f^\star] +  64 L \gamma_b^2\tau  \sum_{j=(t-1)-k(t)}^{t-1} \sigma_f^2\,.
\end{align*}

Note that if $t$ is a multiple of $\tau$, then $R_t = 0$ and there is nothing to prove. Otherwise note that 
\begin{align*}
    R_{t+1} &= \frac{1}{n}\sum_{i=1}^n \norm{\bar \xx_t - \xx_t^i + \gamma_t^i \nabla F(\xx_t^i,\xi_t^i) - \vv}^2 \,,
\end{align*}
where $\vv := \frac{1}{n}\sum_{i=1}^n \gamma_t^i \nabla F(\xx_t^i,\xi_t^i)$ denotes the average of the client updates. With the inequality $\norm{\aa + \bb}^2 \leq (1+\tau^{-1}) \norm{\aa}^2 + 2\tau \norm{\bb}^2$ for $\tau \geq 1$, $\aa, \bb \in \R^d$, we continue:
\begin{align*}
    R_{t+1} &\stackrel{\eqref{eq:norm_of_sum_of_two}}{\leq} \left(1+\frac{1}{\tau}\right) R_t + \frac{2 \tau}{n} \sum_{i=1}^n \norm{\gamma_t^i \nabla F_i(\xx_t^i, \xi_t^i) - \vv}^2 \\
    &\stackrel{\eqref{eq:mean}}{\leq}\left(1+\frac{1}{\tau}\right) R_t + \frac{2 \tau}{n} \sum_{i=1}^n \norm{\gamma_t^i \nabla F_i(\xx_t^i, \xi_t^i)}^2 \\
    &\leq  \left(1+\frac{1}{\tau}\right) R_t  + \frac{2 \tau \gamma_b^2 }{n} \sum_{i=1}^n \norm{\nabla F_i(\xx_t^i, \xi_t^i)}^2 \\
    &\stackrel{\eqref{eq:norm_of_sum_of_two}}{\leq} \left(1+\frac{1}{\tau}\right) R_t  + \frac{4 \tau \gamma_b^2 }{n} \sum_{i=1}^n \left( \norm{\nabla F_i(\xx_t^i, \xi_t^i) - \nabla F_i(\bar \xx_t, \xi_t^i)}^2 + \norm{\nabla F_i(\bar \xx_t, \xi_t^i)}^2 \right) \\
    &\stackrel{\eqref{eq:lsmoothdefn1}}{\leq} \left(1+\frac{1}{\tau}\right) R_t  + \frac{4 \tau \gamma_b^2 }{n} \sum_{i=1}^n \left( L^2 \norm{\bar \xx_t - \xx_t^i}^2 + \norm{\nabla F_i(\bar \xx_t, \xi_t^i)}^2 \right)  \\
    &\leq  \left(1+\frac{2}{\tau}\right) R_t  + \frac{4 \tau \gamma_b^2 }{n} \sum_{i=1}^n  \norm{\nabla F_i(\bar \xx_t, \xi_t^i)}^2 \,, 
\end{align*}
where we used $\gamma_b \leq \frac{1}{10L \tau}$. Equation~\eqref{eq:442} now follows by unrolling, and noting that $\left(1+\frac{2}{\tau}\right)^j \leq 8$ for all $0 \leq j \leq \tau$.
\end{proof}

\subsubsection{Proof of Theorem~\ref{thm:fedsps_local_convex} (a) (general case valid for all stepsizes)}\label{appendix_proof_fedspslocal_convex}

\textbf{Distance to optimality.}
We now proceed by using the definition of the virtual average: $\bar \xx_{t+1} = \frac{1}{n}\sum_{i=1}^n \left(\xx_t^i - \gamma_t^i \gg_t^i \right)$.
\begin{align}
 \norm{\bar \xx_{t+1} - \xx^\star}^2 
    &\leq \norm{\bar \xx_t - \xx^\star}^2  -  \frac{2}{n} \sum_{i=1}^n \lin{\bar \xx_t - \xx^\star , \gamma_t^i \gg_t^i } + \frac{1}{n} \sum_{i=1}^n \norm{\gamma_t^i \gg_t^i}^2 \notag\\
     &= \norm{\bar \xx_t - \xx^\star}^2  -  \frac{2}{n} \sum_{i=1}^n \lin{\xx_t^i - \xx^\star , \gamma_t^i \gg_t^i } + \frac{1}{n} \sum_{i=1}^n \norm{\gamma_t^i \gg_t^i}^2 - \frac{2}{n} \sum_{i=1}^n \lin{\bar \xx_t - \xx_t^i, \gamma_t^i \gg_t^i } \notag\\
     &\stackrel{\eqref{eq:scal_product}}{\leq}  \norm{\bar \xx_t - \xx^\star}^2  -  \frac{2}{n} \sum_{i=1}^n \lin{\xx_t^i - \xx^\star , \gamma_t^i \gg_t^i } + \frac{1}{n} \sum_{i=1}^n \norm{\gamma_t^i \gg_t^i}^2  +  \frac{1}{n} \sum_{i=1}^n  \left( \norm{\bar \xx_t - \xx_t^i}^2 +\norm{ \gamma_t^i \gg_t^i }^2 \right) \notag\\
     &=  \norm{\bar \xx_t - \xx^\star}^2  -  \frac{2}{n} \sum_{i=1}^n \lin{\xx_t^i - \xx^\star , \gamma_t^i \gg_t^i } + \frac{2}{n} \sum_{i=1}^n \norm{\gamma_t^i \gg_t^i}^2  +  R_t \,. \label{eq:distance_optimality_convex}
\end{align}

\textbf{Upper bound (valid for arbitrary stepsizes).}
We now proceed in a similar fashion as outlined in the proof of~\citep[Theorem 3.4]{loizou2021stochastic}. Using the property~\eqref{eq:prop1} of the FeSPS stepsize we get:
\begin{align}
    \norm{\bar \xx_{t+1} - \xx^\star}^2 
    &\stackrel{\eqref{eq:prop1}}{\leq}  \norm{\bar \xx_t - \xx^\star}^2  -  \frac{2}{n} \sum_{i=1}^n \lin{\xx_t^i - \xx^\star , \gamma_t^i \gg_t^i } + \frac{2}{nc } \sum_{i=1}^n \gamma_t^i [F_i(\xx_t^i,\xi_t^i) - \ell_i^\star] +  R_t \label{eq:999}\\
    & \stackrel{\eqref{eq:convexity_defn}}{\leq}
    \norm{\bar \xx_t - \xx^\star}^2  -  \frac{2}{n} \sum_{i=1}^n \gamma_t^i [ F_i(\xx_t^i,\xi_t^i) - F_i(\xx^\star,\xi_t^i) ] + \frac{2}{nc } \sum_{i=1}^n \gamma_t^i [F_i(\xx_t^i,\xi_t^i) - \ell_i^\star]  +  R_t  \notag \\
    &=  \norm{\bar \xx_t - \xx^\star}^2  -  \frac{2}{n} \sum_{i=1}^n \gamma_t^i [ F_i(\xx_t^i,\xi_t^i) - \ell_i^\star + \ell_i^\star - F_i(\xx^\star,\xi_t^i) ] \notag \\  &\qquad + \frac{2}{nc } \sum_{i=1}^n \gamma_t^i [F_i(\xx_t^i,\xi_t^i) - \ell_i^\star]  +  R_t
    \notag \\
    &=  \norm{\bar \xx_t - \xx^\star}^2  -  \left(2 - \frac{2}{c}\right) \frac{1}{n} \sum_{i=1}^n \gamma_t^i [ F_i(\xx_t^i,\xi_t^i) - \ell_i^\star ]  + \frac{2}{n} \sum_{i=1}^n \gamma_t^i [F_i(\xx^\star,\xi_t^i) - \ell_i^\star]  +  R_t \label{eq:upperbound_convex_intermediate} \,, 
\end{align}
We use the assumption $c>2$ and simplify further:\footnote{The attentive reader will observe that any $c > \frac{1}{2}$ would be sufficient to show convergence of the function suboptimality (note that \cite{loizou2021stochastic} used $c \geq \frac{1}{2}$, but only showed convergence of the iterate distance to optimality).}
\begin{align*}
    \norm{\bar \xx_{t+1} - \xx^\star}^2  \leq  \norm{\bar \xx_t - \xx^\star}^2 -  \frac{1}{n
    }\sum_{i=1}^n \gamma_t^i [F_i(\xx_t^i,\xi_t^i) - \ell^\star_i] + 2 \gamma_b \sigma_t^2 + R_t\,,
\end{align*}
where we defined $\sigma_t^2 := \frac{1}{n}\sum_{i=1}^n [F_i(\xx^\star,\xi_t^i) - \ell_i^\star]$.

After rearranging:
\begin{align*}
     \frac{1}{n}\sum_{i=1}^n \gamma_t^i [F_i(\xx_t^i,\xi_t^i) - \ell^\star_i] 
     &\leq \norm{\bar \xx_t - \xx^\star}^2 - \norm{\bar \xx_{t+1} - \xx^\star}^2  + 2 \gamma_b \sigma_t^2 + R_t \,.
\end{align*}

We now plug in the bound on $R_t$ calculated above in equation~\eqref{eq:2452_sps}:
\begin{align*}
     \frac{1}{n}\sum_{i=1}^n \gamma_t^i [F_i(\xx_t^i,\xi_t^i) - \ell^\star_i] 
     &\leq \norm{\bar \xx_t - \xx^\star}^2 - \norm{\bar \xx_{t+1} - \xx^\star}^2  + 2 \gamma_b \sigma_t^2 + \frac{1}{2n \tau}\sum_{i=1}^n \sum_{j=(t-1)-k(t)}^{t-1} \gamma_j^i [F_i(\xx_j^i,\xi_j^i)- \ell_i^\star] \\
     &\leq \norm{\bar \xx_t - \xx^\star}^2 - \norm{\bar \xx_{t+1} - \xx^\star}^2  + 2 \gamma_b \sigma_t^2 + \frac{1}{2n \tau}\sum_{i=1}^n \sum_{j=(t-1)-(\tau-1)}^{t-1} \gamma_j^i [F_i(\xx_j^i,\xi_j^i)- \ell_i^\star] \,.
\end{align*}

We now sum this equation up from $t=0$ to $T-1$, and divide by $T$:
\begin{align*}
\frac{1}{T} \sum_{t=0}^{T-1} \frac{1}{n}\sum_{i=1}^n \gamma_t^i [F_i(\xx_t^i,\xi_t^i) - \ell^\star_i] 
 &\leq \frac{1}{T} \sum_{t=0}^{T-1} \left(  \norm{\bar \xx_t - \xx^\star}^2 - \norm{\bar \xx_{t+1} - \xx^\star}^2  \right) \\
 &\qquad + 2 \gamma_b \frac{1}{T}\sum_{t=0}^{T-1} \sigma_t^2 + \frac{1}{T} \sum_{t=0}^{T-1} \frac{1}{2n}\sum_{i=1}^n \gamma_t^i [F_i(\xx_t^i,\xi_j^t) - \ell^\star_i] \,,
\end{align*}
by noting that each component in the last term appears at most $\tau$ times in the sum. We can now move the last term to the left:
\begin{align*}
\frac{1}{T} \sum_{t=0}^{T-1} \frac{1}{2 n}\sum_{i=1}^n \gamma_t^i [F_i(\xx_t^i,\xi_t^i) - \ell^\star_i] 
 \leq \frac{1}{T} \norm{\bar \xx_0 - \xx^\star}^2  + 2 \gamma_b \frac{1}{T}\sum_{t=0}^{T-1} \sigma_t^2 \,.
\end{align*}
It remains to note

$\gamma_t^i \geq \alpha := \min\left\{ \frac{1}{2c L}, \gamma_b \right\}$, therefore:
\begin{align*}
   \frac{1}{n}\sum_{i=1}^n \gamma_t^i [F_i(\xx_t^i,\xi_t^i) - \ell_i^\star] 
   &\geq \frac{1}{n}\sum_{i=1}^n  \alpha [F_i(\xx_t^i,\xi_t^i) - \ell_i^\star]  \,.
\end{align*}
To summarize:
\begin{align}
    \frac{1}{T} \sum_{t=0}^{T-1} \frac{1}{2 n}\sum_{i=1}^n  [F_i(\xx_t^i,\xi_t^i) - \ell^\star_i] 
 &\leq \frac{1}{T \alpha} \norm{\bar \xx_0 - \xx^\star}^2  + \frac{ 2 \gamma_b}{\alpha} \frac{1}{T}\sum_{t=0}^{T-1} \sigma_t^2\,.
 \label{eq:fedsps_convex_secondlast}
\end{align}

We now take full expectation to get the claimed result.

\subsubsection{Proof of Theorem~\ref{thm:fedsps_local_convex_small} (a) (special case with small step sizes)}\label{appendix_proof_fedspslocal_convex_smallgamma}
We now give an additional bound that tightens the result when the parameter $\gamma_b$ is chosen to be a small value, smaller than $\frac{1}{2cL}$. As a consequence of this assumption (see Equation~\eqref{eq:sps_lsmooth_ineq}), it holds $\gamma_t^i \equiv \gamma_b$ for all $t$ and $i \in [n]$.
The proof now follows a similar template as above, but we can now directly utilize the imposed upper bound on $\gamma_b$ (v.s.\ the bound on $c$ used previously).

\textbf{Upper bound (valid for small stepsizes).}
Using the definition of the virtual average: $\bar \xx_{t+1} = \frac{1}{n}\sum_{i=1}^n \left(\xx_t^i - \gamma_t^i \gg_t^i \right)$.
\begin{align}
 \norm{\bar \xx_{t+1} - \xx^\star}^2 
    &\leq \norm{\bar \xx_t - \xx^\star}^2  -  \frac{2}{n} \sum_{i=1}^n \lin{\bar \xx_t - \xx^\star , \gamma_t^i \gg_t^i } + \frac{1}{n} \sum_{i=1}^n \norm{\gamma_t^i \gg_t^i}^2  \label{eq:3311}
\end{align}    
We now observe
\begin{align}
    -\lin{\bar \xx_t - \xx^\star , \gamma_t^i \gg_t^i } &= -  \gamma_t^i \lin{\bar \xx_t - \xx_t^i , \gg_t^i } -  \gamma_t^i  \lin{\xx_t^i - \xx^\star , \gg_t^i } \notag\\ 
    &\stackrel{\eqref{eq:lsmoothdefn2}}{\leq} - \gamma_t^i  \left[F_i(\bar \xx_t,\xi_t^i) - F_i(\xx_t^i,\xi_t^i) - \frac{L}{2} \norm{\bar \xx_t - \xx_t^i}^2 \right]  - \gamma_t^i 
 \lin{\xx_t^i - \xx^\star , \gg_t^i }  \label{eq:apply_conv} \\
    &\stackrel{\eqref{eq:convexity_defn}}{\leq} - \gamma_t^i  \left[F_i(\bar \xx_t,\xi_t^i) - F_i(\xx_t^i,\xi_t^i) - \frac{L}{2} \norm{\bar \xx_t - \xx_t^i}^2 + F_i(\xx_t^i,\xi_t^i) - F_i(\xx^\star,\xi_t^i) \right] \notag\\
    &= - \gamma_t^i \left[ F_i(\bar \xx_t,\xi_t^i) -  F_i(\xx^\star,\xi_t^i) \right]   + \frac{\gamma_t^i L}{2} \norm{\bar \xx_t - \xx_t^i}^2 \,.  \label{eq:2211}
\end{align}
Therefore
\begin{align}
 \norm{\bar \xx_{t+1} - \xx^\star}^2 
 &\stackrel{\eqref{eq:3311},\eqref{eq:2211}}{\leq} \norm{\bar \xx_t - \xx^\star}^2 - \frac{2}{n} \sum_{i=1}^n \gamma_t^i[ F_i(\bar \xx_t,\xi_t^i) -  F_i(\xx^\star,\xi_t^i)  ] + \frac{L}{n} \sum_{i=1}^n \gamma_t^i  \norm{\bar \xx_t - \xx_t^i}^2 + \frac{1}{n} \sum_{i=1}^n \norm{\gamma_t^i \gg_t^i}^2\,.
 \label{eqn:small_stepsize_decrease}
\end{align}
We now use the observation that $\gamma_t^i = \gamma_b$ (by the assumption that $\gamma_b$ is small---we could have made this simplification also earlier), and continue:
\begin{align*}
 \norm{\bar \xx_{t+1} - \xx^\star}^2 
 &\leq  \norm{\bar \xx_t - \xx^\star}^2 - \frac{2 \gamma_b }{n} \sum_{i=1}^n[ F_i(\bar \xx_t,\xi_t^i) -  F_i(\xx^\star,\xi_t^i)  ] + \gamma_b L R_t + \frac{\gamma_b^2}{n} \sum_{i=1}^n \norm{\gg_t^i}^2 \,.
\end{align*}
We now use
\begin{align*}
    \norm{\gg_t^i}^2 &\stackrel{\eqref{eq:norm_of_sum_of_two}}{\leq} 2 \norm{\nabla F_i(\bar \xx_t, \xi_t^i) - \nabla F_i(\xx_t^i,\xi_t^i)}^2 + 2 \norm{\nabla F_i(\bar \xx_t, \xi_t^i)}^2 \\
    &\stackrel{\eqref{eq:lsmoothdefn1}}{\leq} 2 L^2 \norm{\bar \xx_t - \xx_t^i}^2 + 2 \norm{\nabla F_i(\bar \xx_t, \xi_t^i)}^2 
\end{align*}
and the assumption $\gamma_b \leq \frac{1}{10L}$, to obtain
\begin{align*}
 \norm{\bar \xx_{t+1} - \xx^\star}^2 
 &\leq  \norm{\bar \xx_t - \xx^\star}^2 - \frac{2 \gamma_b }{n} \sum_{i=1}^n[ F_i(\bar \xx_t,\xi_t^i) -  F_i(\xx^\star,\xi_t^i)  ] + 2 \gamma_b L R_t + \frac{2 \gamma_b^2}{n}\sum_{i=1}^n \norm{\nabla F_i(\bar \xx_t, \xi_t^i)}^2 \\
 &\leq \norm{\bar \xx_t - \xx^\star}^2 - \frac{2 \gamma_b }{n} \sum_{i=1}^n[ F_i(\bar \xx_t,\xi_t^i) -  F_i(\xx^\star,\xi_t^i)  ] + 2 \gamma_b L R_t + \frac{4 L\gamma_b^2}{n}\sum_{i=1}^n [F_i(\bar \xx_t,\xi_t^i) - \ell_i^\star]\,.
 \end{align*}
 To simplify, let's take expectation over the randomness in iteration $t$:
 \begin{align*}
 \E \norm{\bar \xx_{t+1} - \xx^\star}^2 \leq  \norm{\bar \xx_t - \xx^\star}^2 - 2 \gamma_b [f(\bar \xx_t) - f^\star] + 2 \gamma_b L \E[R_t] + 4 L\gamma_b^2 [f(\bar \xx_t) - f^\star + \sigma_f^2] \,.
 \end{align*}
 Because $f(\bar \xx_t)-f^\star \geq 0$ and $\gamma_b \leq \frac{1}{10L}$, we can further simplify:
  \begin{align}
 \E \norm{\bar \xx_{t+1} - \xx^\star}^2
 &\leq  \norm{\bar \xx_t - \xx^\star}^2 - \gamma_b [f(\bar \xx_t) - f^\star] + 2 \gamma_b L \E[R_t] + 4 L\gamma_b^2 \sigma_f^2 \,. \label{eq:illustration}
 \end{align}
 After re-arranging and taking expectation:
\begin{align*}
 \gamma_b \E [f(\bar \xx_t) - f^\star] 
 &\leq  \E \norm{\bar \xx_t - \xx^\star}^2 - \E \norm{\bar \xx_{t+1} - \xx^\star}^2 + 2 \gamma_b L \E[R_t] + 4 L\gamma_b^2  \sigma_f^2 \,.
 \end{align*}
 We now plug in the bound on $R_t$ from Equation~\eqref{eq:443}:
 \begin{align*}
  \gamma_b \E [f(\bar \xx_t) - f^\star] 
 &\leq  \E \norm{\bar \xx_t - \xx^\star}^2 - \E \norm{\bar \xx_{t+1} - \xx^\star}^2 + \frac{2 \gamma_b}{3 \tau } \sum_{j=(t-1)-k(t)}^{t-1} [ f(\bar \xx_j) - f^\star]  +4( \gamma_b^2 + 32 \tau^2 \gamma_b^3) L   \sigma_f^2 \,,
 \end{align*}
 and after summing over $t=0$ to $T-1$, and dividing by $T$:
  \begin{align*}
  \frac{1}{T} \sum_{t=0}^{T-1} \gamma_b \E [f(\bar \xx_t) - f^\star] 
 &\leq  \frac{1}{T}\sum_{t=0}^{T-1} \left( \E \norm{\bar \xx_t - \xx^\star}^2 - \E \norm{\bar \xx_{t+1} - \xx^\star}^2 \right) + \frac{2\gamma_b}{3T} \sum_{t=0}^{T-1} [ f(\bar \xx_t) - f^\star] +  4( \gamma_b^2 + 32 \tau^2 \gamma_b^3) L \sigma_f^2 \,,
 \end{align*}
 and consequently:
 \begin{align}
 \frac{1}{T} \sum_{t=0}^{T-1} \E [f(\bar \xx_t) - f^\star] 
 &\leq  \frac{3}{T \gamma_b } \E \norm{\bar \xx_0 - \xx^\star}^2  +  12( \gamma_b + 32 \tau^2 \gamma_b^2)L  \sigma_f^2 \,.
 \label{eq:fevavg_full_expression}
 \end{align}

\subsection{Proof for strongly convex case of FedSPS}
Here we outline the proof in the strongly convex case. As the proof is closely following the arguments from the convex case (just  with the difference of applying $\mu$-strong convexity whenever convexity was used), we just highlight the main differences for the readers.

\subsubsection{Proof of Theorem~\ref{thm:fedsps_local_convex} (b) (general case valid for all stepsizes)}
In the respective proof in the convex case, convexity was used in Equation~\eqref{eq:999}. If we use strong convexity instead, we obtain
\begin{align*}
    \norm{\bar \xx_{t+1} - \xx^\star}^2 
    & \stackrel{\eqref{eq:strongconv}}{\leq}
    \norm{\bar \xx_t - \xx^\star}^2  -  \frac{2}{n} \sum_{i=1}^n \gamma_t^i [ F_i(\xx_t^i,\xi_t^i) - F_i(\xx^\star,\xi_t^i)  + \frac{\mu}{2}\norm{\xx_t^i -\xx^\star}^2 ] \notag \\&\qquad\qquad + \frac{2}{nc } \sum_{i=1}^n \gamma_t^i [F_i(\xx_t^i,\xi_t^i) - \ell_i^\star]  +  R_t \\
    &\leq \norm{\bar \xx_t - \xx^\star}^2  -  \frac{2}{n} \sum_{i=1}^n \gamma_t^i [ F_i(\xx_t^i,\xi_t^i) - F_i(\xx^\star,\xi_t^i)  + \frac{\mu}{2}\norm{\bar \xx_t - \xx^\star}^2 ] +  \frac{2}{n} \sum_{i=1}^n \gamma_t^i \mu \norm{\bar \xx_t - \xx_t^i}^2  \notag \\&\qquad\qquad + \frac{2}{nc } \sum_{i=1}^n \gamma_t^i [F_i(\xx_t^i,\xi_t^i) - \ell_i^\star]  +  R_t \\
    &\leq \norm{\bar \xx_t - \xx^\star}^2  -  \frac{2}{n} \sum_{i=1}^n \gamma_t^i [ F_i(\xx_t^i,\xi_t^i) - F_i(\xx^\star,\xi_t^i)  + \frac{\mu}{2}\norm{\bar \xx_t - \xx^\star}^2 ] \notag \\&\qquad\qquad + \frac{2}{nc } \sum_{i=1}^n \gamma_t^i [F_i(\xx_t^i,\xi_t^i) - \ell_i^\star]  +  2 R_t
\end{align*}
where the second inequality used $\norm{\bar \xx_t - \xx^\star}^2 \leq 2 \norm{\bar \xx_t - \xx_t^i}^2 + 2 \norm{\xx_t^i - \xx^\star}^2$ and the last inequality used $\gamma_t^i \mu \leq \frac{1}{2c} \leq \frac{1}{2}$. By applying the lower bound $\alpha$ on the stepsize, we see that convergence is governed by linearly decreasing term:
\begin{align*}
    \norm{\bar \xx_{t+1} - \xx^\star}^2 
    &\leq (1-\mu\alpha)  \norm{\bar \xx_t - \xx^\star}^2 -  \frac{2}{n} \sum_{i=1}^n \gamma_t^i [ F_i(\xx_t^i,\xi_t^i) - F_i(\xx^\star,\xi_t^i) ] + \frac{2}{nc } \sum_{i=1}^n \gamma_t^i [F_i(\xx_t^i,\xi_t^i) - \ell_i^\star]  +  2 R_t \,,
\end{align*}
while the remaining terms are the same (up to a small change in the constant in front of $R_t$) as in the convex case. The increased constant can be handled by imposing a slightly stronger condition on $c$ (here we used $c\geq 4\tau^2$, compared to $c\geq2\tau^2$ in the convex case). The rest of the proof follows by the same steps, with one additional departure: when summing up the inequalities over the iterations $t=0$ to $T-1$ we need to use an appropriate weighing to benefit from the linear decrease. This technique is standard in the analysis (illustrated in~\cite{Stich19sgd} and carried out in the setting a residual $R_t$ in \citep[Proof of Theorem 16]{StichK19delays} and with a residual $R_t$ in a distributed setting in \citep[Proof of Theorem 2, in particular Lemma 15]{koloskova2020:unified}.

\subsubsection{Proof of Theorem~\ref{thm:fedsps_local_convex_small} (b) (special case with small step sizes)}
Convexity was used in Equation~\eqref{eq:apply_conv}. If we use $\mu$-strong convexity instead, we obtain:
\begin{align}
    -\lin{\bar \xx_t - \xx^\star , \gamma_t^i \gg_t^i } 
    &\stackrel{\eqref{eq:strongconv}}{\leq} - \gamma_t^i  \left[F_i(\bar \xx_t,\xi_t^i) - F_i(\xx_t^i,\xi_t^i) - \frac{L}{2} \norm{\bar \xx_t - \xx_t^i}^2 + F_i(\xx_t^i,\xi_t^i) - F_i(\xx^\star,\xi_t^i) + \frac{\mu}{2} \norm{\xx_t^i - \xx^\star}^2\right] \notag\\
    &\leq - \gamma_t^i \left[ F_i(\bar \xx_t,\xi_t^i) -  F_i(\xx^\star,\xi_t^i) \right]   + \gamma_t^i L \norm{\bar \xx_t - \xx_t^i}^2  - \frac{ \gamma_t^i \mu}{2} \norm{\bar \xx_t - \xx^\star}^2\,, \label{eq:425}
\end{align}
where the last inequality used $\norm{\bar \xx_t - \xx^\star}^2 \leq 2 \norm{\bar \xx_t - \xx_t^i}^2 + 2 \norm{\xx_t^i - \xx^\star}^2$. The last term is essential to get a linear decrease in the first term. For illustration, Equation~\eqref{eq:illustration} now changes to
\begin{align*}
\E \norm{\bar \xx_{t+1} - \xx^\star}^2
&\leq  \left(1- \gamma_b \mu \right) \norm{\bar \xx_t - \xx^\star}^2 - \gamma_b [f(\bar \xx_t) - f^\star] + 4 \gamma_b L \E[R_t] + 4L \gamma_b^2 \sigma_f^2 
\end{align*}
(the constant in front of $\E[R_t]$ increased by a factor of two because of the estimate in~\eqref{eq:425} and can be controlled by the slightly stronger condition on $\gamma_b$). From there, the proof is very standard and follows exactly the template outlined in e.g.\ \citep[Proof of Theorem 16]{StichK19delays} or \citep[Proof of Theorem 2, in particular Lemma 15]{koloskova2020:unified}.

\section{Additional details for FedSPS}\label{apx:section_addn_fedsps_details}

\subsection{Extending FedSPS to the mini-batch setting}\label{appendix_minibatchsetting}

Note that for the sake of simplicity in notation of the proposed method and algorithms, we used a single stochastic sample $\xi_t^i$. However, this can be trivially extended to the mini-batch setting. For a batch $\cB_t^i$, the stochastic gradient would become $\frac{1}{|\cB_t^i|} \sum_{\xi \in \cB_t^i}^{} \nabla F_i(\xx_t^i, \xi)$, and the stochastic loss would become $\frac{1}{|\cB_t^i|} \sum_{\xi \in \cB_t^i}^{}F_i(\xx_t^i, \xi)$. We use this mini-batch setting for our practical experiments.

\subsection{Comparison of heterogeneity measures}\label{appendix_heterogenity}
\subsubsection{Proof of Proposition \ref{lem:heterogeinity_comparison}}
\begin{proof}
We look at the two cases separately as follows:
\paragraph{Function heterogeneity.}
We recall the definitions $\zeta^2_{\star} = \frac{1}{n}\sum_{i = 1}^n {\norm{\nabla f_i(\xx^{\star}) - \nabla f(\xx^{\star})}}_2^2$ and $\sigma_f^2 = \frac{1}{n}\sum_{i=1}^n \left(f_i(\xx^{\star}) - \ell_i^\star\right)$. The global optimal point is denoted by $\xx^{\star}$, and the local optima by $\xx_i^{\star}$. Note that $\nabla f(\xx^{\star}) = 0$, $\nabla f_i(\xx_i^{\star})=0$, and we make use of these facts in our proof.

Assuming $L-$smoothness of each $f_i(\xx)$, we can apply Lemma \ref{lemma:lsmth_star} to each of them with $\xx = \xx^{\star}$ to obtain
    \begin{align}
        \norm{\nabla f_i(\xx^{\star}) - \nabla f_i(\xx^{\star}_i)}^2 \leq 2L \left( f_i(\xx^{\star}) - f_i^{\star} \right) \,,  \forall i \in [n].
        \label{eq:prob1_proof_eq1}
    \end{align}
    Using this result we can bound $\zeta^2_{\star}$ from above as follows (noting that $\nabla f(\xx^{\star})=\mathbf{0}$, $\nabla f_i(\xx_i^{\star})=\mathbf{0}$):
    \begin{align*}
        \zeta^2_{\star} &= \frac{1}{n}\sum_{i = 1}^n {\norm{\nabla f_i(\xx^{\star}) - \nabla f(\xx^{\star})}}_2^2\\
        &= \frac{1}{n}\sum_{i = 1}^n \norm{\nabla f_i(\xx^{\star}) - \nabla f_i(\xx_i^{\star})}^2\\
        &\stackrel{\eqref{eq:prob1_proof_eq1}}{\leq} \frac{1}{n}\sum_{i = 1}^n 2 L \left( f_i(\xx^{\star}) - f_i^{\star} \right)\\
        &\stackrel{\eqref{eq:sps_noise}}{\leq} \frac{1}{n}\sum_{i = 1}^n 2 L \left( f_i(\xx^{\star}) - \ell_i^{\star} \right)\\
        &= 2L \sigma_f^2
    \end{align*}

\paragraph{Gradient variance.}
We recall the definitions $\sigma_{\star}^2 = \frac{1}{n}\sum_{i=1}^{n}\E_{\xi^i} {\norm{\nabla F_i(\xx^{\star}, \xi^i) - \nabla f_i(\xx^{\star})}}^2$ and $\sigma_f^2 = \frac{1}{n}\sum_{i=1}^n \left(f_i(\xx^{\star}) - \ell_i^\star\right)$. The global optimal point is denoted by $\xx^{\star}$, local optimum on client $i$ by $\xx_i^{\star}$ and the local optima of the functions on worker $i$ by $\xx^{\star}_{\xi^i}$. Note that $\nabla f_i(\xx_i^{\star})=\mathbf{0}$, $\nabla F_i(\xx^{\star}_{\xi^i}, \xi^i) = \mathbf{0}$ and we make use of these facts in our proof.

Assuming $L$ smoothness of each $F_i(\xx, \xi^i)$, we can apply Lemma \ref{lemma:lsmth_star} to each function on any client $i$, with $\xx = \xx^{\star}$ to obtain
    \begin{align}
        \norm{\nabla F_i(\xx^{\star}, \xi^i) - \nabla F_i(\xx^{\star}_{\xi^i}, \xi^i)}^2 \leq 2L \left( F_i(\xx^{\star}, \xi^i) - F_i(\xx^{\star}_{\xi^i}, \xi^i) \right) \,,  \forall i \in [n]\,, \xi^i \sim \cD_i .
        \label{eq:prob1_proof_eq2}
    \end{align}
    Using this result we can bound $\sigma^2_{\star}$ from above as follows
    \begin{align*}
        \sigma^2_{\star} &= \frac{1}{n}\sum_{i=1}^{n}\E_{\xi^i} {\norm{\nabla F_i(\xx^{\star}, \xi^i) - \nabla f_i(\xx^{\star})}}^2\\
        &\stackrel{\eqref{eq:variance_ineq}}{\leq} \frac{1}{n}\sum_{i = 1}^n\E_{\xi^i} \norm{\nabla F_i(\xx^{\star}, \xi^i)}^2\\
        &= \frac{1}{n}\sum_{i = 1}^n \left(\E_{\xi^i} \left[\norm{\nabla F_i(\xx^{\star}, \xi^i) - \nabla F_i(\xx^{\star}_{\xi^i}, \xi^i)}^2\right] \right)\\
        &\stackrel{\eqref{eq:prob1_proof_eq2}}{\leq} \frac{2 L}{n}\sum_{i = 1}^n  \left( \E_{\xi^i} \left[F_i(\xx^{\star}, \xi^i) - F_i^{\star}(\xx^{\star}_{\xi^i}, \xi^i)\right] \right)\\
        &\stackrel{\eqref{eq:sps_noise}}{\leq} \frac{2 L}{n}\sum_{i = 1}^n \left( \E_{\xi^i} \left[F_i(\xx^{\star}, \xi^i) - \ell_i^{\star}\right] \right) \\
        &= \frac{2 L}{n}\sum_{i = 1}^n \left( f_i(\xx^{\star}) - \ell_i^{\star} \right) \\
        &= 2L \sigma_f^2 \qedhere
    \end{align*}

\end{proof}

\section{Alternative design choices for FedSPS}\label{section:design_choices}

\subsection{FedSPS-Normalized}\label{section_apx_fedspsnorm}

Drawing ideas from \citet{wang2021local} which employs client and server side normalization to account for any solution bias due to asynchronous local stepsizes (especially for heterogeneous clients), we design FedSPS-Normalized. In the following we provided a detailed description of the algorithm, as well as comparison to our main proposed method FedSPS.

\subsubsection{Algorithm}

\textbf{FedSPS-Normalized.} In FedSPS-Normalized (Algorithm \ref{algo_fedsps_norm}), we calculate the client correction factor $\nu^i$ for each client $i \in [n]$ as the sum of local stepsizes (since the last aggregation round). During the server aggregation step, the pseudo-gradients are then normalized using the client and server normalization factors, to remove any bias due to local heterogeneity.  

\subsubsection{FedSPS-Normalized experimental details}\label{fedsps_normalized_experiments}
The implementation is done according to Algorithm \ref{algo_fedsps_norm}. All remarks we made about the choice of scaling parameter $c$, upper bound on stepsize $\gamma_b$, and $\ell_i^\star$ in the previous section on FedSPS also remain same here. Figure \ref{fig:fedsps_normalized} shows a comparison of FedSPS and FedSPS-Normalized for the convex case of logistic regression on MNIST dataset. For the homogeneous case (a) normalization has no effect---this is as expected since the client and server side correction factors are equal and balance out. We expect to observe some changes for the heterogeneous setting (b), but here the FedSPS-Normalized actually performs very slightly worse than FedSPS. We conclude that normalization does not lead to any significant performance gain, and can intuitively attribute this to the fact the FedSPS stepsizes have low inter-client variance (Figure \ref{fig:mnist_convex_stepsizevis_apx}).

\begin{algorithm}[t!]
    \caption{\colorbox{green!15}{\textbf{FedSPS-Normalized:}} Fully locally adaptive FedSPS with \textcolor{purple}{normalization} to account for heterogeneity as suggested in \cite{wang2021local}.}
    \small\algrenewcommand\alglinenumber[1]{\footnotesize #1:} 
    \textbf{Input:} $\xx_0^i = \xx_0, \forall i \in [n]$
    \begin{algorithmic}[1]
    \For{$t = 0, 1, \cdots, T-1$}
        \For{each client $i = 1, \cdots, n$ in parallel}

            \State sample $\xi^{i}_t$, compute $\gg_t^i := \nabla F_i(\xx_t^i, \xi^{i}_t)$
            \State \colorbox{green!15}{\textbf{FedSPS-Normalized: }$\gamma_t^i = \min \left\{\frac{  F_i(\xx_t^i, \xi^{i}_t) - \ell_i^{\star}  }{c || \gg_t^i ||^2}, \gamma_b\right\}$} \hfill { $\triangleright$ local stochastic Polyak stepsize}
            \If{$t+1$ is a multiple of $\tau$}
                \State $\xx_{t+1}^i = \textcolor{purple}{\frac{1}{\frac{1}{n}\sum_{i=1}^{n}\frac{1}{\nu_i}}}\frac{1}{n} \sum_{i=1}^n \textcolor{purple}{\frac{1}{\nu_i}} \left(\xx_t^i - \gamma_t^i \gg_t^i\right)$ \hfill {$\triangleright$ aggregation incorporating normalizations}
                \State \textcolor{purple}{$\nu^i = 0$}
            \Else{}
                \State $\xx_{t+1}^i = \xx_t^i - \gamma_t^i \gg_t^i$ \hfill { $\triangleright$ local step}
                \State \textcolor{purple}{$\nu^i = \nu^i + \gamma^i_t$} \hfill {$\triangleright$ client normalization factor}
            \EndIf
        \EndFor
    \EndFor
    \end{algorithmic}
    \label{algo_fedsps_norm}
\end{algorithm}

\begin{figure}[t]
\centering
\begin{subfigure}{0.35\textwidth}
\includegraphics[scale=1, width=1\linewidth]{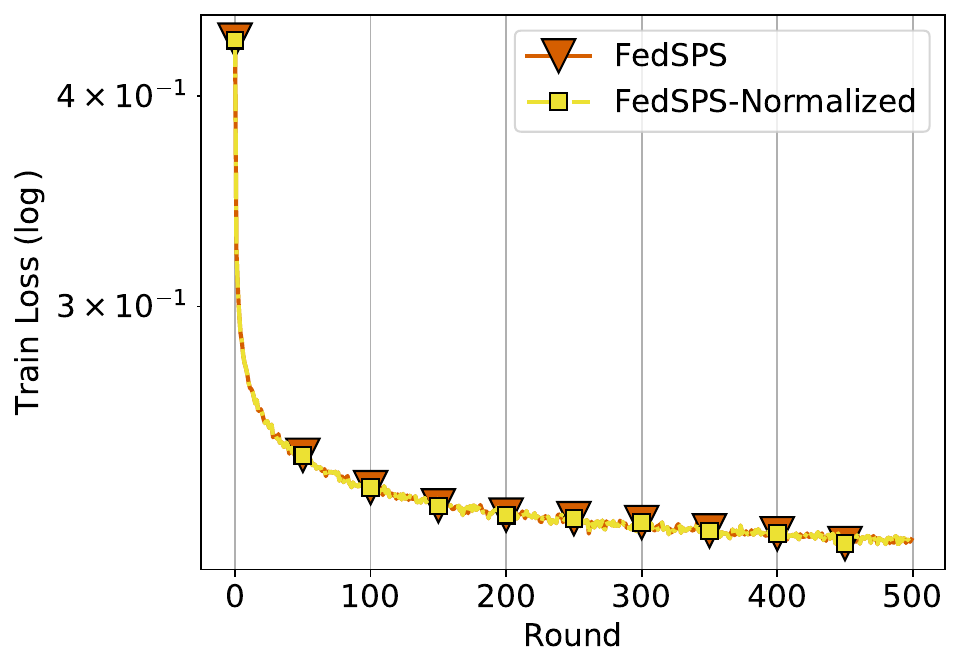} 
\label{fig:demo1}
\caption{MNIST i.i.d.}
\end{subfigure} 
\begin{subfigure}{0.335\textwidth}
\includegraphics[scale=1, width=1\linewidth]{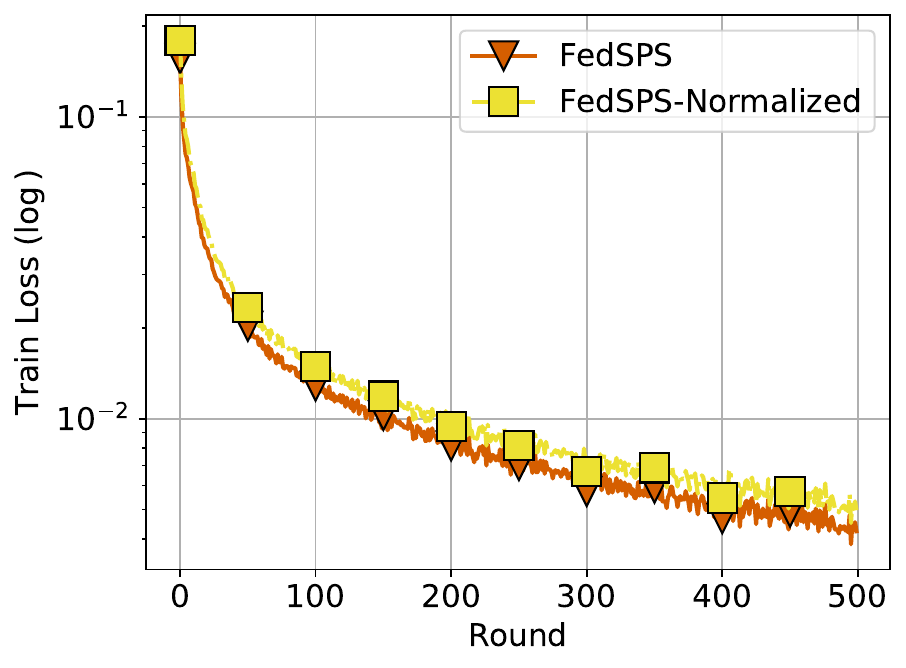} 
\label{fig:demo2}
\caption{MNIST non-i.i.d.}
\end{subfigure}
\caption{Comparison of FedSPS and FedSPS-Normalized for the convex case of logistic regression on MNIST dataset (i.i.d.\ and non-i.i.d.). For the homogeneous case (a) normalization has no effect, while for the heterogeneous case (b) normalization slightly deteriorates performance.}
\label{fig:fedsps_normalized}
\end{figure}

\subsection{FedSPS-Global}\label{section_apx_fedspsglobal}
For the sake of comparison with existing locally adaptive FL algorithms such as  Local-AMSGrad \cite{chen2020toward} and Local-AdaAlter \cite{xie2019local} (both of which perform some form of stepsize aggregation on the server), we introduce a synchronous stepsize version of our algorithm called FedSPS-Global.
\subsubsection{Algorithm}
\textbf{FedSPS-Global.}

In FedSPS-Global (Algorithm \ref{algo_global_thoremversion}), the stepsize is constant across all clients and also for local steps for a particular client, for a particular round $t$. There can be several choices for the aggregation formula for the ``global'' stepsize---we use a simple average of the local SPS stepsizes as given below
\begin{align}
    \gamma =
    \frac{1}{n} \sum_{i = 1}^{n} \min\left\{ \frac{F_i(\xx_t^i, \xi_t^i) - \ell_i^\star}{c \norm{\nabla F(\xx_t^i, \xi_t^i)}^2}, \gamma_b  \right\}\, . \label{eq:global_aggregarion_formula}
\end{align}
We provide a theoretical justification of this choice of aggregation formula, and compare it with other possible choices in Appendix \ref{section:aggregation_choice}. As it is apparent from \eqref{eq:global_aggregarion_formula}, we need to use stale quantities from the last round, for calculating the stepsize for the current round. This is justified by the empirical observation that stepsizes do not vary too much between consecutive rounds, and is a reasonable assumption that has also been made in previous work on adaptive distributed optimization \cite{xie2019local}. Due to the staleness in update of the stepsize in this method, we need to provide a starting stepsize $\gamma_0$ to the algorithm, and this stepsize is used in all clients till the first communication round happens. FedSPS-Global can be thought of more as a heuristic method using cheap stepsize computations, and offering similar empirical performance as FedSPS.

\begin{algorithm}[tb]
    \caption{\colorbox{green!15}{\textbf{FedSPS-Global:}} FedSPS with \textcolor{purple}{global} stepsize aggregation}
    \small\algrenewcommand\alglinenumber[1]{\footnotesize #1:}   
    \textbf{Input:} $\gamma = \gamma_0, \xx_0^i = \xx_0, \forall i \in [n]$
    \begin{algorithmic}[1]
    \For{$t = 0, 1, \cdots, T-1$}
        \For{each client $i = 1, \cdots, n$ in parallel}

            \State sample $\xi^{i}_t$, and compute stochastic gradient $\gg_t^i := \nabla F_i(\xx_t^i, \xi^{i}_t)$ 
            \If{$t+1$ is a multiple of $\tau$}
                \State $\xx_{t+1}^i = \frac{1}{n} \sum_{i=1}^n \left(\xx_t^i -  {\color{purple}\gamma} \gg_t^i\right)$ \hfill $\triangleright$ communication round
                 \State \colorbox{green!15}{\textbf{FedSPS-Global: } ${\color{purple}\gamma} = \frac{1}{n} \sum_{i=1}^{n} \min \left\{\frac{  F_i(\xx_t^i, \xi^{i}_t) - \ell_i^{\star}  }{c || \gg_t^i ||^2}, \gamma_b\right\}$} {\hfill $\triangleright$ global stepsize}
            \Else{}
                \State $\xx_{t+1}^i = \xx_t^i - {\color{purple}\gamma}\gg_t^i$ \hfill $\triangleright$ local step
            \EndIf
        \EndFor
    \EndFor
    \end{algorithmic}
    \label{algo_global_thoremversion}
\end{algorithm}

\subsubsection{Choice of aggregation formula for FedSPS-Global}\label{section:aggregation_choice}
We can have various choices for aggregating the local stepsizes to calculate the global stepsize. The first obvious choice would be to perform a simple average of the local stepsizes across all clients. This is given by the aggregation formula $\gamma^{(a)} = \frac{1}{n} \sum_{i=1}^{n} \min \left\{\frac{  f_i(\xx_t^i) - \ell_i^\star  }{c || \gg_t^i ||^2}, \gamma_b\right\}$ which is used in our proposed method (Algorithm \ref{algo_global_thoremversion}). Two other plausible choices of aggregation formula could be $\gamma^{(b)} = \min \left\{ \frac{  \frac{1}{n} \sum_{i=1}^{n} [f_i(\xx_t^i) - \ell_i^\star]  }{c \frac{1}{n} \sum_{i=1}^{n} || \gg_t^i ||^2}, \gamma_b \right\}$ and $\gamma^{(c)} = \min \left\{ \frac{  \frac{1}{n} \sum_{i=1}^{n} [f_i(\xx_t^i) - \ell_i^\star]  }{c \norm{\frac{1}{n} \sum_{i=1}^{n} \gg_t^i}^2}, \gamma_b \right\}$. Among these choices, $\gamma^{(c)}$ represents the ``correct'' SPS stepsize if we follow the original definition and replace batches with clients in the distributed setup. In the following Proposition we show theoretically that $\gamma^{(b)} < \min \left\{ \gamma^{(a)}, \gamma^{(c)}\right\}$. Experimentally we find the simple averaging of local stepsizes i.e., $\gamma^{(a)}$ to work best, followed closely by $\gamma^{(a)}$, while the ``correct'' SPS stepsize $\gamma^{(c)}$ explodes in practice. Therefore, we choose $\gamma^{(a)}$ as our aggregation formula and this has some added advantages for the associated proofs. We feel that a reason behind the good performance of FedSPS-Global is the low inter-client and intra-client variance of the stepsizes explained in Section \ref{experiments}---this is the reason behind why simple averaging of the local stepsizes work.

\begin{proposition}[Global SPS stepsize aggregation formula]\label{lem:sps_aggregation_formula}
    Using the definitions of the three aggregation formulae for synchronous SPS stepsizes $\gamma^{(a)}$, $\gamma^{(b)}$, and $\gamma^{(c)}$, as defined above in Section \ref{section:aggregation_choice}, we have the following inequalities
    \begin{enumerate}
        \item $\gamma^{(c)} \geq \gamma^{(b)}$.
        \item $\gamma^{(a)} \geq \gamma^{(b)}$.
    \end{enumerate}
    combining which we get $\gamma^{(b)} < \min \left\{ \gamma^{(a)}, \gamma^{(c)}\right\}$.
\end{proposition}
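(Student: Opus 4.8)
The plan is to treat the two inequalities separately after fixing convenient notation. Since everything is for a fixed round $t$, I would abbreviate $a_i := f_i(\xx_t^i)-\ell_i^\star \ge 0$ and $b_i := c\norm{\gg_t^i}^2 \ge 0$, so that $\gamma^{(a)} = \frac1n\sum_{i=1}^n \min\{a_i/b_i,\gamma_b\}$, $\gamma^{(b)} = \min\{\sum_i a_i/\sum_i b_i,\gamma_b\}$, and $\gamma^{(c)} = \min\{\frac1n\sum_i a_i/(c\norm{\frac1n\sum_i\gg_t^i}^2),\gamma_b\}$. The single recurring tool is that the clip $s\mapsto\min\{s,\gamma_b\}$ is non-decreasing on $[0,\infty)$ and that all numerators and denominators appearing here are non-negative; hence whenever I can order the un-clipped arguments, the ordering transfers to the clipped stepsizes.

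For the first inequality $\gamma^{(c)}\ge\gamma^{(b)}$, I would exploit that $\gamma^{(b)}$ and $\gamma^{(c)}$ carry the \emph{same} numerator $\frac1n\sum_i a_i$ and differ only in the denominator: $c\cdot\frac1n\sum_i\norm{\gg_t^i}^2$ for (b) versus $c\cdot\norm{\frac1n\sum_i\gg_t^i}^2$ for (c). Convexity of $\xx\mapsto\norm{\xx}^2$ (equivalently Lemma~\ref{remark:mean}) gives $\norm{\frac1n\sum_i\gg_t^i}^2 \le \frac1n\sum_i\norm{\gg_t^i}^2$, so the denominator of (c) is no larger than that of (b). With a common non-negative numerator the un-clipped ratio of (c) therefore dominates that of (b), and monotonicity of the clip yields $\gamma^{(c)}\ge\gamma^{(b)}$. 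The degenerate case $\frac1n\sum_i\gg_t^i=\0$ only sends the (c) ratio to $+\infty$ and hence clips it to $\gamma_b\ge\gamma^{(b)}$, so the bound still holds.

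For the second inequality $\gamma^{(a)}\ge\gamma^{(b)}$ I would set up a mediant-type comparison: writing $\phi(s):=\min\{s,\gamma_b\}$ and $r_i := a_i/b_i$, we have $\gamma^{(a)} = \frac1n\sum_i\phi(r_i)$ while $\gamma^{(b)} = \phi\bigl(\sum_i w_i r_i\bigr)$ with weights $w_i := b_i/\sum_j b_j$, because $\sum_i a_i/\sum_i b_i = \sum_i w_i r_i$ is precisely the $b$-weighted mean of the ratios. The available pointwise bound $\phi(r_i)\le r_i$ yields $\sum_i w_i\,\phi(r_i)\le\sum_i w_i r_i$ and, after clipping, control of the $b$-\emph{weighted} average of the local clipped stepsizes. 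I expect this step to be the main obstacle: $\gamma^{(a)}$ is the \emph{uniform} average of the $\phi(r_i)$, whereas the mediant inside $\gamma^{(b)}$ is the $b$-weighted average, and since $\phi$ is concave a bare Jensen argument bounds the weighted quantity in the direction opposite to what is needed. Closing the gap therefore cannot rest on convexity alone; it requires quantitative control of the discrepancy between uniform and $b$-weighted averaging—intuitively, that the per-client gradient magnitudes $b_i$ (equivalently the local stepsizes) are comparable across clients, i.e.\ the low inter-client stepsize variance observed empirically in Section~\ref{experiments}. I would make this mismatch quantitative, fold it into the mediant bound to recover $\gamma^{(a)}\ge\gamma^{(b)}$, and then combine the two parts to conclude $\gamma^{(b)}\le\min\{\gamma^{(a)},\gamma^{(c)}\}$.
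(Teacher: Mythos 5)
Your proof of the first inequality, $\gamma^{(c)} \geq \gamma^{(b)}$, is complete and is exactly the paper's argument: the two stepsizes share the numerator $\frac{1}{n}\sum_i \bigl(f_i(\xx_t^i)-\ell_i^\star\bigr)$, Lemma~\ref{remark:norm_of_sum} (equivalently, convexity of $\norm{\cdot}^2$) gives $\norm{\frac{1}{n}\sum_i \gg_t^i}^2 \leq \frac{1}{n}\sum_i \norm{\gg_t^i}^2$, and the clip $s \mapsto \min\{s,\gamma_b\}$ is monotone.

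The genuine gap is the second inequality, which you explicitly leave unfinished. For comparison, the paper closes it as follows: writing $a_i := f_i(\xx_t^i)-\ell_i^\star$ and $b_i := \norm{\gg_t^i}^2$, it invokes Chebyshev's sum inequality to get $\frac{1}{n}\sum_i \frac{a_i}{b_i} \geq \frac{\sum_i a_i}{n}\cdot \frac{1}{n}\sum_i \frac{1}{b_i}$, then the AM--HM inequality $\frac{1}{n}\sum_i \frac{1}{b_i} \geq \frac{n}{\sum_i b_i}$, which together compare the \emph{unclipped} quantities, and then asserts the clipped statement ``follows trivially.'' You should know that the obstacle you flag is real and sits precisely inside that last ``trivial'' step. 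Take $n=2$, $c=1$, $\gamma_b=1$, $a_1=100$, $b_1=1$, $a_2=10^{-2}$, $b_2=100$ (realizable under Assumption~\ref{asm:lsmooth} once $L \geq 5000$, and with $(a_i)$ and $(1/b_i)$ similarly ordered, so the paper's Chebyshev step is valid here). Then $\gamma^{(a)} = \frac{1}{2}\bigl(\min\{100,1\}+\min\{10^{-4},1\}\bigr) \approx 0.50$, whereas $\gamma^{(b)} = \min\bigl\{\tfrac{100.01}{101},\,1\bigr\} \approx 0.99$: the unclipped averages compare exactly as the paper says, yet $\gamma^{(a)} < \gamma^{(b)}$. (Separately, Chebyshev's sum inequality requires $(a_i)$ and $(1/b_i)$ to be similarly ordered, an assumption the paper never states.) So your diagnosis is correct --- the uniform-versus-$b_i$-weighted averaging mismatch cannot be repaired by convexity-type arguments alone and would need an extra comparability hypothesis on the $\norm{\gg_t^i}^2$ --- but the honest conclusion is stronger than you put it: inequality 2, as stated with the clipping at $\gamma_b$, is false in general, so no amount of folding in unstated quantitative control can rescue it as a theorem. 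Your proposal is incomplete as a proof, but the step it refuses to take is exactly the step on which the paper's own proof is unsound.
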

\begin{proof}
    We look at the two cases separately as follows:
	\begin{enumerate}
	    \item From Lemma \ref{remark:norm_of_sum} it is easy to observe that 
	    \begin{align*}
	        \norm{\sum_{i = 1}^n \gg_t^i}^2 &\stackrel{\eqref{eq:norm_of_sum}}{\leq} n \sum_{i = 1}^n \norm{\gg_t^i}^2
	    \end{align*}
	    which can be rearranged as
	    \begin{align*}
	        \frac{\sum_{i=1}^{n} [f_i(\xx_t^i) - \ell_i^\star]}{\norm{\frac{1}{n} \sum_{i=1}^{n} \gg_t^i}^2} & \geq \frac{\sum_{i=1}^{n} [f_i(\xx_t^i) - \ell_i^\star]}{\frac{1}{n} \sum_{i = 1}^n \norm{\gg_t^i}^2}
	    \end{align*}
	    The required statement follows trivially from the above inequality.
	    \item From Chebyshev's inequality we have 
	    \begin{align*}
	        \frac{1}{n}\sum_{i=1}^{n} \frac{f_i(\xx_t^i) - \ell_i^\star}{\norm{\gg_t^i}^2} \geq \frac{\sum_{i=1}^{n} [f_i(\xx_t^i) - \ell_i^\star]}{n} \cdot \frac{1}{n} \sum_{i=1}^{n} \frac{1}{\norm{\gg_t^i}^2}
	    \end{align*}
	    From AM-HM inequality we obtain 
	    \begin{align*}
	        \frac{\sum_{i=1}^{n} \frac{1}{\norm{\gg_t^i}^2}}{n} \geq \frac{n}{\sum_{i=1}^{n} \norm{\gg_t^i}^2}
	    \end{align*}
	    Plugging in this into the above we get
	    \begin{align*}
	        \frac{1}{n}\sum_{i=1}^{n} \frac{f_i(\xx_t^i) - \ell_i^\star}{\norm{\gg_t^i}^2} \geq \frac{\frac{1}{n} \sum_{i=1}^{n} [f_i(\xx_t^i) - \ell_i^\star]}{\frac{1}{n} \sum_{i=1}^{n} \norm{\gg_t^i}^2}
	    \end{align*}
	    The required statement follows trivially from the above inequality.
	\end{enumerate}
\end{proof}

\subsubsection{FedSPS-Global experimental details}\label{section:fedsps_global_experiments}
The implementation is done according to Algorithm \ref{algo_global_thoremversion}\footnote{Note that for any experiment in the Appendix that involves FedSPS-Global, we refer to FedSPS (from the main paper) as FedSPS-Local in the legends of plots, for the sake of clarity between the two approaches.}. All remarks we made about the choice of scaling parameter $c$, upper bound on stepsize $\gamma_b$, and $\ell_i^\star$ in the previous section on FedSPS also remain same here. As mentioned before, this method needs an extra hyperparameter $\gamma_0$, that is the stepsize across all clients until the first communication round. Empirically we found that setting $\gamma_0 = \gamma_0^0$, i.e.\ the local SPS stepsize for client $0$ at iteration $0$, works quite well in practice. This is explained by the fact that the SPS stepsizes have low inter-client and intra-client variance (Figure \ref{fig:mnist_convex_stepsizevis_apx}). Experimentally we find that FedSPS-Global converges almost identically to the locally adaptive FedSPS (Figure \ref{fig:plot_withsampling_mnistnc_and_convex} and Figure \ref{fig:add_convex_exp}).

\section{Extension of FedSPS to non-convex setting}\label{apx:sectionnonconvex}

We outline here some theoretical results of extending FedSPS to the non-convex setting. Our non-convex results have the limitation of requiring small stepsizes, but we do not need the additional strong assumption of bounded stochastic gradients used in previous work on adaptive federated optimization.

\subsection{Convergence analysis for non-convex functions}
We now discuss the convergence of FedSPS on non-convex functions. Unfortunately, it is required to impose additional assumptions in this section. This is mainly due that the Polyak stepsize was designed for convex objectives and additional assumptions are needed in the non-convex setting. 

\begin{assumption}[Bounded variance]\label{asm:bounded_variance}
    Each function $f_i \colon \R^d \to \R$, $i \in [n]$ has stochastic gradient with bounded local variance, that is, for all $\xx \in \R^d$,
    \begin{equation}
        \label{eq:variance_1}
        \E_{\xi \sim \cD_i} {\norm{\nabla F_i(\xx, \xi) - \nabla f_i(\xx)}}_2^2 \leq \sigma^2 \,.
    \end{equation}
    Moreover, global variance of the loss function on each client is bounded, that is, for all $\xx \in \R^d$,
    \begin{equation}
        \label{eq:variance_2}
        \frac{1}{n}\sum_{i = 1}^n {\norm{\nabla f_i(\xx) - \nabla f(\xx)}}_2^2 \leq \zeta^2 \,.
    \end{equation}
\end{assumption}
This assumption is frequently used in the analysis of FedAvg \cite{koloskova2020:unified}, as well as adaptive federated methods \cite{reddi2020adaptive, wang2022communication}. The local variance denotes randomness in stochastic gradients of clients, while the global variance represents heterogeneity between clients. Note that $\zeta = 0$ corresponds to the i.i.d.\ setting. We further note that Equation~\eqref{eq:variance_2} corresponds to the variance Assumption in~\cite{loizou2021stochastic} (their Equation (8) with $\rho=1$, $\delta=\zeta^2$) that they also required for the analysis in the non-convex setting.

The following theorem applies to FedSPS and FedAvg with small stepsizes:
\begin{theorem}[Convergence of FedSPS for non-convex case]
\label{thm:non-convex}
Under Assumptions \ref{asm:lsmooth} and \ref{asm:bounded_variance}, if $\gamma_b < \min\left\{ \frac{1}{2cL}, \frac{1}{25L\tau} \right\}$  then after $T$ steps, the iterates of FedSPS and FedAvg satisfy 
\begin{align}
     \Phi_T = \cO \left( \frac {F_0}{\gamma_b T } + \gamma_b \frac{L \sigma^2}{n} + \gamma_b^2 L^2  \tau (\sigma^2+\tau \zeta^2) \right)
\end{align}
where $\Phi_T :=\min_{0 \leq t \leq T-1} \E \norm{\nabla f(\bar{\xx}_t)}^2$ and $F_0:=f(\bar \xx_0)- f^\star$.
\end{theorem}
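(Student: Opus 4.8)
The plan is to first collapse FedSPS to constant-stepsize FedAvg and then run a standard non-convex descent argument on the virtual average $\bar\xx_t$, isolating the client-drift term $R_t = \frac1n\sum_{i=1}^n\norm{\bar\xx_t-\xx_t^i}^2$ into a dedicated consensus bound. The reduction is immediate from Lemma~\ref{lemma:sps_main_ineq}: its lower bound $\gamma_t^i \ge \min\{\tfrac1{2cL},\gamma_b\}$ uses only $L$-smoothness (it is the inequality $\norm{\nabla F_i}^2\le 2L(F_i-\ell_i^\star)$ of Lemma~\ref{lemma:lsmth_star}, which holds for smooth functions bounded below without convexity). Since $\gamma_b<\tfrac1{2cL}$, this forces $\gamma_t^i\equiv\gamma_b$ for all $t,i$, so both algorithms are identical here. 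Writing $\bar\gg_t:=\frac1n\sum_i\gg_t^i$ with conditional mean $\bar h_t:=\frac1n\sum_i\nabla f_i(\xx_t^i)$, I apply the smoothness bound~\eqref{eq:lsmoothdefn2} to $f$ along $\bar\xx_{t+1}=\bar\xx_t-\gamma_b\bar\gg_t$, take expectation, use $\E\norm{\bar\gg_t}^2\le\norm{\bar h_t}^2+\tfrac{\sigma^2}{n}$ (the $1/n$ from independence of the $n$ client noises, via~\eqref{eq:variance_1}), the identity $-\lin{\nabla f(\bar\xx_t),\bar h_t}=\tfrac12(\norm{\nabla f(\bar\xx_t)-\bar h_t}^2-\norm{\nabla f(\bar\xx_t)}^2-\norm{\bar h_t}^2)$, and $\norm{\nabla f(\bar\xx_t)-\bar h_t}^2\le L^2 R_t$ (Jensen plus smoothness). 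With $\gamma_b L\le1$ the $\norm{\bar h_t}^2$ terms cancel, giving
\begin{align*}
\E f(\bar\xx_{t+1}) \le \E f(\bar\xx_t) - \frac{\gamma_b}{2}\E\norm{\nabla f(\bar\xx_t)}^2 + \frac{\gamma_b L^2}{2}\E R_t + \frac{\gamma_b^2 L\sigma^2}{2n}\,.
\end{align*}
Summing over $t$ and telescoping $f(\bar\xx_t)$ reduces everything to controlling $\sum_t\E R_t$.

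The crux is the drift bound. From $\xx_t^i-\bar\xx_t=-\gamma_b\sum_{j=t_0}^{t-1}(\gg_j^i-\bar\gg_j)$ (with $t_0$ the last synchronization index, $t-t_0\le\tau$), I split each increment into its deterministic part $\nabla f_i(\xx_j^i)-\bar h_j$ and its mean-zero noise part $\ee_j^i-\bar\ee_j$, where $\ee_j^i:=\gg_j^i-\nabla f_i(\xx_j^i)$. The decisive point -- responsible for the sharp $\tau\sigma^2$ rather than $\tau^2\sigma^2$ in the final rate -- is that the noises are \emph{independent across local steps} $j$, so in $\E\norm{\sum_j(\ee_j^i-\bar\ee_j)}^2$ the variances add linearly, contributing only $\tau\sigma^2$ after averaging over clients; the deterministic part instead absorbs a factor $\tau$ through Cauchy--Schwarz (Lemma~\ref{remark:norm_of_sum}), and then $\frac1n\sum_i\norm{\nabla f_i(\xx_j^i)}^2\le 3L^2 R_j+3\zeta^2+3\norm{\nabla f(\bar\xx_j)}^2$ via~\eqref{eq:variance_2} and smoothness. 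Summing over $t$ (each index $j$ recurs at most $\tau$ times) yields a recursion of the form $\sum_t\E R_t \lesssim \gamma_b^2\tau^2 L^2\sum_t\E R_t + \gamma_b^2\tau^2\zeta^2 T + \gamma_b^2\tau^2\sum_t\E\norm{\nabla f(\bar\xx_t)}^2 + \gamma_b^2\tau\sigma^2 T$.

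I then close the argument with two absorption steps enabled by $\gamma_b<\tfrac1{25L\tau}$. First, $6\gamma_b^2\tau^2 L^2<\tfrac12$ lets me solve the drift recursion for $\sum_t\E R_t$; substituting it into the telescoped descent inequality produces a residual $\propto\gamma_b^3 L^2\tau^2\sum_t\E\norm{\nabla f(\bar\xx_t)}^2$, which is absorbed into the $\tfrac{\gamma_b}{2}\sum_t\E\norm{\nabla f(\bar\xx_t)}^2$ on the left because $\gamma_b^2 L^2\tau^2<\tfrac1{24}$. Dividing by $\gamma_b T$ and using $\Phi_T=\min_t\E\norm{\nabla f(\bar\xx_t)}^2\le\frac1T\sum_t\E\norm{\nabla f(\bar\xx_t)}^2$ delivers the claimed $\cO\bigl(\frac{F_0}{\gamma_b T}+\frac{\gamma_b L\sigma^2}{n}+\gamma_b^2 L^2\tau(\sigma^2+\tau\zeta^2)\bigr)$. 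The main obstacle is precisely the sharp drift estimate: one must avoid lumping noise with signal via a crude $\norm{a+b}^2\le2\norm a^2+2\norm b^2$ and instead exploit the across-step independence of the noise to keep its contribution linear in $\tau$, all while tracking the constants closely enough that both absorption steps succeed under the single stepsize restriction $\gamma_b<\tfrac1{25L\tau}$; the rest is routine telescoping.
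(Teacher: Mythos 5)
Your proposal is correct, and its overall architecture coincides with the paper's proof of Theorem~\ref{thm:non-convex}: collapse FedSPS to constant-stepsize FedAvg (the paper, like you, notes that $\gamma_b < \frac{1}{2cL}$ forces $\gamma_t^i \equiv \gamma_b$, and the lower bound in Lemma~\ref{lemma:sps_main_ineq} indeed needs only smoothness and boundedness from below, not convexity), then a smoothness-based descent step on $f(\bar \xx_t)$, a drift bound on $R_t$, and absorption of the drift-induced gradient terms under $\gamma_b \le \frac{1}{25L\tau}$. Two components differ in execution. First, your descent step uses the polarization identity and cancellation of the $\norm{\bar h_t}^2$ terms via $\gamma_b L \le 1$, whereas the paper splits the inner product with Young's inequality and separately bounds $\E \norm{\frac1n\sum_{i=1}^n \gg_t^i}^2$ starting from~\eqref{eq:667}; these are interchangeable up to constants. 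Second, and more substantively, your drift estimate is executed more carefully than the paper's. The paper recycles its recursion~\eqref{eq:442}, which evaluates stochastic gradients at $\bar \xx_j$, and then asserts the expectation bound~\eqref{eq:rtbound_nonconvex} in which the noise enters as $\sigma^2/\tau$ per local step, i.e.\ $\cO(\gamma_b^2 \tau \sigma^2)$ in total. Taken literally, taking expectation in~\eqref{eq:442} under Assumption~\ref{asm:bounded_variance} only yields $\sigma^2$ per step, hence $\cO(\gamma_b^2\tau^2\sigma^2)$ --- a factor $\tau$ worse, which would degrade the theorem's $\gamma_b^2 L^2 \tau(\sigma^2+\tau\zeta^2)$ term to $\gamma_b^2 L^2 \tau^2(\sigma^2+\zeta^2)$. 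Your explicit split of each local increment into its conditional mean and a martingale-difference noise, with the cross terms killed by the tower property so that the noise variances add only linearly over the at most $\tau$ local steps, is precisely the argument needed to legitimize the $\sigma^2/\tau$ term that the paper writes down without derivation. The remaining bookkeeping in your plan (the self-bounding recursion for $\sum_t \E R_t$, the two absorption steps, both comfortably valid since $\gamma_b^2L^2\tau^2 < \tfrac{1}{625}$, and $\Phi_T \le \frac1T\sum_{t}\E\norm{\nabla f(\bar\xx_t)}^2$) matches the paper's. In short: same route, but your version of the key drift lemma is the rigorous one, and it is what the stated rate actually requires.
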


The proof of this theorem again relies on the assumption that the stepsize is very small, but otherwise follows the template of earlier work~\cite{li2019communication,koloskova2020:unified} and precisely recovers their result. For the sake of completeness we still add a proof in the Appendix, but do not claim novelty here. The theorem states that when using a constant stepsize, the algorithms reach a neighborhood of the solution, where the neighbourhood is dependent on the local variance $\sigma$ and global variance $\zeta$, and for the i.i.d.\ case of $\zeta = 0$ the neighbourhood is smaller and less dependent on number of local steps $\tau$. By choosing an appropriately small $\gamma_b$, any arbitrary target accuracy $\epsilon$ can be reached.

A limitation of our result is that it only applies to the small stepsize regime, when the adaptivity is governed by the stepsize bound on $\gamma_b$. However, when comparing to other theoretical work on adaptive federated learning algorithms we observe that related work has similar (or stronger) limitations, as e.g.\ both  \cite{wang2022communication} (FedAMS) and \cite{reddi2020adaptive} (FedAdam) require an uniform bound on the stochastic gradients (we do not need) and also require effective stepsizes smaller than $\frac{1}{\tau L}$ similar to our case.

\subsubsection{Proof of Theorem \ref{thm:non-convex}}
\textbf{Small Stepsize.}
We start by the observation that it must hold $\gamma_t^i = \gamma_b$ for all iterations $t$ and clients $i$. This is due to our strong assumption on the small stepsizes.

\textbf{Bound on $R_t$.}
Similar as in the convex case, we need a bound on $\E [R_t] $. We note that when deriving the bound in Equation~\eqref{eq:442} we did not use any convexity assumption, so this bound also holds for arbitrary smooth functions:
\begin{align*}
    R_{t} \leq \frac{32 \gamma_b^2 \tau}{n} \sum_{i=1}^n \sum_{j=(t-1)-k(t)}^{t-1} \norm{F_i(\bar \xx_j, \xi_j^i)}^2 \,.
\end{align*}
and consequently (after taking expectation) combined with Assumption~\ref{asm:bounded_variance}:
\begin{align}
 \E R_{t} &\leq 32 \gamma_b^2 \tau \sum_{j=(t-1)-k(t)}^{t-1} [\E \norm{\nabla f(\bar \xx_j)}^2 + \sigma^2/\tau + \zeta^2]  \notag \\
 &\leq \frac{1}{16L^2 \tau} \sum_{j=(t-1)-k(t)}^{t-1} \E \norm{\nabla f(\bar \xx_j)}^2 + 32 \gamma_b^2  \tau^2 [\sigma^2/\tau + \zeta^2] \label{eq:rtbound_nonconvex}\,, 
\end{align}
where the last estimate followed by $\gamma_b \leq \frac{1}{25\tau L}$.

\textbf{Decrease.}
We study the (virtual) average $\bar \xx_t$. By smoothness and definition of $\bar \xx_{t+1}$ we have:
\begin{align}
    f(\bar \xx_{t+1})
    &\leq f(\bar \xx_t) - \frac{\gamma_b}{n} \sum_{i=1}^n \lin{\nabla f(\bar \xx_t), \gg_t^i} + \frac{\gamma_b^2 L}{2} \norm{ \frac{1}{n} \sum_{i=1}^n \gg_t^i}^2 \,. \label{eq:667}
\end{align}
 The scalar product can be estimated as follows:
 \begin{align*}
- \frac{1}{n}\sum_{i=1}^n \lin{\nabla f(\bar \xx_t), \gg_t^i} &= 
- \frac{1}{n}\sum_{i=1}^n \left( \lin{\nabla f(\bar \xx_t), \nabla F_i(\bar \xx_t,\xi_t^i)} +  \lin{\nabla f(\bar \xx_t), \nabla F_i(\xx_t^i,\xi_t^i) - \nabla F_i(\bar \xx_t^i,\xi_t^i)}   \right) \\
&\leq - \frac{1}{n}\sum_{i=1}^n  \lin{\nabla f(\bar \xx_t), \nabla F_i(\bar \xx_t,\xi_t^i)} + \frac{1}{2} \norm{\nabla f(\bar \xx_t)}^2 + \frac{1}{2n}\sum_{i=1}^n \norm{ \nabla F_i(\xx_t^i,\xi_t^i) - \nabla F_i(\bar \xx_t^i,\xi_t^i)}^2 \\
&=  - \frac{1}{n}\sum_{i=1}^n  \lin{\nabla f(\bar \xx_t), \nabla F_i(\bar \xx_t,\xi_t^i)} + \frac{1}{2} \norm{\nabla f(\bar \xx_t)}^2 + \frac{1}{2} L^2 R_t\,.
 \end{align*}
 and after taking expectation:
\begin{align*}
- \E \left[\frac{1}{n}\sum_{i=1}^n \lin{\nabla f(\bar \xx_t), \gg_t^i}\right] 
&\leq -\frac{1}{2}\norm{\nabla f(\bar \xx_t)}^2 + \frac{1}{2}L^2 R_t
\end{align*}
We now also use smoothness to estimate the last term in~\eqref{eq:667}:
\begin{align*}
    \E \norm{\frac{1}{n}\sum_{i=1}^n\gg_t^i}^2 & 
    = \E \norm{\frac{1}{n}\sum_{i=1}^n \nabla F_i(\xx_t^i,\xi_t^i) - \nabla f_i(\xx_t^i)}^2 + \norm{\frac{1}{n}\sum_{i=1}^n \nabla f_i(\xx_t^i)}^2\\
    &\leq \frac{\sigma^2}{n} + 2 \norm{\nabla f(\bar \xx_t)}^2 + 2 \norm{\frac{1}{n}\sum_{i=1}^n \nabla f_i(\xx_t^i) - \nabla f_i(\bar \xx_t) }^2 \\
    &\leq \frac{\sigma^2}{n} + 2 \norm{\nabla f(\bar \xx_t)}^2 + 2 L^2 R_t\,.
\end{align*}
By combining these estimates, and using $\gamma_b \leq \frac{1}{10L}$, we arrive at:
\begin{align*}
 \E f(\bar \xx_{t+1})
    &\leq f(\bar \xx_t) - \frac{\gamma_b}{2} \norm{\nabla f(\bar \xx_t)}^2 + \gamma_b L^2 R_t+ \frac{\gamma_b^2 L}{2} \left(2 \norm{\nabla f(\bar \xx_t)}^2 + \frac{\sigma^2}{n} + 2L^2 R_t\right) \\
    &\leq  f(\bar \xx_t) - \frac{\gamma_b}{4} \norm{\nabla f(\bar \xx_t)}^2 + 2\gamma_b L^2 R_t + \frac{\gamma_b^2 L \sigma^2}{2n} \,.
\end{align*}
We now re-arrange and plug in the estimate of $R_t$ from \eqref{eq:rtbound_nonconvex}:
\begin{align*}
\frac{\gamma_b}{4} \E \norm{\nabla f(\bar \xx_t)}^2
&\leq \E f(\bar \xx_t) - f(\bar \xx_{t+1}) +\frac{\gamma_b^2 L \sigma^2}{2n} + \frac{\gamma_b}{8 \tau } \sum_{j=(t-1)-k(t)}^{t-1} \E \norm{\nabla f(\bar \xx_j)}^2 + 64 \gamma_b^3 \tau^2 L^2 [\sigma^2/\tau + \zeta^2]
\end{align*}
and after summing from $t=0$ to $T-1$ and dividing by $T$:
\begin{align*}
    \frac{\gamma_b}{4T} \sum_{t=0}^{T-1}  \E \norm{\nabla f(\bar \xx_t)}^2 
    &\leq \frac{1}{T}\sum_{t=0}^{T-1} \left(  \E f(\bar \xx_t) - f(\bar \xx_{t+1}) \right) + \frac{\gamma_b^2 L \sigma^2}{2n} + 64 \gamma_b^3 \tau^2 L^2 [\sigma^2/\tau + \zeta^2] \\
    &\qquad +   \frac{\gamma_b}{8T} \sum_{t=0}^{T-1}  \E \norm{\nabla f(\bar \xx_t)}^2 \\
\end{align*}
implying that
\begin{align*}
    \frac{1}{8T} \sum_{t=0}^{T-1}  \E \norm{\nabla f(\bar \xx_t)}^2 \leq
    \frac{1}{T \gamma_b} (f(\bar \xx_0)- f^\star) + \frac{\gamma_b L \sigma^2}{2n} + 64 \gamma_b^2 \tau^2 L^2 (\sigma^2/\tau + \zeta^2)\,.
\end{align*}

\begin{figure}[t]
\centering
\begin{subfigure}{1.0\textwidth}
\includegraphics[scale=1, width=0.31\linewidth]{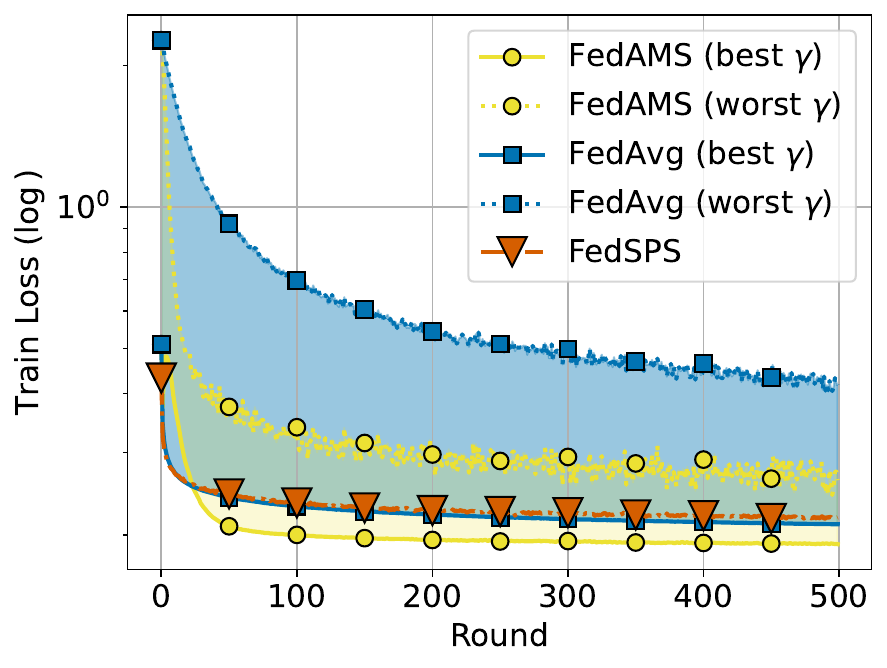}
\includegraphics[scale=1, width=0.31\linewidth]{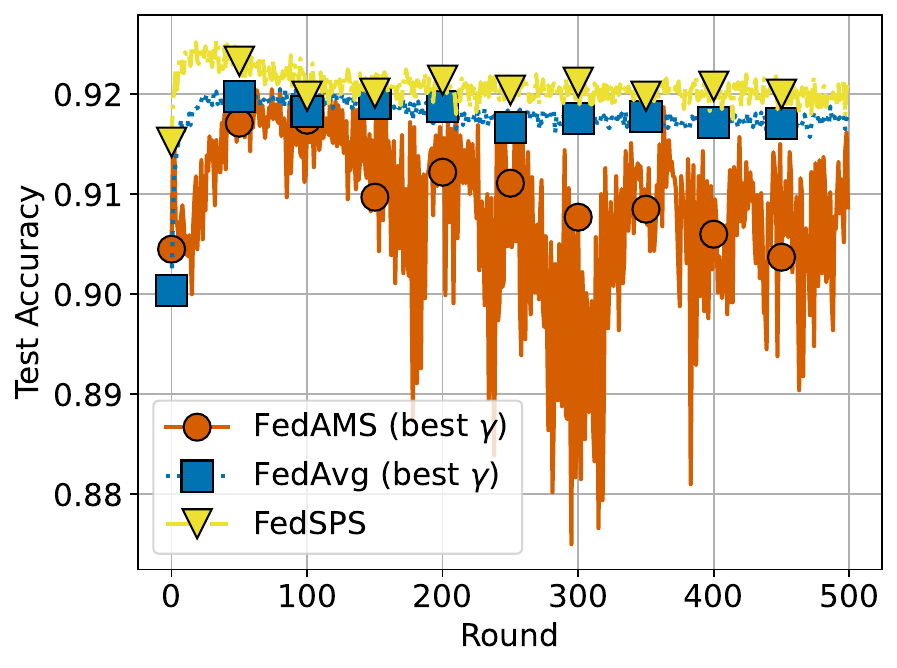}
\includegraphics[scale=1, width=0.31\linewidth]{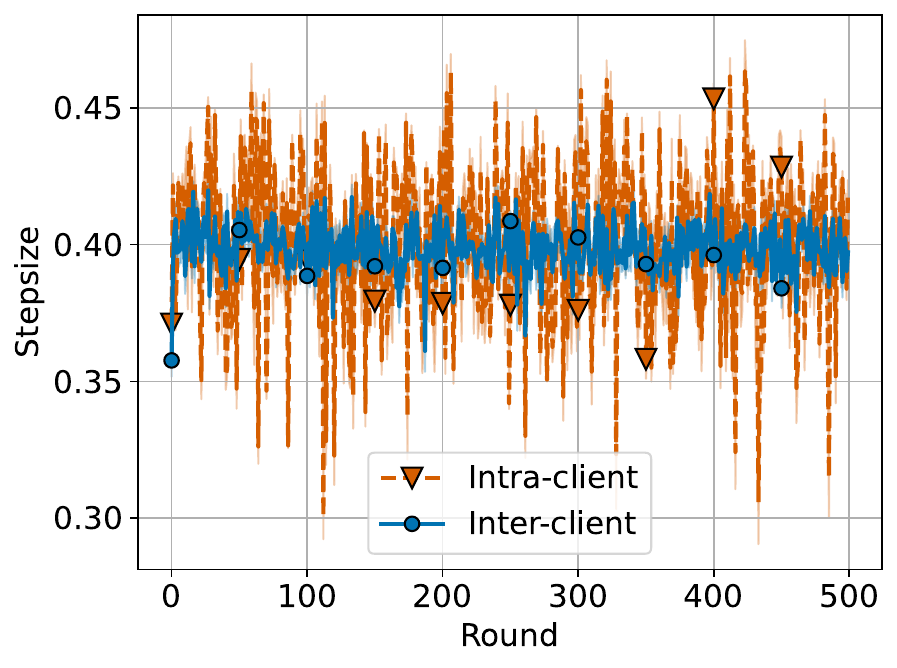}
\label{fig:demo1}
\caption{Training loss and stepsizes for FedSPS (i.i.d.).}
\end{subfigure}

\begin{subfigure}{1.0\textwidth}
\includegraphics[scale=1, width=0.31\linewidth]{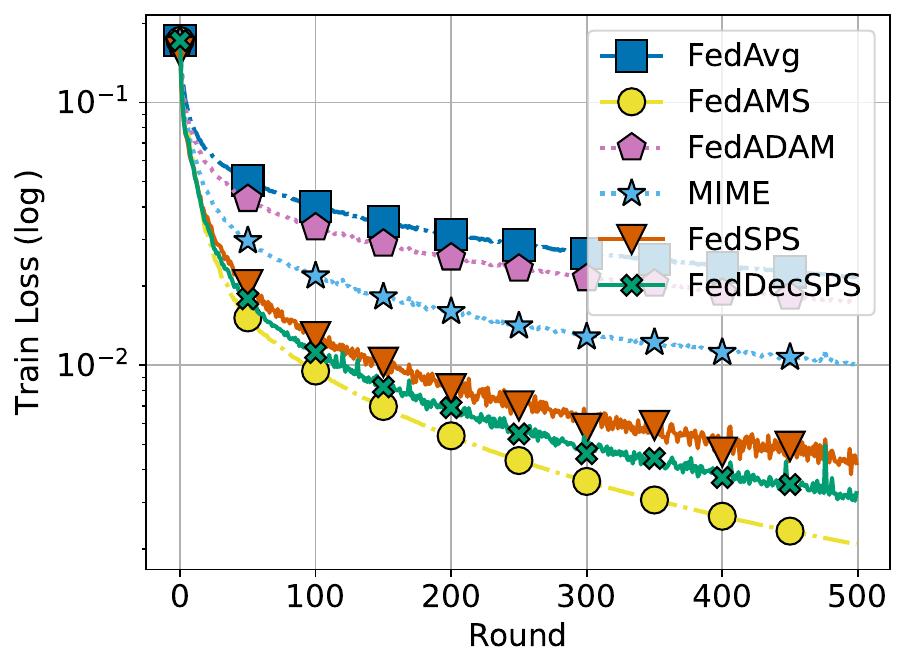}
\includegraphics[scale=1, width=0.31\linewidth]{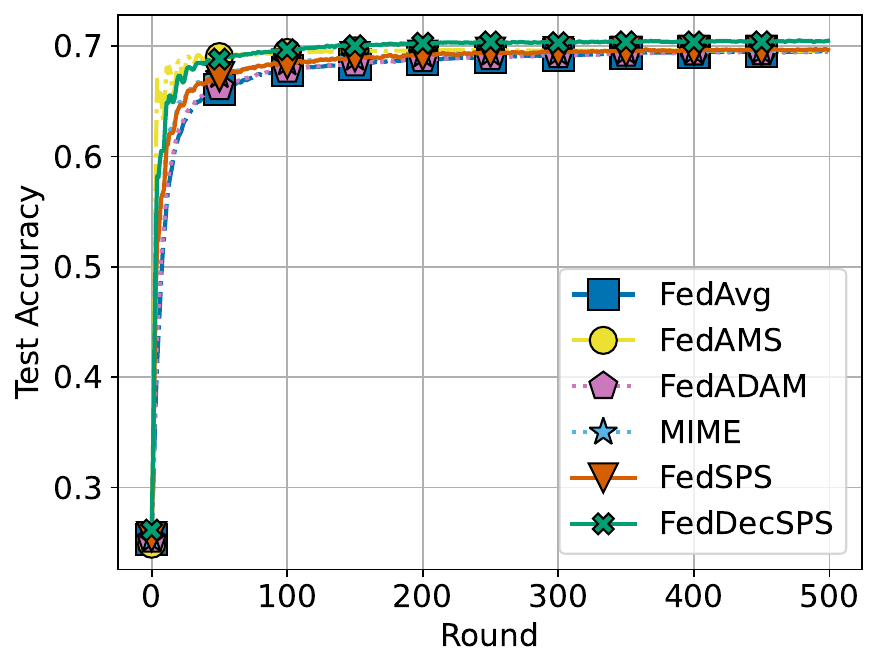}
\includegraphics[scale=1, width=0.31\linewidth]{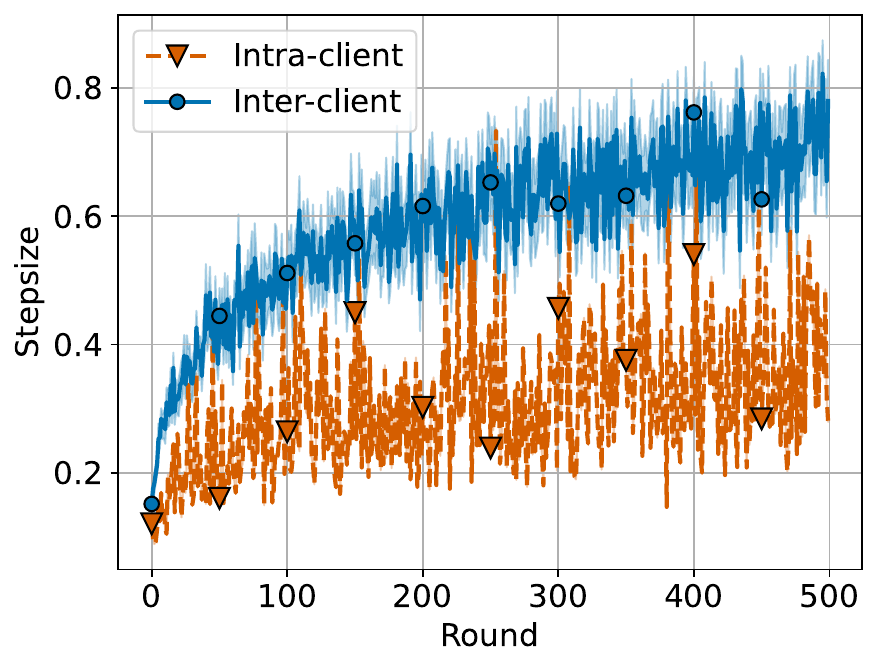}
\label{fig:demo1}
\caption{Training loss, test accuracy and stepsizes for FedSPS (non-i.i.d.).}
\end{subfigure}
\caption{Visualization of FedSPS stepsize statistics for convex case of logistic regression on MNIST dataset (i.i.d.\ and non-i.i.d.). Left column represents training loss, middle column test accuracy, and right column stepsize plots. ``intra-client'' refers to selecting any one client and plotting the mean and standard deviation of stepsizes for that client across $\tau$ local steps, for a particular communication round. ``inter-client'' refers plotting the mean and standard deviation of stepsizes across all clients, for a particular communication round.}
\label{fig:mnist_convex_stepsizevis_apx}
\end{figure}

\section{Additional experiments}\label{appendix:additional_expt}

\subsection{Visualization of FedSPS stepsize statistics}\label{apx:section_fedsps_stepsize_stat_viz}
Intuitively it may seem at first glance that using fully locally adaptive stepsizes can lead to poor convergence due to different stepsizes on each client. However, as we have already seen in our theory and well as verified in experiments, that this is not the case---our fully locally adaptive FedSPS indeed converges. In this remark, we try to shed further light into the convergence of FedSPS by looking at the stepsize statistics across clients. Figure \ref{fig:mnist_convex_stepsizevis_apx} plots the ``intra-client'' and ``inter-client'' stepsize plots for i.i.d. and non-i.i.d. experiments. ``intra-client'' visualizes the variance in stepsizes for a particular client across different local steps. ``inter-client'' visualizes the the variance in stepsizes across all clients. We can notice that both these variances are small, explaining the good convergence behaviour of FedSPS.

\subsection{Effect of varying $c$ on convergence}\label{appendix_addncex}
We provide additional experiments on the effect of varying the FedSPS scale parameter $c$ on convergence, for different number of local steps $\tau \in \{10, 20, 50, 100\}$ (convex case logistic regression on i.i.d.\ MNIST dataset without client sampling) in Figure \ref{fig:vary_c_plot_extra}. Similarly as before, we observe that smaller $c$ results in better convergence, and any $c \in \{0.01, 0.1, 0.5\}$ works well. Moreover, the effect of varying $c$ on the convergence is same for all $\tau$ clarifying that in practice there is no square law relation between $c$ and $\tau$, unlike as suggested by the theory.

\begin{figure}[t]
\centering
\begin{subfigure}{0.23\textwidth}
\includegraphics[scale=1, width=1\linewidth]{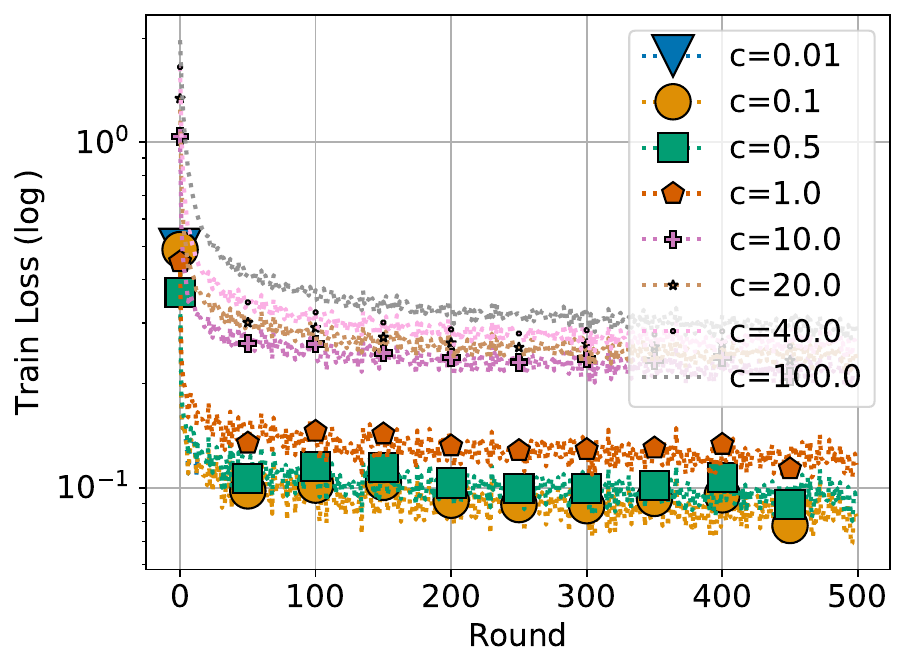} 
\label{fig:demo1}
\caption{$\tau = 10$.}
\end{subfigure} 
\begin{subfigure}{0.23\textwidth}
\includegraphics[scale=1, width=1\linewidth]{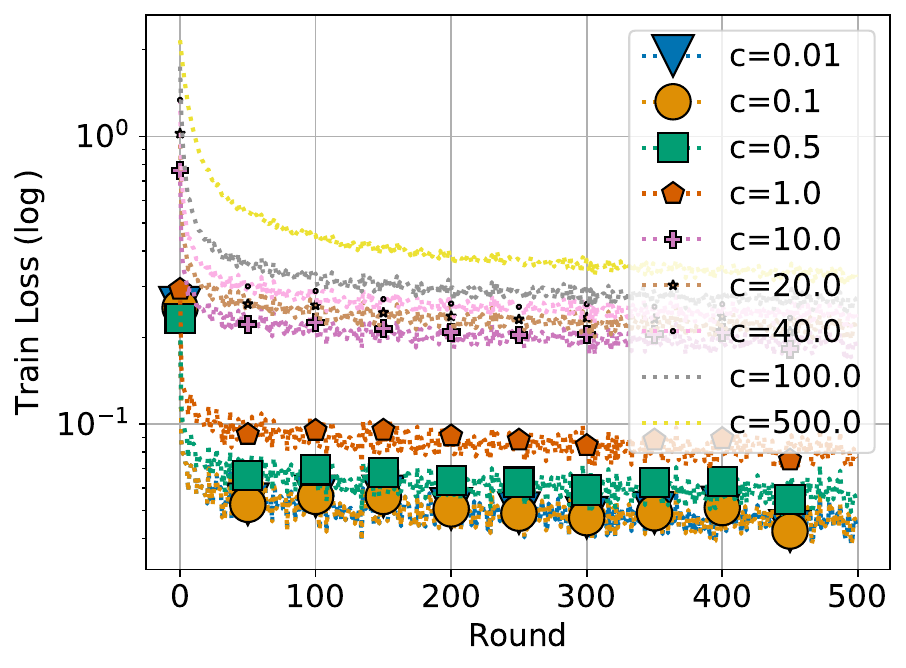}
\label{fig:demo2}
\caption{$\tau = 20$.}
\end{subfigure}
\begin{subfigure}{0.23\textwidth}
\includegraphics[scale=1, width=1\linewidth]{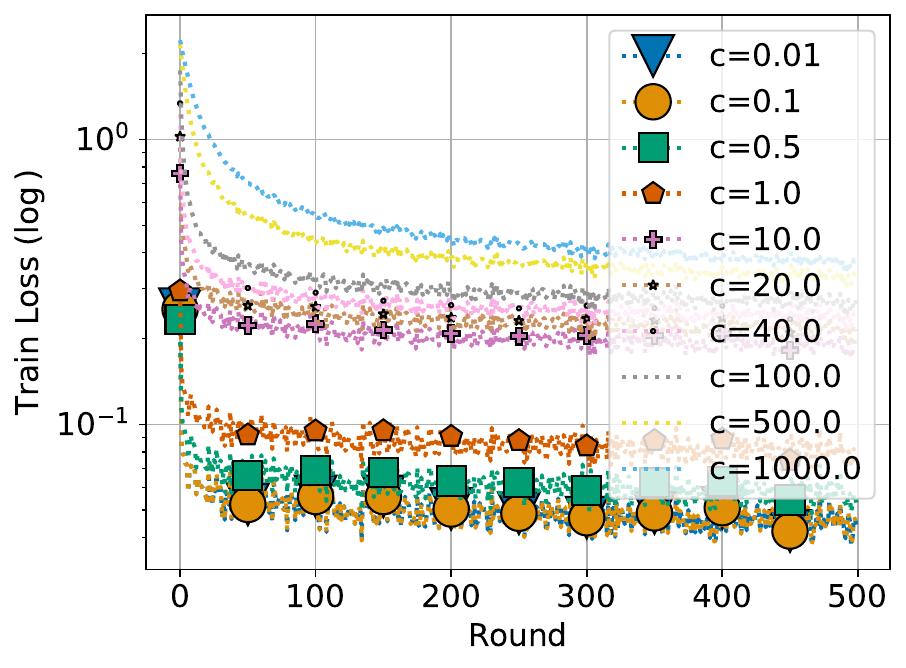}
\label{fig:demo2}
\caption{$\tau = 50$.}
\end{subfigure}
\begin{subfigure}{0.23\textwidth}
\includegraphics[scale=1, width=1\linewidth]{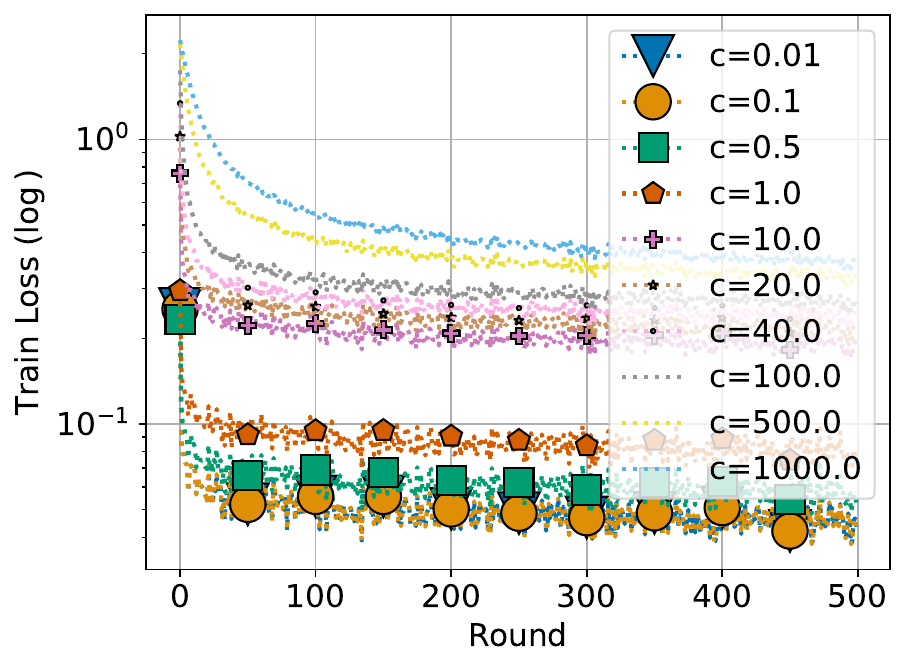}
\label{fig:demo2}
\caption{$\tau = 100$.}
\end{subfigure}
\caption{Additional experiment on the effect of varying $c$ on convergence of FedSPS for different values of $\tau$. In practice, there is no square law relation between $c$ and $\tau$, and any small value of $c \leq 0.5$ works well.}
\label{fig:vary_c_plot_extra}
\end{figure}

\subsection{Additional convex experiments}
Additional convex experiments for the rest of the LIBSVM datasets (i.i.d.) have been shown in Figure \ref{fig:add_convex_exp}. The first column represents the training loss and the second column the test accuracy. The third column represents the FedSPS stepsize statistics as described before in Section \ref{apx:section_fedsps_stepsize_stat_viz}. We can make similar observations as before in the main paper---the proposed FedSPS methods (without tuning) converge as well as or slightly better than FedAvg and FedAdam with the best tuned hyperparameters. The stepsize plots again highlight the low inter-client and intra-client stepsize variance.

\subsection{Hyperparameters tuning for FedAvg and FedAMS}\label{section_fedams_hyparams}

Here we provide more details on hyperparameters for each dataset and model needed by FedAvg and FedAMS. We perform a grid search for the client learning rate $\eta_l \in \{0.0001, 0.001, 0.01, 0.1, 1.0\}$, and server learning rate $\eta \in \{0.001, 0.01, 0.1, 1.0\}$. Moreover, for FedAMS we also choose $\beta_1 = 0.9$, $\beta_2 = 0.99$, and the max stabilization factor $\epsilon \in \{10^{-8},  10^{-4}, 10^{-3}, 10^{-2}, 10^{-1}, 10^0\}$. The grid search leads to the following set of optimal hyperparameters presented in Table \ref{tab:fedams_hyparams}.

\begin{table}[]
\centering
\caption{Hyperparameters for FedAvg and FedAMS.}
\label{tab:fedams_hyparams}
\begin{tabular}{@{}lllll@{}}
\toprule
Dataset & Architecture        & $\eta_l$ & $\eta$ & $\epsilon$ \\ \midrule
LIBSVM  & Logistic Regression & 1.0      & 1      & 0.01       \\
\multirow{2}{*}{MNIST} & Logistic Regression & \multirow{2}{*}{0.1} & \multirow{2}{*}{1} & \multirow{2}{*}{0.001} \\
        & LeNet               &          &        &            \\ \midrule
\end{tabular}
\end{table}

\begin{figure}[t]
\centering
\begin{subfigure}{0.29\textwidth}
\includegraphics[scale=1, width=1\linewidth]{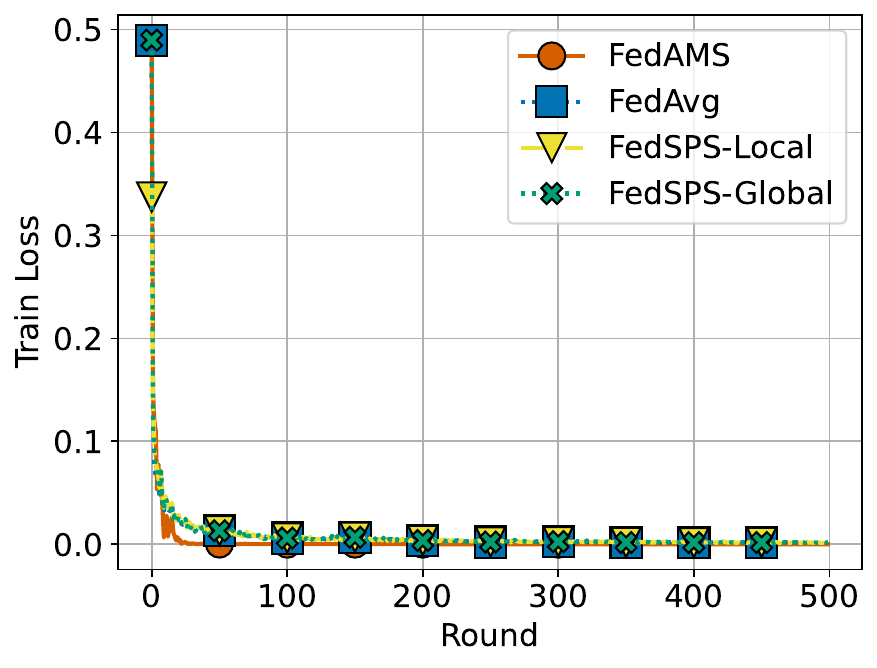} 
\label{fig:demo1}
\caption{mushrooms train.}
\end{subfigure} 
\begin{subfigure}{0.29\textwidth}
\includegraphics[scale=1, width=1\linewidth]{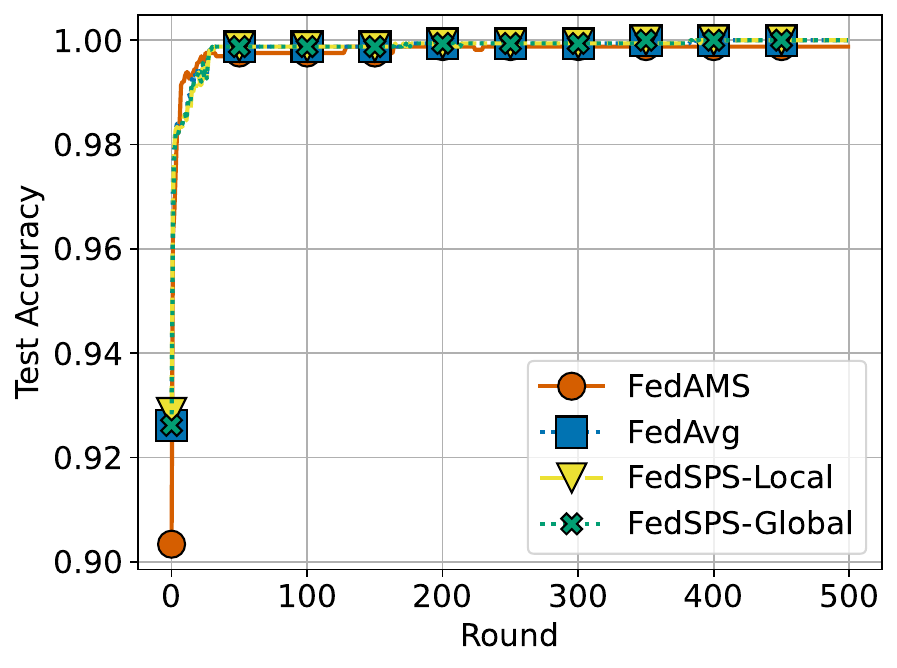}
\label{fig:demo2}
\caption{mushrooms test.}
\end{subfigure}
\begin{subfigure}{0.29\textwidth}
\includegraphics[scale=1, width=1\linewidth]{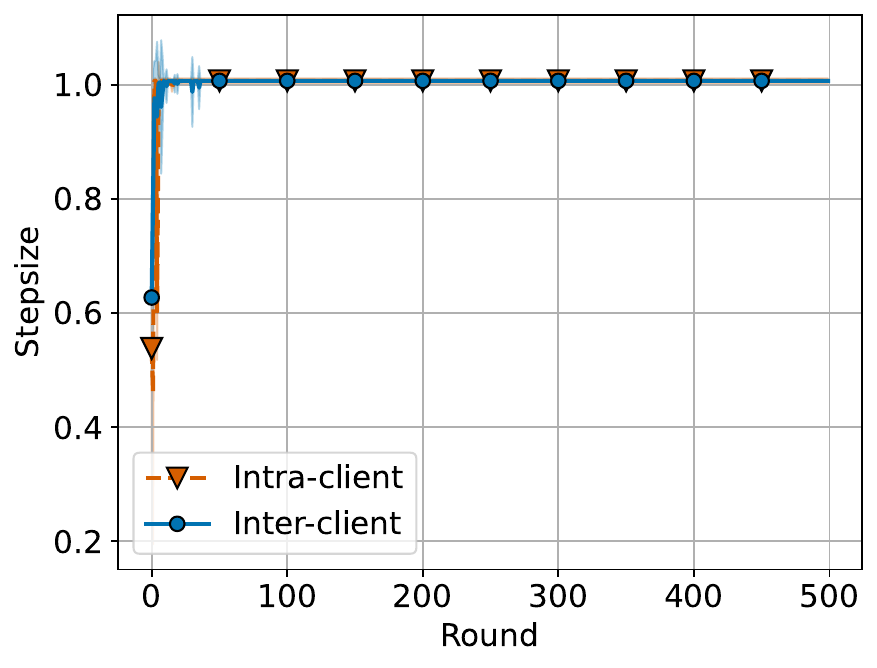}
\label{fig:demo2}
\caption{mushrooms stepsize.}
\end{subfigure}

\begin{subfigure}{0.29\textwidth}
\includegraphics[scale=1, width=1\linewidth]{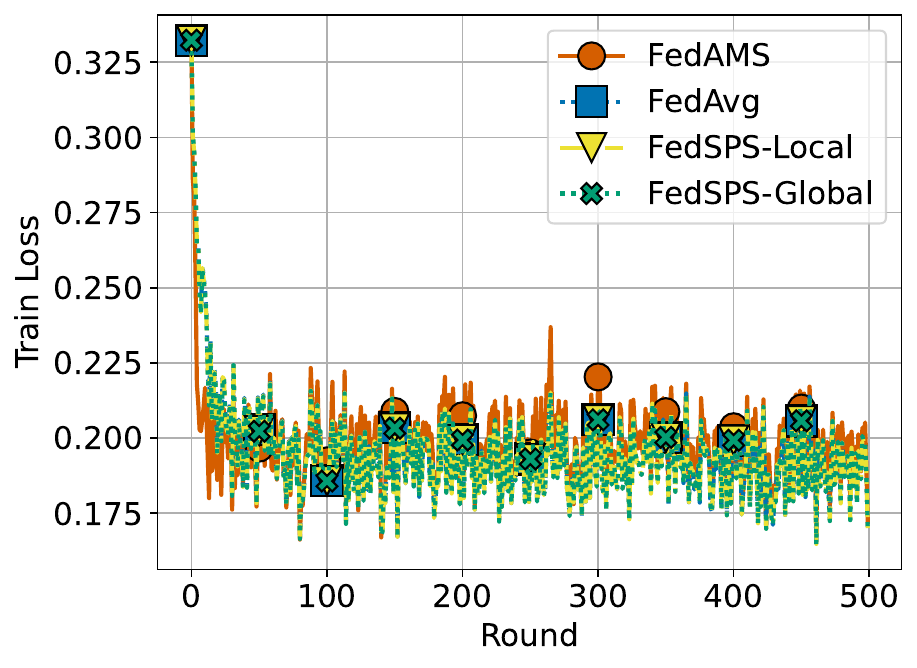} 
\label{fig:demo1}
\caption{ijcnn train.}
\end{subfigure} 
\begin{subfigure}{0.29\textwidth}
\includegraphics[scale=1, width=1\linewidth]{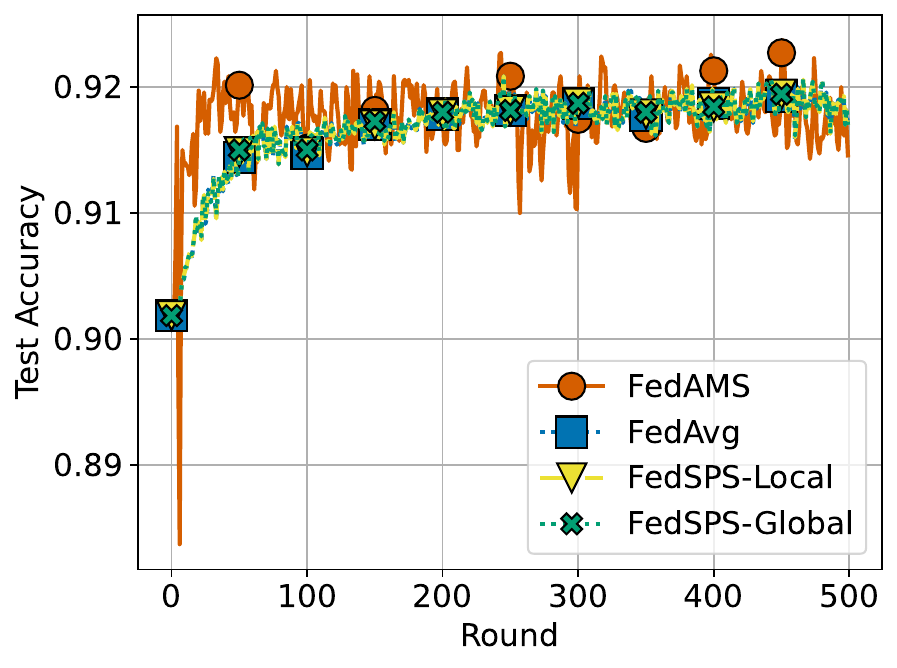}
\label{fig:demo2}
\caption{ijcnn test.}
\end{subfigure}
\begin{subfigure}{0.29\textwidth}
\includegraphics[scale=1, width=1\linewidth]{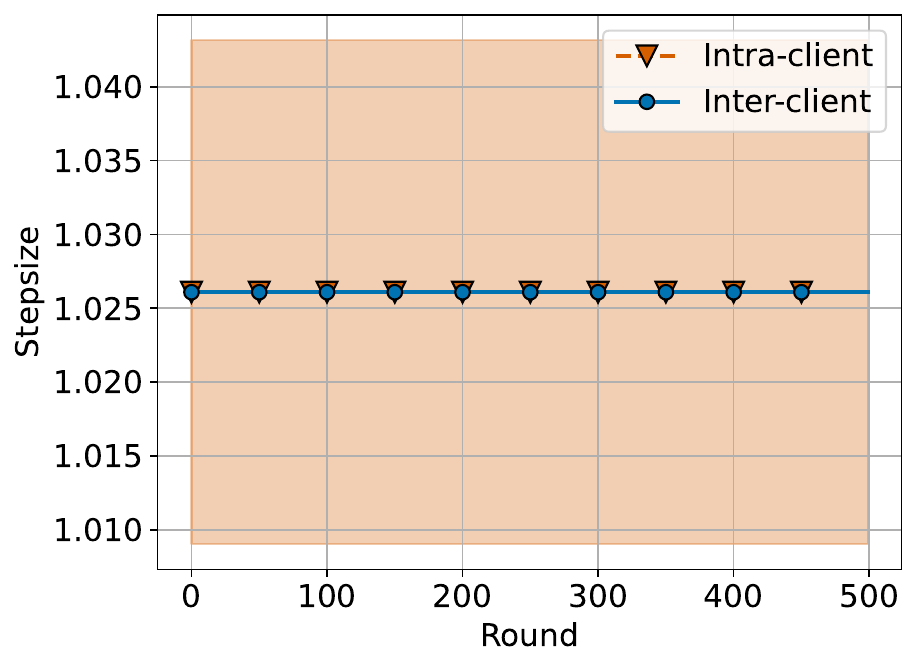}
\label{fig:demo2}
\caption{ijcnn stepsize.}
\end{subfigure}

\begin{subfigure}{0.29\textwidth}
\includegraphics[scale=1, width=1\linewidth]{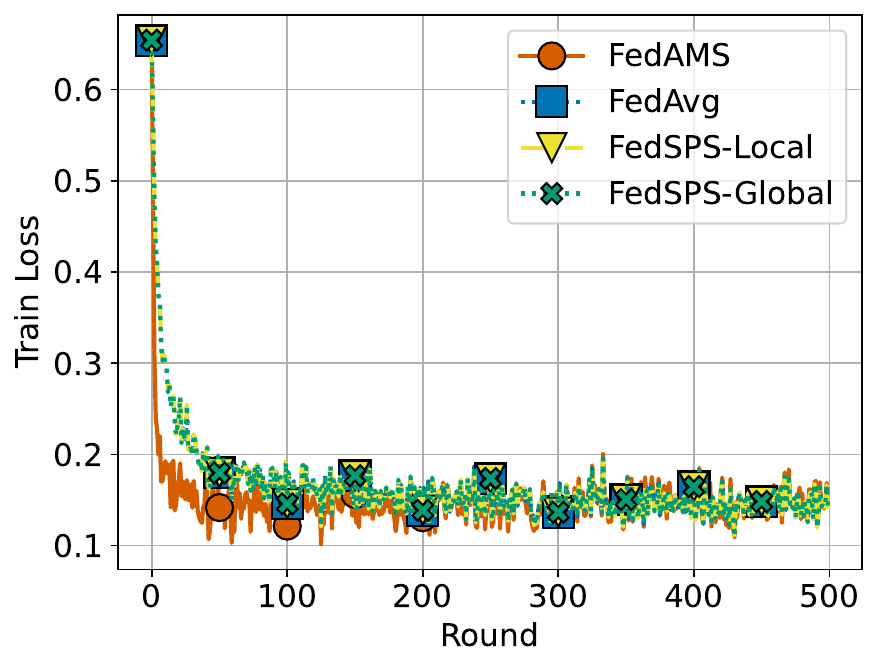} 
\label{fig:demo1}
\caption{phishing train.}
\end{subfigure} 
\begin{subfigure}{0.29\textwidth}
\includegraphics[scale=1, width=1\linewidth]{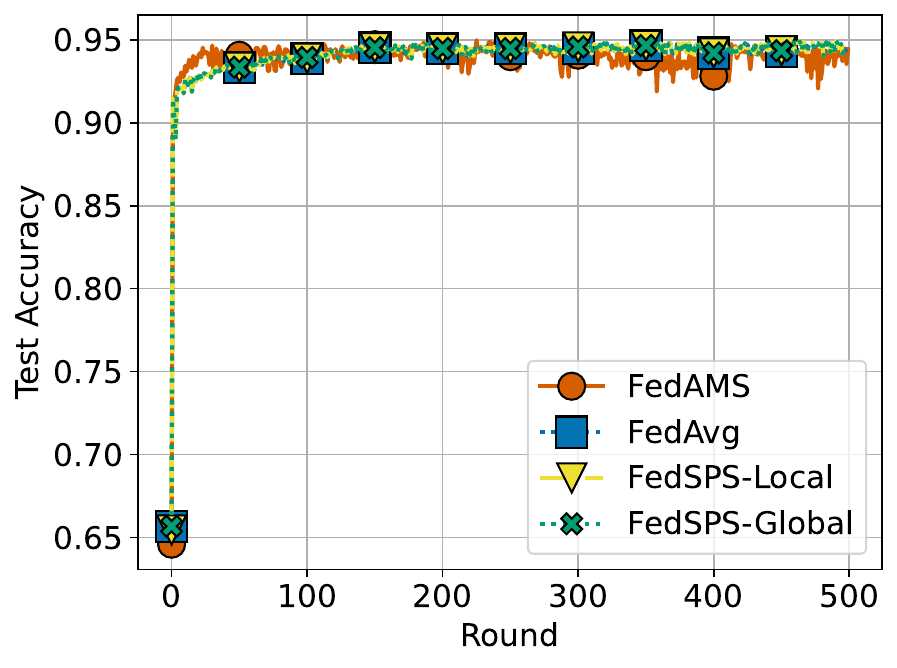}
\label{fig:demo2}
\caption{phishing test.}
\end{subfigure}
\begin{subfigure}{0.29\textwidth}
\includegraphics[scale=1, width=1\linewidth]{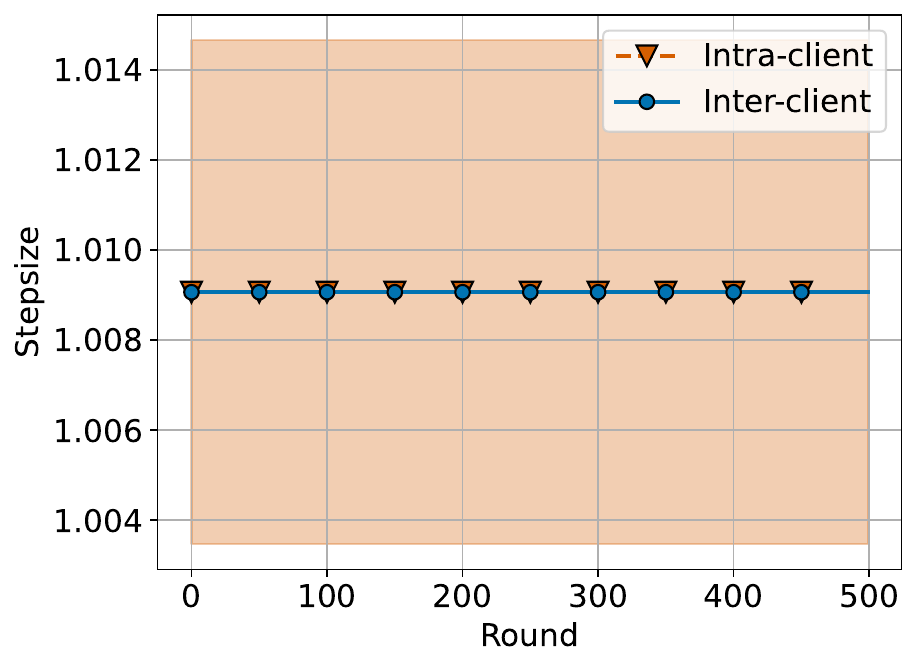}
\label{fig:demo2}
\caption{phishing stepsize.}
\end{subfigure}

\begin{subfigure}{0.29\textwidth}
\includegraphics[scale=1, width=1\linewidth]{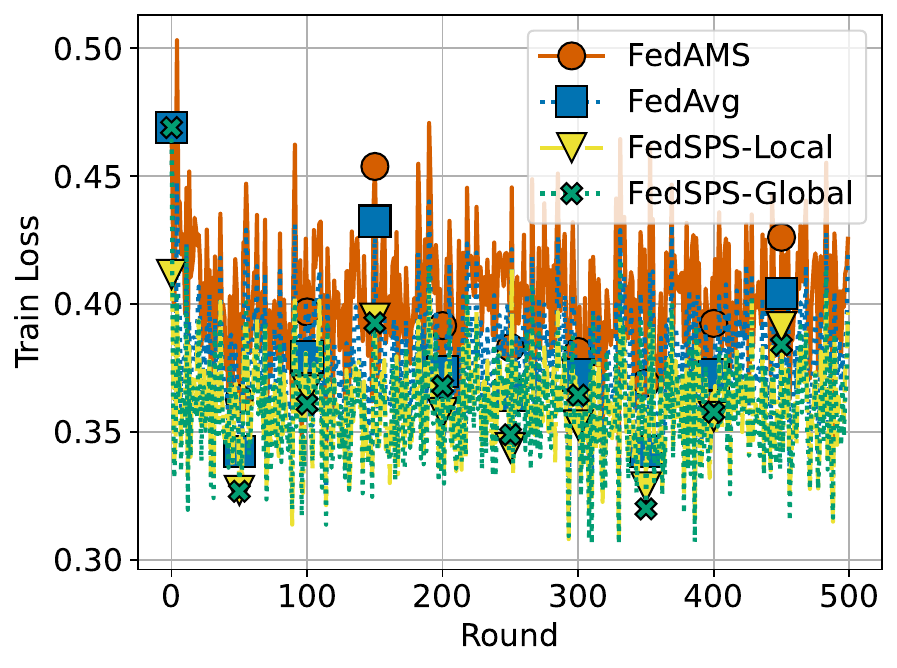} 
\label{fig:demo1}
\caption{a9a train.}
\end{subfigure} 
\begin{subfigure}{0.29\textwidth}
\includegraphics[scale=1, width=1\linewidth]{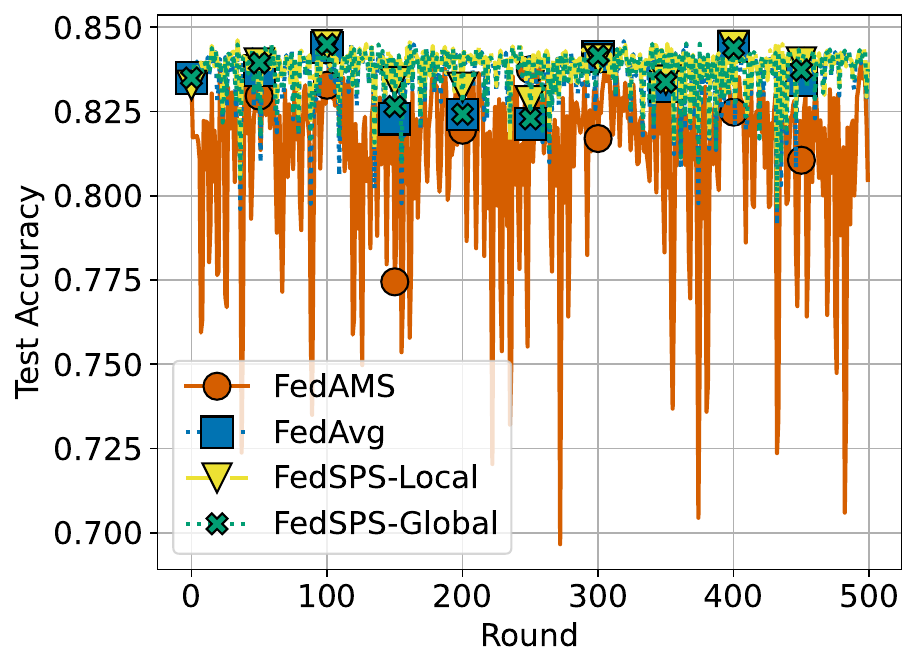}
\label{fig:demo2}
\caption{a9a test.}
\end{subfigure}
\begin{subfigure}{0.29\textwidth}
\includegraphics[scale=1, width=1\linewidth]{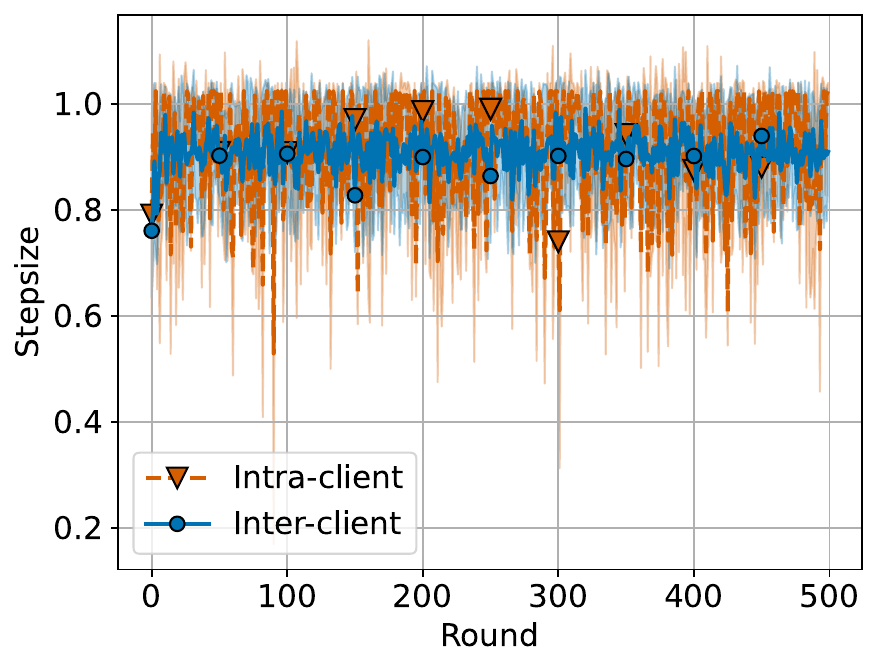}
\label{fig:demo2}
\caption{a9a stepsize.}
\end{subfigure}
\caption{Additional convex experiments for the LIBSVM datasets (i.i.d.). As mentioned in Section \ref{section:fedsps_global_experiments}, FedSPS (from the
main paper) is referred to as FedSPS-Local here in the legends, to distinguish it clearly from FedSPS-Global.}
\label{fig:add_convex_exp}
\end{figure}

\end{document}